\newcommand\inv{^{-1}}
\newcommand{\N}{\mathbb{N}}
\newcommand{\Z}{\mathbb{Z}}
\newcommand{\R}{\mathbb{R}}
\newcommand{\C}{\mathbb{C}}
\newcommand{\PP}{\mathbb{P}}
\DeclareMathOperator{\sign}{sign}
\DeclareMathOperator{\id}{id}
\DeclareMathOperator{\GL}{GL}
\DeclareMathOperator{\Aut}{Aut}
\DeclareMathOperator{\End}{End}
\DeclareMathOperator{\set}{set}
\newcommand{\ind}[1]{\ensuremath{{\mathbf{1}\left\{#1\right\}}}}
\newcommand{\asto}{\overset{\text{a.s.}}{\to}}
\newcommand\restr[2]{{
  \left.\kern-\nulldelimiterspace 
  #1 
  \vphantom{\big|} 
  \right|_{#2} 
  }}
\newcommand{\inner}[2]{\ensuremath{\left\langle #1, #2\right\rangle}}
\newcommand{\gen}[1]{\ensuremath{\left\langle #1 \right\rangle}}
\newtheorem{theorem}{Theorem}[section]
\newtheorem{proposition}[theorem]{Proposition}
\newtheorem{corollary}[theorem]{Corollary}
\newtheorem{lemma}[theorem]{Lemma}
\newtheorem{definition}[theorem]{Definition}
\theoremstyle{definition}
\newtheorem{remark}[theorem]{Remark}
\newenvironment{example}
  {\pushQED{\qed}\examplex}
  {\popQED\endexamplex}
\title{A tradeoff between universality of equivariant models and learnability of symmetries}
\author{Vasco Portilheiro}
\date{}
\begin{document}

\maketitle

\begin{abstract}
We prove an impossibility result, which in the context of function learning says the following: under certain conditions, it is impossible to simultaneously learn symmetries and functions equivariant under them using an ansatz consisting of equivariant functions. To formalize this statement, we carefully study notions of approximation for groups and semigroups. We analyze certain families of neural networks for whether they satisfy the conditions of the impossibility result: what we call ``linearly equivariant'' networks, and group-convolutional networks. A lot can be said precisely about linearly equivariant networks, making them theoretically useful. On the practical side, our analysis of group-convolutional neural networks allows us generalize the well-known ``convolution is all you need'' theorem to non-homogeneous spaces. We additionally find an important difference between group convolution and semigroup convolution.
\end{abstract}

\tableofcontents

\section{Introduction}

\subsection{Motivation}
\label{sec-intro-motivation}
Symmetries govern much of the real world. Certain natural transformations of objects give rise to invariants (a molecule rotated in space maintains its chemical properties) or equivariants (the time shown on a numberless clock transforms with any rotation of the clock's face). In a function learning setting, we are often interested in equivariant functions $f:X\to Y$ for a group $G$ acting on both $X$ and $Y$, meaning that $f(g\cdot x) = g\cdot f(x)$ for all $g \in G$.\footnote{``Action'' and other standard algebraic terms used in the introduction are reviewed in Section \ref{sec-algebra}.} 

Identifying the symmetries relevant to a problem often leads to advances. It is by now folkloric that convolutional neural networks revolutionized machine learning on images in part by taking advantage of translation equivariance --- the work of \textcite{LeCun1989} being perhaps most known, though the idea dates at least to \textcite{Neocognitron}. Constraining networks to be equivariant has since brought success in the cases of permutations of sets \parencite{DeepSets}, spherical rotations \parencite{Cohen2018, Kondor2018}, and general Lorentz transformations \parencite{Bogatskiy2020}, just to name a few.

These successes have increasingly been used to motivate attempts to learn symmetries from data, rather than imposing them a priori (see discussion of related work below). Most recently, ambitions have turned to creating architectures capable of simultaneously learning a general group while learning its equivariant functions \parencite{Zhou2021,Dehmamy2021}, which can be understood as a natural application of meta-learning or multitask learning. As far as we know, however, no attempt has yet been made to understand when this task is well-posed. Loosely, the question is: \emph{can we learn the underlying symmetry in data when performing function learning?} That is, for an unknown ``population group'' $G$, can we in general both approximate its equivariant functions and learn the group itself. An easy ``no'' is the answer given data from a single equivariant function — it might have extraneous symmetries not shared by the other equivariant functions. One might then ask what can be done given \emph{all} of a group's equivariant functions, or at least all the learnable ones. One moral of the story we tell is that even given all equivariant functions, there are meaningful problems in identifying the group. In a sense, this is already apparent from a vignette on polynomial functions.
\begin{example}
\label{ex-invariant-polynomials}
An often useful move is to note that equivariance of polynomial functions is equivalent to invariance in a certain tensor product space.\footnote{See \cite[\textsection 4.1]{EMLP}, as well as \cite[Lemma 2.1]{Yarotsky2018} and the reference to \cite{Worfolk1994} therein.} We pass then to the world of polynomial invariants, classical since the time of Hilbert. Suppose two groups $G$ and $H$ acting on such a function space agree on orbits: $Gf = Hf$ for all $f$. It is then not hard to show that a function is $G$-invariant if and only if it is $H$-invariant. That is, given all the polynomial invariant functions of $G$, we can at most only identify it up to orbits. With continuity considerations, an analogous statement holds more broadly for those invariant functions which one can approximate by invariant polynomials.
\end{example}

Rather than polynomials, we are most interested in our question in the context of neural networks
$$f: V_1 \overset{\sigma_1 \circ L_1}{\to} V_2 \overset{\sigma_2 \circ L_2}{\to} \cdots \overset{\sigma_{n-1} \circ L_{n-1}}{\to} V_{n}$$
composed of linear maps $L_k$ and non-linearities $\sigma_k$, in which each layer $\sigma_k \circ L_k : V_k \to V_{k+1}$ is equivariant under a pair of actions of $G$. The equivariance of each layer may be achieved by following a ``universally equivariant non-linearity'' design pattern: restricting each $L_k$ to be equivariant under the specific group $G$, while letting the $\sigma_k$ be equivariant under any member of a large class of groups containing $G$. This simple way of guaranteeing the equivariance of $f$ appears to be the most common method for incorporating symmetries in neural networks; it in fact has a long history, \textcite[Theorem 2.4]{WoodShaweTaylor1996} having for example studied which non-linearities are compatible with certain finite groups.

This setting is not only ubiquitous, but also natural if one wants to learn the group $G$ and simultaneously learn equivariant functions in a way that represents symmetries ``intrinsically,'' which we mean in the sense of \textcite{Yarotsky2018}, who contrasts this with ``extrinsic'' symmetry achieved by averaging inputs and outputs over $G$. Unfortunately, the ability to flexibly learn $G$ amongst a large class of groups using the above design pattern greatly constrains the possible non-linearities. We see that in the most general cases, only trivial non-linearities are possible. However, even if this is not the case, having this class of groups be too large may lead to \emph{symmetry non-uniqueness}: the linear maps $L_k$ which commute with actions of $G$ generally commute with actions of another group $H$, which may be contained in a class of groups that is too general. Such situations lead to the impossibility result that is the subject of this work.


\subsection{Summary of results}
We are interested in understanding when one can learn both objects and their symmetries, using some ``symmetric ansatz'' --- for example when can one simultaneously learn equivariant functions and group actions on the input and output spaces, using equivariant neural networks. The goal of the current work is twofold: first to formalize what this means, and second to argue that care should be taken to ensure approaches have certain properties.

After laying down in Section \ref{sec-prelim} the groundwork in algebra and topology required to discuss learning symmetries by approximation, we prove in Section \ref{sec-main-result} our main theoretical result, which we sketch here for the case of equivariant neural networks. Our starting point, and a recurrent theme of our work, is what we call symmetry non-uniqueness.
\begin{definition}[informal]
\label{def-non-uniqueness-informal}
Let $F$ be a collection of equivariant neural networks, whose equivariance groups belong to some specified set $\Gamma$. A \emph{symmetry non-uniqueness} is group-valued map $H:\Gamma \to \Gamma$ such that each $G$-equivariant network $f \in F$ is also $H(G)$-equivariant.
\end{definition}
Example \ref{ex-invariant-polynomials} suggests that symmetry non-uniquenesses is related to the learnability of symmetries. Theorem \ref{thm-main-thm} formalizes this abstractly, in terms of convergence spaces. A concrete version of the result is a \emph{learnability tradeoff} between (i) universal approximation for equivariant models, and (ii) the ability to learn a group from its learnable equivariant functions (see Corollary \ref{cor-learnability-tradeoff}).
\begingroup
\def\thetheorem{\ref{thm-main-thm}}
\begin{theorem}[informal]
If a symmetry non-uniqueness with certain properties exists, it is impossible for both of the following to be true:
\begin{enumerate}[label=(\roman*)]
\item for any $G \in \Gamma$ and learnable $G$-equivariant function $f_*$, one can learn $f_*$ and $G$ simultaneously --- that is, there exist $f_i \in F$ with respective equivariance groups $G_i\in\Gamma$ such that we have simultaneous approximations by  convergence, $f_i \to f_*$ and $G_i \to G$,
\item one can identify $G$ by knowing which are its learnable equivariant functions.
\end{enumerate}
\end{theorem}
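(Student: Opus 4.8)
The plan is to argue by contradiction: assume that both (i) and (ii) hold, and use the symmetry non-uniqueness $H$ to exhibit two distinct groups that are indistinguishable from their learnable equivariant functions, violating (ii). Throughout I would lean on the convergence-space formalism of Section~\ref{sec-prelim}, and I expect the ``certain properties'' of $H$ to be exactly those needed to make the transport-of-approximations step below go through --- namely that $H$ be continuous for the convergence structure on $\Gamma$ and that it be nontrivial (so that $H(G)\neq G$ for at least one relevant $G$), with a symmetry or involutivity of the non-uniqueness relation supplying the reverse inclusion in the final step.

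First I would fix a group $G\in\Gamma$ with $H(G)\neq G$ together with a learnable $G$-equivariant function $f_*$, and invoke condition (i) to produce networks $f_i\in F$ with equivariance groups $G_i\in\Gamma$ realizing the simultaneous approximation $f_i\to f_*$ and $G_i\to G$. The heart of the argument is then to \emph{transport this approximation along $H$}. Because each $f_i$ is $G_i$-equivariant, the defining property of a symmetry non-uniqueness (Definition~\ref{def-non-uniqueness-informal}) guarantees that each $f_i$ is also $H(G_i)$-equivariant; and continuity of $H$ gives $H(G_i)\to H(G)$. Hence the \emph{same} sequence of networks witnesses that $(f_*, H(G))$ is simultaneously learnable: $f_i\to f_*$ while the equivariance groups $H(G_i)\to H(G)$. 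In particular $f_*$ is a learnable $H(G)$-equivariant function.

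Running this step in both directions --- once from $G$ to $H(G)$ and once back, using the symmetry of the non-uniqueness together with the universality furnished by (i) --- I would conclude that the learnable $G$-equivariant functions and the learnable $H(G)$-equivariant functions coincide as sets. This is where condition (i) does essential work: universality is what upgrades the inclusion ``$G$-equivariant networks in $F$ are $H(G)$-equivariant'' to an equality of the \emph{full} families of learnable equivariant functions that (ii) concerns. With this coincidence in hand, the two distinct groups $G\neq H(G)$ possess identical families of learnable equivariant functions, so no rule can recover the group from that family; this directly contradicts (ii), completing the impossibility.

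I expect the main obstacle to be the convergence-theoretic bookkeeping rather than the combinatorial skeleton above. Three points need care: that limits of $G_i$-equivariant networks along $G_i\to G$ are genuinely $G$-equivariant (so that ``learnable equivariant function'' is the right invariant to compare), that the simultaneous convergence on pairs $(f,G)$ is compatible with applying $H$ coordinatewise, and that the formalization of (ii) is precisely injectivity of the assignment $G\mapsto\{\text{learnable }G\text{-equivariant functions}\}$, so that the set-coincidence above is a bona fide contradiction. Isolating the minimal ``certain properties'' of $H$ that make all three go through --- and checking they are the ones actually available in the equivariant-network instantiations studied later --- is the delicate part.
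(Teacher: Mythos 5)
Your skeleton is the paper's own: the transport step (push an approximating net $(f_i,G_i)$ from the ansatz through a continuous $H$, use the non-uniqueness property to keep each pair in the equivariance relation, and use closedness of that relation to pass to the limit) is exactly how the paper proves Theorem \ref{thm-main-thm}, and your contradiction with identifiability is its Corollary \ref{cor-learnability-tradeoff}. Your three bookkeeping points are also the paper's actual hypotheses: closedness of $R$ in the product convergence (supplied by Proposition \ref{prop-equivariance-closed}), continuity of $H$, and reading (ii) as injectivity of $\gamma \mapsto \{f \text{ learnable} : (f,\gamma)\in R\}$.

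The genuine divergence is the reverse inclusion, and there your proposed hypothesis is the wrong one. You suggest ``symmetry or involutivity'' of the non-uniqueness, i.e.\ essentially $H(H(G))=G$, so that transporting a second time returns to $G$. The paper instead assumes that $(f,H(\gamma))\in R$ implies $(f,\gamma)\in R$ --- in the group picture, that $\gamma$ acts as a subgroup of $H(\gamma)$, so equivariance under the larger symmetry restricts to equivariance under the smaller --- and then applies the universality hypothesis (i) once more at $\gamma$ to upgrade ``$f$ learnable and $\gamma$-symmetric'' to ``the pair $(f,\gamma)$ is learnable.'' This matters beyond taste: every non-uniqueness the paper actually exhibits (passing to the group algebra $K[G]^\times$ in Section \ref{sec-lin-eq}, or to the group von Neumann algebra in Section \ref{sec-max}) is a hull-type operator, idempotent rather than involutive, so $H(H(G))=H(G)\neq G$ and your reverse transport never returns to $G$. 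A proof conditioned on involutivity would thus be vacuous in precisely the instantiations the theorem is meant to cover, whereas the subgroup-containment hypothesis holds in all of them. A secondary, smaller point: your forward transport, as justified by Definition \ref{def-non-uniqueness-informal}, only lands the pairs $(f_i,H(G_i))$ in $R$, not in the ansatz, so what follows (via closedness) is that $f_*$ is a learnable function that is $H(G)$-equivariant --- enough for the inclusion you need, but not that $(f_*,H(G))$ is itself a learnable \emph{pair}. For the full biconditional (\ref{eq-equal}) on learnable pairs, the paper additionally assumes $H$ maps ansatz pairs into the ansatz; that assumption belongs among the ``certain properties'' you are trying to isolate.
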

\addtocounter{theorem}{-1}
\endgroup

In Section \ref{sec-ansatzes} we demonstrate applications this theory, studying the symmetry non\hyphen{}uniquenesses of particular ansatzes of equivariant neural networks.
\begin{itemize}
\item \emph{Linearly equivariant networks}. We introduce the notion of networks in which only the linear maps determine equivariance. Such networks would be the most flexible among those following the ``universally equivariant non-linearity'' design pattern, in terms of which symmetries can be learned. However, this flexibility greatly restricts the non-linearities, making linearly equivariant networks of little practical use. Their utility is instead theoretical, as their symmetry non-uniquenesses can be described quite precisely, and they may thus inform analyses of related architectures. We use as a concrete example the equivariant multilayer perceptrons of \textcite{EMLP}.
\item \emph{Group-convolutional networks}. On one hand, we find that group convolutions do not suffer from symmetry non-uniqueness under certain natural conditions, lending theoretical support to recent empirical successes in learning symmetries \parencite{Zhou2021,Dehmamy2021}. On the other hand, if one instead uses semigroup convolution, a large class of non-uniquenesses may arise. In the course of our analysis, we obtain a simple generalization to non-homogeneous spaces of the well-known theorem of \textcite{KondorTrivedi2018}, which says certain equivariant linear functions must be convolutions.
\end{itemize}

Finally, in Section \ref{sec-max} we discuss symmetry non-uniquenesses that are \emph{maximal} in the right sense, characterizing when Theorem \ref{thm-main-thm} may apply.
\begin{definition}[informal]
\label{def-maximality-informal}
A symmetry non-uniqueness $H_*$ is \emph{maximal} if for any  non-uniqueness $H$, we have for each $G \in \Gamma$ that $H(G)$ is a subgroup of $H_*(G)$.
\end{definition}
We first show that some of the results on group\hyphen{}convolutional networks of the previous section exhibit a maximal symmetry non-uniqueness. The remainder of Section \ref{sec-max} describes in detail a maximal non-uniquenesses for linearly equivariant networks. In particular, we show that for linear maps between semisimple representations of a group, the non-uniqueness is given by passing to the group algebra, and conclude by showing this generalizes to all unitarizable representations of groups of type I. These include most groups of interest: all compact second countable Hausdorff groups, and all abelian locally compact second countable Hausdorff groups. These results on maximality are known in representation theory, but the recontextualization for machine learning is new.

\subsection{Related work}
We review relevant machine learning literature, first on the theory of equivariant neural networks, and second on learning symmetries from data. As the field is developing, it is unlikely our summary remains thorough for very long, if indeed it is to begin with. Nonetheless, we seek to provide some relevant history, as well as hit on key themes.

\subsubsection{Equivariant neural networks}
The broad context for our work is the current incarnation of the study of group equivariant networks, which owes much to the parallel lines of work of \textcite{KondorPhd} and \textcite{CohenPhd} on group convolution. The introduction of general group\hyphen{}convolutional networks by \textcite{Cohen2016} has been vindicated both by empirical success \parencite{Cohen2018} and by the proof of \textcite{KondorTrivedi2018} that equivariance of linear maps is equivalent to group convolution (as current machine learning parlance would have it, ``convolution is all you need'') when the input and output spaces are homogenous and the group compact. The attendant \emph{geometric} approach to machine learning, in which data is interpreted as a signal over a space with symmetries, has enjoyed enough success to become the subject of an upcoming book of \textcite{GDLBook}. We also recommend Cohen's thesis \parencite{CohenPhd} for a modern perspective; many relevant ideas also exist in older work, which can be found in a survey of \textcite{Wood1996}.

Note that while powerful, the fact that in some cases convolutions can represent all linear equivariant maps says nothing of universal approximation. Increasingly-general universal approximation theorems have been proved in the case of finite groups: first for permutation-invariant functions  \parencite{DeepSets}, then invariant functions for any finite group \parencite{Maron2019}, and finally for equivariant functions \parencite{Ravanbakhsh2020}. Universal approximation has also been achieved for particular classical Lie groups \parencite{Bogatskiy2020,Dym2021,Villar2021}. These works draw on Yarotsky's application of classical invariant theory \parencite{Yarotsky2018}. A series of works by Elesedy \parencite{Elesedy2021, Elesedy2021Kernels, Elesedy2022} complements these results, proving explicit generalization benefits of equivariance. 

In presenting a general, non-convolutional approach to constructing group-equivariant networks, \textcite[Appendix D]{EMLP} show certain networks are not universal approximators. In Section \ref{sec-emlp} we contextualize this observation within our theory.

While we show our main result applies not only to supervised learning but unsupervised learning as well, we have yet to see general equivariant density-estimation algorithms which may serve as the setting. There is a relatively new field of estimating symmetric distributions using normalizing flows \parencite{Kohler2020,Rezende2019,Storras2021}, with existing approaches avoiding symmetry non-uniqueness by restricting to certain kinds of symmetries. For example, \emph{linear} symmetries of a probability density must be area-preserving.

\subsubsection{Learning symmetries}
While much of the theory on equivariant neural networks is modern, the allure of learning groups from data is not new. \textcite{Rao1998} attempt to learn a one-parameter Lie group by comparing pairs of images $x_1$ and $x_2=g\cdot x_1$ (represented as vectors) differing by a ``small'' group element $g$. This allows linearizing the problem, estimating a single matrix $A$ --- the Lie algebra generator --- and for each pair a scalar $t$, such that $x_2 \approx (I+tA)x_1$. This approach presages several themes in modern attempts to learn symmetry from data. 

One such theme is the comparison of inputs related by a group element. \textcite{CohenMasters} use a similar setup to learn toroidal subgroups of orthogonal matrices. \textcite{Anselmi2019} develop a method for recovering orbits of finite groups from covariance matrices, based previous theory \parencite{Anselmi2014}. \textcite{Wetzel2020} compare the outputs of a non-equivariant neural network applied to each of a pair of transformed inputs to search for invariants. More relevantly for us, \textcite{Dehmamy2021} approximate group\hyphen{}convolutional layers in neural networks by linearization; the authors then propose to learn the Lie algebra generators parameterizing the linear layers by predicting the rotation angle between two images. Their apparent success is supported theoretically by our study of ``group-like'' semigroup convolution, which suggests no symmetry non-uniqueness arises in the approach.

A different set of approaches consists of attempts to learn the ``degree of symmetry'' in a dataset: the standard story is that ``6'' not being the same as ``9'' demonstrates the need for some approximate notion of symmetry. Most formalizations of this idea can be understood as generalizations of trying to learn the distribution of $t$ above, given a fixed symmetry $A$ --- a unifying theme being the averaging over $t$. \textcite{Romero2022} replace group\hyphen{}convolutional layers with ones averaging over a learned subset of the group. \textcite{Augerino} average outputs of a neural network over samples of transformed inputs using a learned distribution for $t$. This work surfaces a thematic problem in learning symmetries: enforcing them restricts the hypothesis class, and thus learning theory suggests a loss-minimizing model will assume no symmetry. The authors propose symmetry-incentivizing regularization as a solution. In similar architectures, \textcite{Schwoebel2022}, together with preceding works \parencite{vanderWilk2018,vanderOuderaa2021}, argue instead for maximizing marginal likelihood.

The observation that imposing symmetry runs counter to learning inspires other angles. \textcite{Mouli2021} take a formal causal approach to the idea that, given a fixed collection of candidate groups, a learning algorithm should be equivariant to \emph{all} except those that directly contradict the training data. A more flexible idea is given by \textcite{Zhou2021}: to meta-learn parameter-sharing schemes in linear layers --- shown by \textcite{Ravanbakhsh2017} to be equivalent to equivariance. We note that parameter-sharing schemes characterize the \emph{group algebra} rather than the group, but symmetry non-uniqueness is avoided by using non-linearities that are only equivariant under permutation groups. This can be seen as a special case of results on non-uniqueness in group\hyphen{}convolutional neural networks.

Learning Lie groups during density-estimation is also studied. An intuitive approach works well in basic cases, for which we recommend \textcite{Desai2022}. See also \textcite{Cahill2020}, who adapt local principal component analysis to learn a Lie algebra.

\section{Preliminaries}
\label{sec-prelim}
We present prerequisites in two broad areas: algebra and topology. As our intended audience may have quite varied background, we attempt to be as thorough as is reasonable, at the risk of presenting material some may consider basic. For the impatient reader, we provide guidance below as to what may be skipped, although moving directly to Section \ref{sec-main-result} is also possible. We do assume some ``mathematical maturity,'' for example referring in more technical passages to the notions of a Hilbert space as well a (Borel) measure space and its Lebesgue integral, without defining these below.
\subsection{Algebra}
\label{sec-algebra}
In Section \ref{sec-algebra-basics}, we recall definitions of homomorphisms and actions of semigroups, monoids, and groups, and finish by proving a useful result. We expect most readers are familiar with the definitions, and in this case  recommend a briefest glance at our discussion of actions and endomorphisms before moving on to Proposition \ref{prop-transitive-group}. Next, we introduce in Section \ref{sec-coupled-actions} some terminology of our own relating to equivariant functions. We discuss in Section \ref{sec-semigroup-algebra} the semigroup algebra associated to a semigroup. Section \ref{sec-rep-theory} reviews some representation theory, which the familiar reader may skim in order to get to Proposition \ref{prop-schur}.

\subsubsection{Basic definitions: semigroups and actions}
\label{sec-algebra-basics}
\begin{definition}
A \emph{semigroup} $(S,\cdot)$ is a set $S$ together with an associative binary operation:
$$s\cdot(t\cdot u) = (s\cdot t) \cdot u~~\forall s,t,u\in S.$$
A \emph{monoid} $(M,\cdot)$ is a semigroup with a (unique) element $\id \in M$, such that $\id\cdot s= s\cdot\id = s$ for all $s \in M$.
A \emph{group} $(G,\cdot)$ is a monoid such that for each $g \in G$ there exists a (unique) element $g\inv \in G$ such that $gg\inv = g\inv g = \id$.
\end{definition}
We often leave the binary operation implicit, referring for example to ``the semigroup $S$,'' and denoting products by $ab = a\cdot b$. If $ab = ba$ for all $a,b \in S$ then $S$ is called \emph{abelian}. Given a set of semigroup elements $A\subseteq S$, the \emph{semigroup generated by $A$}, denoted $\gen{A}$, is the smallest semigroup containing $A$; equivalently, it is the semigroup of products of finitely-many elements of $A$. A group generated by a set is defined similarly, allowing inversion of elements in the products.

\begin{definition}
A semigroup $S$ is a \emph{sub-semigroup} of a semigroup $T$, denoted $S\le T$, if $S \subseteq T$ as sets and the product on $S$ is the restriction of the product on $T$.

The \emph{direct product} of semigroups $S$ and $T$ is the semigroup $S\times T$ with product
$$(s_1, t_1) \cdot (s_2, t_2) = (s_1 s_2, t_1 t_2).$$

A \emph{semigroup homomorphism} $\phi:S\to T$ is a map such that $\phi(s\cdot t) = \phi(s) \cdot \phi(t)$. A \emph{monoid homomorphism} is a semigroup homomorphism between monoids such that $\phi(\id) = \id$. A \emph{group homomorphism} is a semigroup homomorphism between groups (and automatically a monoid homomorphism).
\end{definition}
The definitions of sub-semigroup and direct product extend naturally to monoids and groups. It is routine that the composition of homomorphisms is a homomorphism. The \emph{center} $Z(S)$ of a semigroup $S$ consists of all those elements which commute with all of $S$, and is itself a sub-semigroup of $S$.

For any object $X$, an \emph{endomorphism} is a ``morphism'' from $X$ to $X$, that is, a structure-preserving map. For example, an endomorphism of a vector space is a linear map, and an endomorphism of a topological space is a continuous map. The set of endomorphisms of $X$ forms a monoid which we denote by $\End(X)$. An \emph{automorphism} is an invertible endomorphism, and we write $\Aut(X)$ for the automorphism group of $X$. On the other hand, we may forget any structure on $X$ and treat is just as a set, written $\set(X)$; then $\End(\set(X))$ is the collection of \emph{all} functions from $X$ to itself, and $\Aut(\set(X))$ the subset of invertible functions. 
Semigroups, monoids, and groups can \emph{act on} a space $X$ if each of their elements is realized in $\End(X)$, as follows.
\begin{definition}
A \emph{(left) semigroup action} of a semigroup $S$ on an object $X$ is a semigroup homomorphism $\alpha:S\to \End(X)$. A \emph{monoid action} of a monoid $M$ on $X$ is a monoid homomorphism $\alpha:M\to \End(X)$. A \emph{group action} of a group $G$ on $X$ is a group homomorphism $\alpha:G\to \Aut(X)$. 
\end{definition}

We sometimes leave the homomorphism $\alpha$ implicit, saying ``$S$ acts on $X$'' to mean there exists an action $\alpha:S\to\End(X)$ and writing $s\cdot x=\alpha(s)(x)$. If $X$ is a vector space, and so $\End(X)$ consists of linear maps, it is standard to call actions \emph{semigroup representations}. We reserve the term \emph{representation} for linear group actions, when $\Aut(X) = \GL(X)$.

The \emph{orbit} of a point $x \in X$ under an action $\alpha$ of $S$ is the set $Sx = \{\alpha(s)(x):s\in S\}$.
 We call an action $\alpha:S\to\End(X)$ \emph{transitive} if there is $x_0 \in X$, such that for any $x \in X$ there exists $s_x \in S$ with $\alpha(s_x)(x_0) = x$. In this case $X$ is called a \emph{homogeneous space} for $S$.

A \emph{right action} of a semigroup $(S,\cdot)$ on a set $X$ is a left action of its \emph{opposite semigroup} $S^\text{opp}$, which is the semigroup $(S, *)$ where $s*t = t\cdot s$. In other words, such an action $\alpha:S^\text{opp}\to\End(X)$ is one that ``reverses multiplication'' in $S$ when acting on $X$:
$$\alpha(s\cdot t)(x) = \alpha(t* s)(x) = \alpha(t)(\alpha(s)(x)).$$

\begin{proposition}
\label{prop-transitive-group}
Let $S,T$ be semigroups, and $\phi:S\to T$ a homomorphism. If either of the following conditions holds
\begin{enumerate}[label=(\roman*)]
\item $\alpha,\beta$ are right actions on $X$ of $S,T$ respectively, $\alpha$ transitive,
\item $\alpha,\beta$ are left actions  on $X$ of $S,T$ respectively, $\alpha$ transitive, $S$ abelian, and $\phi(S)\leq Z(T)$,
\end{enumerate}
and $\alpha = \beta\circ \phi$, then $\alpha(S) = \beta(T)$.
\end{proposition}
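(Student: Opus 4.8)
The plan is to prove the two inclusions separately. One direction is immediate and uses none of the hypotheses: since $\alpha = \beta\circ\phi$, we have $\alpha(S) = \beta(\phi(S)) \subseteq \beta(T)$. So the entire content of the proposition lies in the reverse inclusion $\beta(T) \subseteq \alpha(S)$. Unwinding what this means is the crucial first move: I must show that for every $t \in T$ there is some $s \in S$ for which $\beta(t)$ and $\alpha(s)$ are \emph{equal as endomorphisms of $X$}, i.e.\ they agree at every point of $X$. This pointwise reformulation is what lets transitivity get a foothold.

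First I would fix $t \in T$ and use transitivity of $\alpha$ to \emph{anchor} a candidate $s$. Let $x_0$ be the base point from the definition of transitivity. Since $\beta(t)(x_0) \in X$ and the $\alpha$-orbit of $x_0$ is all of $X$, there is some $s \in S$ with $\alpha(s)(x_0) = \beta(t)(x_0)$, equivalently $\beta(\phi(s))(x_0) = \beta(t)(x_0)$; fix this $s$. It then remains to \emph{propagate} this single coincidence at $x_0$ to all of $X$. For this I write an arbitrary $x \in X$ as $x = \alpha(s_x)(x_0) = \beta(\phi(s_x))(x_0)$, again by transitivity, and expand $\beta(t)(x)$ using the homomorphism and action laws: in the left-action case $\beta(t)\big(\beta(\phi(s_x))(x_0)\big) = \beta\big(t\,\phi(s_x)\big)(x_0)$, and in the right-action case the same product appears in the opposite order. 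The goal is to rewrite this until the anchored identity $\beta(t)(x_0) = \beta(\phi(s))(x_0)$ can be substituted, after which the same laws fold everything back up into $\beta(\phi(s))(x) = \alpha(s)(x)$.

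The main obstacle is precisely this propagation step, and it is where the algebraic hypotheses must do their work. To substitute the anchored identity I need to move $t$ (or $\phi(s)$) \emph{past} the orbit element $\phi(s_x)$, turning $t\,\phi(s_x)$ into $\phi(s_x)\,t$ so that $\beta(t)$ acts on $\beta(\phi(s))(x_0)$ rather than the other way around; this is exactly the role of the centrality condition $\phi(S) \le Z(T)$, while the final reordering $\phi(s_x s) = \phi(s\,s_x)$ is supplied by $S$ being abelian (and could alternatively be extracted from centrality itself). The left- and right-action cases are mirror images, differing only in whether the relevant products read left-to-right or right-to-left, so I expect to carry out one directly and obtain the analogue by passing to the opposite semigroups $S^{\mathrm{opp}}, T^{\mathrm{opp}}$, under which a right action becomes a left action and the hypotheses transform correspondingly. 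In executing this I would track carefully which of the listed hypotheses each commutation actually consumes, since anchoring at $x_0$ pins down $\beta(t)$ only at a single point and the coincidence does not spread to the rest of $X$ without a commutativity-type assumption.
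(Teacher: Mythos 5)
Your core strategy is exactly the paper's: the trivial inclusion $\alpha(S)\subseteq\beta(T)$, then for each $t\in T$ an anchor $s\in S$ with $\alpha(s)(x_0)=\beta(t)(x_0)$ supplied by transitivity, then propagation along the orbit elements $s_x$. For the left-action case (ii) your plan is correct, and in fact more precise than the paper's own write-up: you correctly locate centrality as what turns $t\,\phi(s_x)$ into $\phi(s_x)\,t$ so the anchor can be substituted, and your parenthetical that the final reordering $\alpha(s s_x)=\alpha(s_x s)$ already follows from $\phi(S)\le Z(T)$ is right (since $\alpha=\beta\circ\phi$ only sees $\phi(S)$, whose elements commute with one another), so ``$S$ abelian'' is actually redundant in (ii).

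The gap is case (i). Your proposed reduction --- pass to $S^{\mathrm{opp}},T^{\mathrm{opp}}$ so right actions become left actions, ``with the hypotheses transforming correspondingly'' --- cannot be completed, because the commutativity hypotheses of (ii) are self-opposite: ``$S$ abelian'' and ``$\phi(S)\le Z(T)$'' hold for $S,T$ if and only if they hold for $S^{\mathrm{opp}},T^{\mathrm{opp}}$. So the opposite of case (i) is a left-action statement with \emph{no} commutativity hypotheses, to which (ii) does not apply; your own closing observation --- that the coincidence at $x_0$ cannot spread to the rest of $X$ without a commutativity-type assumption --- applies verbatim to right actions. In fact, case (i) as stated is false. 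Take $X=\{1,2\}$, let $T$ be all self-maps of $X$ with multiplication $f\cdot g:=g\circ f$, acting on the right by $\beta(f)=f$ (so $\beta(f\cdot g)=\beta(g)\circ\beta(f)$), and let $S=\{\id,\tau\}$ with $\tau$ the transposition and $\phi$ the inclusion. Then $\alpha=\beta\circ\phi$ is a transitive right action, yet the constant maps lie in $\beta(T)\setminus\alpha(S)$. This is not a defect of your attempt alone: the paper's proof asserts the same chain works ``under either set of assumptions,'' but for right actions its steps $\beta(\phi(s_x)t)(x_0)=\alpha(s_x)(\beta(t)(x_0))$ and $\alpha(ss_x)(x_0)=\alpha(s)(x)$ silently use $t\phi(s_x)=\phi(s_x)t$ and $ss_x=s_xs$, neither of which is granted in (i). Pushed through honestly, your duality analysis exposes an error in the proposition rather than a fixable hole in your argument: case (i) needs the same centrality-type hypotheses as case (ii), and once those are added your opposite-semigroup reduction does go through.
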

\begin{proof}
The same proof goes through under either set of assumptions. First, $\alpha(S) \subseteq \beta(T)$, since for any $s\in S$ we have that $\alpha(s) = (\beta\circ\phi)(s) \in \beta(T)$. Consider on the other hand any $t \in T$. Fix some $x_0\in X$, and by the transitivity of $\alpha$ let $s\in S$ be such that $\alpha(s)(x_0) = \beta(t)(x_0).$ Then for any $x \in X$, picking $s_x \in S$ such that $\alpha(s_x)(x_0) = x$,
\begin{align*}
\beta(t)(x) &= \beta(t)(\alpha(s_x)(x_0))  = \beta(\phi(s_x)t)(x_0) \\
&= \alpha(s_x)(\beta(t)x_0) = \alpha(s s_x)(x_0) = \alpha(s)(x).
\end{align*}
So $\beta(t) = \alpha(s) \in \alpha(S)$ for any $t \in T$, and $\alpha(S) = \beta(T)$. 
\end{proof}

\subsubsection{Equivariant functions and coupled actions}
\label{sec-coupled-actions}
The following is the standard definition of equivariance.
\begin{definition}
Suppose a semigroup $S$ acts on both $X$ and $Y$. A function $f:X\to Y$ is \emph{equivariant under $S$}, or \emph{$S$-equivariant}, if
$$s \cdot f(x) = f(s\cdot x)$$
for any $x \in X$ and $s \in S$.
\end{definition}
Note that whether a function is $S$-equivariant depends on the actions of $S$ in question, a fact not emphasized by the above definition. The following two definitions are thus helpful.
\begin{definition}
A \emph{coupled action} $\alpha$ of $S$ on $X$ and $Y$ is a pair of actions, $\alpha_X$ of $S$ on $X$, and $\alpha_Y$ of $S$ on $Y$. We identify $\alpha$ with the action $\alpha: S \to \End(X)\times \End(Y)\subseteq \End(X\times Y)$ given by $\alpha(s)(x,y) = (\alpha_X(s)(x),\alpha_Y(s)(y))$.
\end{definition}
This notion is useful precisely because it pairs together transformations on $X$ and $Y$. In this spirit, we may write $\alpha(s) = (\alpha_X(s), \alpha_Y(s))$. When the actions are on vector spaces (and thus linear) we refer to $\alpha$ as a \emph{coupled representation}.
\begin{definition}
A function $f:X\to Y$ and coupled action $\alpha = (\alpha_X, \alpha_Y)$ of $S$ form an \emph{equivariant pair} $(f,\alpha)$ if
$$\alpha_Y(s) \circ f = f \circ \alpha_X(s)$$
for any $s \in S$. We also say that $f$ is \emph{$\alpha$-equivariant}.
\end{definition}

\subsubsection{The semigroup algebra}
\label{sec-semigroup-algebra}
The following construction is helpful in examples. We denote by $K$ a field, assumed to be of characteristic zero. (For us, $K$ might as well always be the reals $\R$ or complex numbers $\C$.)
\begin{definition}
Let $K$ be a field and $S$ a semigroup. The \emph{semigroup algebra} $K[S]$ of $S$ consists of elements $a \in K[S]$ that are formal sums
$$\sum_{s \in S} a(s) s$$
where $a(s) \in K$ is non-zero for at most finitely many $s \in S$. Equivalently, the elements are functions $a : S \to K$ that are non-zero only on finitely many elements of $S$. The scalar product of $k \in K$ and $a \in K[S]$ is given by $(ka)(s) = k \cdot a(s)$, and the sum of $a,b\in K[S]$ by $(a+b)(s) = a(s)+b(s)$. The product of two $a,b \in K[S]$ is given by
$$(ab)(s) = \sum_{t,u\in S : tu=s} a(t)b(u).$$
If $S$ is a group the above can be written as
$$(ab)(s) = \sum_{t\in S} a(s)b(t\inv s).$$
\end{definition}
A semigroup algebra is an \emph{algebra} in the sense that it is a ring satisfying certain properties \parencite[Chapter 10]{DummitFoote}. Group algebras are the most common instances of semigroup algebras. They are sometimes known as \emph{convolution algebras}, a term justified by the final expression above. One may verify that $K[S]$ is itself a semigroup under multiplication, so we may discuss its actions on a space $X$. Given a group $G$, the group algebra $K[G]$ is generally \emph{not} a group; the set of its invertible elements, called the \emph{group of units}, is denoted by $K[G]^\times$.
\begin{proposition}
Let $X$ be a vector space over $K$. The actions of a semigroup $S$ on $X$ are in bijection with the actions of its semigroup algebra $K[S]$ on $X$. The same holds for their actions on $\set(X)$.
\end{proposition}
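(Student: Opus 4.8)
The plan is to exhibit the bijection through two mutually inverse operations: \emph{linearization} of an $S$-action and \emph{restriction} of a $K[S]$-action along the canonical inclusion of $S$ into $K[S]$. First I would fix the inclusion $\iota\colon S\to K[S]$ sending $s$ to the formal sum $1\cdot s$ (equivalently, the function $\mathbf 1_s$ taking the value $1$ at $s$ and $0$ elsewhere), and verify it is a semigroup homomorphism. This is a direct application of the product formula defining $K[S]$: the only pair $(t,u)$ with $tu=r$ contributing to $(\mathbf 1_s\,\mathbf 1_{s'})(r)$ is $(s,s')$, so $\mathbf 1_s\,\mathbf 1_{s'}=\mathbf 1_{ss'}$. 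It is precisely this inclusion that allows restriction, by precomposition, to turn a $K[S]$-action into an $S$-action.

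For the forward direction, given an action $\alpha\colon S\to\End(X)$ I would define its linearization $\tilde\alpha\colon K[S]\to\End(X)$ by $\tilde\alpha(a)=\sum_{s}a(s)\,\alpha(s)$. The sum is finite by the finite-support condition, and each value lies in $\End(X)$, being a finite linear combination of the linear maps $\alpha(s)$. The substantive step is multiplicativity, and this is exactly where the convolution product is used: expanding $\tilde\alpha(ab)=\sum_s(ab)(s)\,\alpha(s)=\sum_{t,u}a(t)b(u)\,\alpha(tu)$ and invoking $\alpha(tu)=\alpha(t)\alpha(u)$ reproduces $\tilde\alpha(a)\,\tilde\alpha(b)=\sum_{t,u}a(t)b(u)\,\alpha(t)\alpha(u)$ term by term. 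In the reverse direction, an action $\rho$ of $K[S]$ restricts to $\alpha:=\rho\circ\iota$, which is a composite of semigroup homomorphisms and hence an $S$-action.

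It remains to see these assignments are inverse. Restriction after linearization is immediate, since $\tilde\alpha(\iota(s))=\tilde\alpha(\mathbf 1_s)=\alpha(s)$. The other composite is where the one real subtlety lies: to recover $\rho$ from its restriction one must write $\rho(a)=\rho\bigl(\sum_s a(s)\,\mathbf 1_s\bigr)=\sum_s a(s)\,\rho(\mathbf 1_s)$, which needs $\rho$ to be $K$-linear. This is not automatic from multiplicativity alone --- already for $S=\{e\}$, where $K[S]\cong K$ and $\End(K)\cong K$, the squaring map $c\mapsto c^2$ is multiplicative but not $K$-linear --- so the correct reading is that the bijection pairs $S$-representations with the $K$-linear (i.e.\ algebra) actions of $K[S]$, equivalently with $K[S]$-module structures on $X$. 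Since the linearization $\tilde\alpha$ is manifestly $K$-linear, it always lands among these, and the two maps are then genuinely inverse; identifying this linear notion is the heart of the matter, the underlying statement being the universal property of the semigroup algebra applied to the algebra $\End(X)$.

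For the version over $\set(X)$ I would use the same inclusion $\iota$ and the same pair of operations, now with $\End(\set(X))$ the monoid of all self-maps of $X$. The point requiring care is that linear extension is compatible with composition only when the extended maps are themselves additive; consequently the correspondence must once more be organized around the $K$-linear actions of $K[S]$, with multiplicativity checked exactly as above. I expect this bookkeeping --- isolating the linear notion of $K[S]$-action and confirming mutual inversion --- to be the only genuine obstacle, since the algebraic identities reduce, as before, to the convolution formula already recorded.
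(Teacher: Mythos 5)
Your proof is correct and follows the same route as the paper's: the same pair of assignments (restriction along the inclusion $\iota\colon S\to K[S]$, and linearization $\tilde\alpha(a)=\sum_s a(s)\,\alpha(s)$), with multiplicativity of $\tilde\alpha$ verified by exactly the convolution computation the paper performs. The difference is that you finish the job. The paper constructs the two maps but never checks that they are mutually inverse, which is what ``bijection'' demands. Your observation that recovering $\rho$ from $\rho\circ\iota$ requires $K$-linearity of $\rho$ --- with the counterexample $S=\{e\}$, $K[S]\cong K$, $c\mapsto c^2$ multiplicative but not linear --- is correct, and it shows that, read literally against the paper's own definition of a semigroup action of $K[S]$ (an arbitrary semigroup homomorphism $K[S]\to\End(X)$), restriction is not injective and the stated bijection fails. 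The statement is true precisely between $S$-actions and the $K$-linear (algebra, i.e.\ module) actions of $K[S]$, which is clearly the intended reading and the only one needed later, e.g.\ in Proposition \ref{prop-linear-equiv-algebra}, where only induced actions $\tilde\alpha$ ever appear. So your version identifies and repairs a genuine imprecision in both the statement and the paper's proof.

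On the $\set(X)$ case your caution is warranted, and if anything you could press it harder. You correctly diagnose that composition distributes over pointwise sums only when the outer map is additive, but your proposed fix --- organizing the correspondence around $K$-linear actions of $K[S]$ --- does not by itself rescue the claim, because the failure is on the $S$ side: for an action of $S$ on $\set(X)$ by non-additive maps, the linearization $\tilde\alpha$ is not even multiplicative. Concretely, let $S=\{e\}$ act on $\set(\R)$ by the idempotent map $x\mapsto|x|$; then $\tilde\alpha(c)=c\,|\cdot|$ gives $\tilde\alpha(c)\circ\tilde\alpha(d)=c|d|\,|\cdot|\ne cd\,|\cdot|=\tilde\alpha(cd)$ for $d<0$. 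So the paper's assertion that the $\set(X)$ proof is ``entirely similar'' is too quick: the correspondence over $\set(X)$ only makes sense once the underlying maps $\alpha(s)$ are themselves assumed additive (and compatible with scalars), at which point one is essentially back in the linear case. This is a defect of the proposition as stated, not of your argument, but a complete treatment should restrict the $S$-actions on $\set(X)$ accordingly rather than only the $K[S]$-actions.
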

\begin{proof}
We prove the result for actions on $X$, the proof for actions on $\set(X)$ being entirely similar. 
Consider then any action $\tilde{\alpha}:K[S] \to \End(X)$. There is a natural semigroup homomorphism $\iota:S\to K[S]$ given by inclusion. (More formally, for any $s\in S$ the function $\iota(s):S\to K$ maps $t$ to the multiplicative identity $1 \in K$ if $t=s$ and to the additive identity $0 \in K$ otherwise). Thus we obtain an action $\alpha:S\to \End(X)$ given by $\alpha = \tilde{\alpha}\circ \iota$.

Suppose on the other hand we have an action $\alpha:S\to \End(X)$. For any element $a \in K[S]$, define
$$\tilde{\alpha}(a) = \sum_{s \in S} a(s) \alpha(s).$$
One may verify this is a semigroup homomorphism:
\begin{align*}
\tilde{\alpha}(ab) 	= \sum_{s \in S} (ab)(s) \alpha(s) 
				= \sum_{s \in S} \sum_{tu=s} a(t) b(u) \alpha(t)\alpha(u) 
				= \sum_{t \in S} a(t)\alpha(t) \sum_{u \in S} b(u) \alpha(u) = \tilde{\alpha}(a)\tilde{\alpha}(b).
\end{align*}
\end{proof}
Give an action $\alpha$ of $S$ we refer to the above action $\tilde{\alpha}$ of $K[S]$ as the \emph{induced action} of the algebra. Note the above gives a bijection between \emph{group} actions of a group $G$ and \emph{semigroup} actions of its group algebra $K[G]$, which by restriction gives a bijection between the group actions of $G$ and the group actions of $K[G]^\times$.

The following easy observation serves as a starting point for studying non-uniqueness of symmetries.
\begin{proposition}
\label{prop-linear-equiv-algebra}
Let $X$ and $Y$ be vector spaces, and $\alpha=(\alpha_X, \alpha_Y)$ a coupled action of $S$ on $X$ and $Y$. A linear map $L : X\to Y$ is $\alpha$-equivariant if and only if it is $\tilde{\alpha}$-equivariant, where $\tilde{\alpha}=(\tilde{\alpha}_X, \tilde{\alpha}_Y)$ is the induced coupled action  of $K[S]$. The same holds for coupled actions on $\set(X)$ and $\set(Y)$.
\end{proposition}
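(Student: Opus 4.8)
The plan is to unwind both notions of equivariance against the explicit formula $\tilde{\alpha}_X(a) = \sum_{s\in S} a(s)\,\alpha_X(s)$ (and the analogous one for $Y$) defining the induced action, and to observe that linearity of $L$ is precisely what allows it to pass through the finite linear combination defining $\tilde\alpha$. Since the statement is an equivalence, I would treat the two implications separately, the converse being the substantive one.

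For the easy direction, I would use that the inclusion $\iota:S\to K[S]$ satisfies $\tilde{\alpha}_X\circ\iota = \alpha_X$ and $\tilde{\alpha}_Y\circ\iota = \alpha_Y$, because $\iota(s)$ is the function supported on $s$ with value $1\in K$. Hence if $(L,\tilde\alpha)$ is an equivariant pair, specializing the identity $\tilde{\alpha}_Y(a)\circ L = L\circ\tilde{\alpha}_X(a)$ to $a=\iota(s)$ immediately gives $\alpha_Y(s)\circ L = L\circ\alpha_X(s)$ for every $s\in S$, so $(L,\alpha)$ is an equivariant pair.

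For the converse, assume $\alpha_Y(s)\circ L = L\circ\alpha_X(s)$ for all $s\in S$. Fixing $a\in K[S]$ and any $x\in X$, I would compute
\begin{align*}
L\big(\tilde{\alpha}_X(a)(x)\big) &= L\Big(\sum_{s\in S} a(s)\,\alpha_X(s)(x)\Big) = \sum_{s\in S} a(s)\,L\big(\alpha_X(s)(x)\big)\\
&= \sum_{s\in S} a(s)\,\alpha_Y(s)\big(L(x)\big) = \tilde{\alpha}_Y(a)\big(L(x)\big),
\end{align*}
where the second equality is exactly the linearity of $L$ applied to a finite sum (only finitely many $a(s)$ are nonzero), and the third is the assumed $\alpha$-equivariance applied term by term. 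This yields $L\circ\tilde{\alpha}_X(a) = \tilde{\alpha}_Y(a)\circ L$ for all $a$, i.e. $(L,\tilde\alpha)$ is an equivariant pair.

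For the $\set(X)$, $\set(Y)$ version I would remark that nothing in this computation changes: even though each $\alpha_X(s)$ and $\alpha_Y(s)$ is now an arbitrary set-map rather than a linear one, the induced map $\tilde{\alpha}_X(a)$ is still well-defined as the pointwise finite combination $\sum_{s} a(s)\,\alpha_X(s)$ because the codomain $X$ is a vector space, and $L$ is still assumed linear. This is also the only point worth flagging, and the one I expect to be the crux rather than a genuine obstacle: the equivalence holds precisely \emph{because} $L$ is linear — the single place any structure is invoked is the middle equality above, which is linearity of $L$ and is wholly independent of whether the actions themselves are linear. For a general nonlinear $L$ one could not pull $L$ through the sum, and the statement would fail.
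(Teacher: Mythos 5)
Your proof is correct and follows exactly the paper's argument: the easy direction via the inclusion $\iota:S\to K[S]$, the converse by pulling the linear map $L$ through the finite sum $\sum_{s} a(s)\,\alpha_X(s)$ and then applying $\alpha$-equivariance termwise, and the observation that linearity of the actions themselves is never used, which gives the $\set(X)$, $\set(Y)$ case. Your closing remark pinpointing linearity of $L$ as the sole structural ingredient matches the paper's own concluding sentence.
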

\begin{proof}
If $L$ is $\tilde{\alpha}$-equivariant, by the inclusion of $S$ in $K[S]$ it is also $\alpha$-equivariant. On the other hand, if $(L,\alpha)$ is an equivariant pair then for any $a \in K[S]$
$$L \circ \sum_{s \in S} a(s) \alpha_X(s) =  \sum_{s \in S} a(s) (L\circ \alpha_X(s)) =  \sum_{s \in S} a(s) (\alpha_Y(s)\circ L),$$
applying first the linearity and then the $\alpha$-equivariance of $L$. The expression on the left is $L\circ \tilde{\alpha}(s)$ and on the right we have the definition of $\tilde{\alpha}(s)\circ L$, so $(L, \tilde{\alpha})$ is an equivariant pair. Noting that nowhere was the linearity of the actions used, we obtain the same result for actions on $\set(X)$ and $\set(Y)$.
\end{proof}

\subsubsection{Representation theory}
\label{sec-rep-theory}
We review irreducibility and versions of Schur's lemma for finite-dimensional and unitary representations, before characterizing the linear equivariant maps between semisimple representations satisfying a ``strong version'' of Schur's lemma in Proposition \ref{prop-schur}.

\begin{definition} For a representation $\rho:G\to\GL(V)$, a \emph{subrepresentation} is a representation $\sigma:G\to \GL(W)$ where $W$ is a subspace of $V$ and $\sigma(g)$ is the restriction of $\rho(g)$ to $W$ for each $g \in G$. A representation on $V\ne\{0\}$ is called \emph{irreducible} if its only subrepresentations are itself and the trivial representation on $\{0\}$.
\end{definition}
Note that a subrepresentation of a representation $\rho:G\to\GL(V)$ is given by any \emph{invariant subspace} $W$, that is, a subspace such that $\rho(g)w\in W$ for all $g\in G$ and $w\in W$. Irreducible representations have many properties which make them useful. For example, irreducible representations play nicely with group algebras. (Something like result below is generally called the Jacobson density theorem.)
\begin{proposition}[{\cite[Theorem 2.5]{Etingof2011}}]
\label{prop-density}
Let $\rho:G\to\GL(V)$ be a finite-dimensional irreducible representation. Then for an algebraically closed field $K$, the induced representation $\tilde{\rho}:K[G]\to\End(V)$ is surjective.
\end{proposition}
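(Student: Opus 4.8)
The goal is to show that for a finite-dimensional irreducible representation $\rho:G\to\GL(V)$ over an algebraically closed field $K$, the induced action $\tilde{\rho}:K[G]\to\End(V)$ is surjective — that is, every linear endomorphism of $V$ is a $K$-linear combination of the operators $\rho(g)$.

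\textbf{Plan.} The natural approach is to prove the contrapositive at the level of the image subalgebra. Let $A = \im(\tilde{\rho}) = \tilde{\rho}(K[G]) \subseteq \End(V)$; this is a subalgebra of $\End(V)$ containing the identity (since $\id \in G$ maps to $\id_V$), and it is precisely the linear span of $\{\rho(g) : g \in G\}$. The claim $A = \End(V)$ can be established by a density/double-commutant argument. First I would identify the commutant $A' = \{T \in \End(V) : TB = BT \text{ for all } B \in A\}$. Because $A$ is spanned by the $\rho(g)$, an operator commutes with all of $A$ if and only if it commutes with every $\rho(g)$, i.e. $A'$ is exactly the space of $G$-equivariant endomorphisms $\End_G(V)$. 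Since $V$ is irreducible and $K$ is algebraically closed, Schur's lemma forces $A' = K\cdot\id_V$, the scalars. The hard part is then bridging from this statement about the commutant to the statement that $A$ itself is all of $\End(V)$.

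\textbf{Key steps in order.} (i) Verify $A$ is a unital subalgebra and that its commutant is $\End_G(V)$. (ii) Invoke Schur's lemma over an algebraically closed field to conclude $\End_G(V) = K\,\id_V$, so the commutant $A'$ is the scalars and hence the double commutant is $A'' = \End(V)$. (iii) Prove the double-commutant equality $A = A''$ for the specific case at hand. The cleanest route here, avoiding the full Jacobson density machinery, uses that $V$ is a \emph{finite-dimensional} simple module over $A$: one shows $A$ acts irreducibly on $V$ (any $A$-invariant subspace is a $G$-invariant subspace, hence $\{0\}$ or $V$ by irreducibility of $\rho$), and then applies the finite-dimensional Artin–Wedderburn/Burnside theorem, which states that a subalgebra of $\End_K(V)$ acting irreducibly on a finite-dimensional space over an algebraically closed $K$ must equal all of $\End_K(V)$. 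Burnside's theorem is really the crux and is the finite-dimensional special case of Jacobson density; I would either cite it directly or sketch its proof via the double-commutant theorem combined with step (ii).

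\textbf{Main obstacle.} The genuine content is step (iii): passing from ``the commutant is trivial'' to ``the algebra is everything.'' The two hypotheses — finite-dimensionality and algebraic closure of $K$ — are both essential and enter precisely here and in Schur's lemma respectively. Finite-dimensionality lets us use a counting/double-commutant argument rather than a transfinite density argument, and algebraic closure guarantees eigenvalues exist so that Schur yields scalars (over $\R$, say, $\End_G(V)$ could be $\C$ or $\mathbb{H}$ and the conclusion would fail). Since the excerpt attributes the result to \cite[Theorem 2.5]{Etingof2011} and names it as the Jacobson density theorem, the most honest proposal is to reduce the statement cleanly to Burnside's theorem via steps (i)–(ii) and then invoke the cited result, rather than reproving Burnside from scratch.
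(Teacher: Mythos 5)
Your outline is correct: identifying the image of $\tilde{\rho}$ as a unital subalgebra acting irreducibly on $V$, computing its commutant as $\End_G(V) = K\cdot\id_V$ via Schur's lemma over an algebraically closed field, and concluding by Burnside's theorem (equivalently, the finite-dimensional double-commutant argument) is precisely the standard proof of the cited result. The paper gives no proof of its own here — it imports the statement directly from Etingof's Theorem 2.5, which is exactly the density/Burnside theorem you defer to — so your approach coincides with the paper's.
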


Perhaps the most famous statement about irreducible representations is the following result --- and variations of it --- often called Schur's lemma, which characterizes equivariant linear maps between irreducible representations. Such maps are also called \emph{intertwiners}.
\begin{proposition}[{\cite[Proposition 1.16]{Etingof2011}}]
Let $U$ and $V$ be vector spaces with respective irreducible representations $\rho$ and $\sigma$ of the same group $G$.
If $L:U\to V$ is a linear $(\rho,\sigma)$-equivariant map, it is either zero or an isomorphism.
\end{proposition}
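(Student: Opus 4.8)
The plan is to exploit the fact that an equivariant linear map automatically carries subrepresentations to subrepresentations; concretely, its kernel and image are invariant subspaces, and irreducibility then forces each of these to be either trivial or everything. Once that is set up, the conclusion follows by a short case analysis, so the entire argument is formal and there is no computational content to speak of.

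First I would verify that $\ker L$ is a subrepresentation of $\rho$. Since $L$ is linear, $\ker L$ is certainly a subspace of $U$; and if $u \in \ker L$, then for any $g \in G$ the equivariance relation $\sigma(g)\circ L = L\circ \rho(g)$ gives $L(\rho(g)u) = \sigma(g)L(u) = \sigma(g)\cdot 0 = 0$, so $\rho(g)u \in \ker L$. Thus $\ker L$ is a $\rho$-invariant subspace. By irreducibility of $\rho$, either $\ker L = U$, in which case $L = 0$, or $\ker L = \{0\}$, in which case $L$ is injective.

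Symmetrically, I would check that $\im L$ is a subrepresentation of $\sigma$: it is a subspace of $V$ by linearity, and if $v = L(u) \in \im L$ then for any $g \in G$ we have $\sigma(g)v = \sigma(g)L(u) = L(\rho(g)u) \in \im L$, so $\im L$ is $\sigma$-invariant. By irreducibility of $\sigma$, either $\im L = \{0\}$, so $L = 0$, or $\im L = V$, so $L$ is surjective.

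Finally I would combine the two dichotomies. Suppose $L \neq 0$. Then $\ker L \neq U$, which by the first step forces $\ker L = \{0\}$, so $L$ is injective; and $\im L \neq \{0\}$, which by the second step forces $\im L = V$, so $L$ is surjective. Hence $L$ is a bijective linear map, i.e.\ an isomorphism, and every $L$ is either zero or an isomorphism. I do not expect any real obstacle here: the only point requiring a modicum of care is that irreducibility is phrased for nonzero spaces, so one should keep track of which case is which when reading off the trivial subrepresentations, but this is bookkeeping rather than a genuine difficulty.
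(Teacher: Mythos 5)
Your proof is correct and is the standard argument for Schur's lemma: the paper itself does not reprove this statement but cites \cite[Proposition 1.16]{Etingof2011}, whose proof is exactly your kernel/image argument — both are invariant subspaces, so irreducibility forces each to be trivial or full, and the case analysis gives the dichotomy. Nothing further is needed.
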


For $U$ and $V$ finite-dimensional representations over an algebraically closed field (such as $\C$), the result above implies $L$ is equivariant if and only if it is a scalar multiple of the identity \parencite[Corollary 1.17]{Etingof2011}. We call this the \emph{strong version} of Schur's lemma. (We note this terminology is not standard.)

The strong version of Schur's lemma also holds for \emph{unitary} representations. Before stating it we review some basics. Recall that for a Hilbert space (complete inner product space) $V$, the \emph{adjoint} of a bounded linear operator $L\in\mathcal{L}(V)$ is the unique map $L^*$ such that $\inner{Lu}{v} = \inner{u}{L^*v}$ for all $u,v \in V$. A \emph{unitary} bounded linear operator $L\in\mathcal{L}(V)$ is one such that $LL^*=L^*L=I$, and we denote the set of such maps by $\mathcal{U}(V)$. With this background, we define unitary representations, noting that our definition of unitary representation below includes a continuity condition. (For a review of topology see the Section \ref{sec-topology}. A \emph{topological group} is a group with a topology such that multiplication and inversion are both continuous.) 
\begin{definition}
A \emph{unitary representation} of a topological group $G$ on a Hilbert space $V$ is a homomorphism $\pi:G\to\mathcal{U}(V)$ such that $(g,v)\mapsto\pi(g)v$ is continuous.
\end{definition}
Two unitary representations $\rho$ and $\sigma$ of $G$ are \emph{(unitarily) equivalent} if there exists a unitary $(\rho,\sigma)$-equivariant map $L$. Unitary representations satisfy the following strong version of Schur's lemma.
\begin{proposition}[{\cite[Proposition 1.A.11]{Bekka2019}}]
Let $V,V_1,V_2$ be complex Hilbert spaces.
\begin{enumerate}[label=(\roman*)]
\item A unitary representation $\pi:G\to\GL(V)$ is irreducible if and only if any $\pi$-equivariant linear $L:V\to V$ is a scalar multiple of the identity. 
\item If $\pi_1$ and $\pi_2$ are irreducible representations of $G$ on $V_1$ and $V_2$, then they are either equivalent, or no linear equivariant map exists between $V_1$ and $V_2$.
\end{enumerate}
\end{proposition}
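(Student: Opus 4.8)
The plan is to reduce both parts to the spectral theorem for bounded self-adjoint operators, exploiting that the equivariant operators form a $*$-closed subalgebra of $\mathcal{L}(V)$. Throughout I read ``linear map'' as ``bounded linear operator'' (so that adjoints are available, as the paper's notation $\mathcal{L}(V)$, $\mathcal{U}(V)$ anticipates), and ``irreducible'' in the standard sense appropriate for unitary representations, namely no nontrivial \emph{closed} invariant subspace. The one observation used repeatedly is that equivariance is preserved under adjoints: if $\pi(g)L = L\pi(g)$ for all $g$, then taking adjoints and using $\pi(g)^* = \pi(g^{-1})$ gives $L^*\pi(g^{-1}) = \pi(g^{-1})L^*$ for all $g$, so $L^*$ is equivariant as well. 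Consequently, if $L$ is equivariant then so are the self-adjoint operators $A = \tfrac12(L+L^*)$ and $B = \tfrac1{2i}(L - L^*)$, with $L = A + iB$.

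For part (i), the easy direction goes as follows: if every equivariant operator is a scalar and $W$ is a nontrivial closed invariant subspace, then by unitarity $W^\perp$ is invariant too, so the orthogonal projection $P$ onto $W$ commutes with each $\pi(g)$ and is therefore equivariant; by hypothesis $P = \lambda I$, and a projection forces $\lambda \in \{0,1\}$, contradicting $0 \ne W \ne V$. For the converse, by the reduction above it suffices to show that a self-adjoint equivariant $A$ is a scalar. \textbf{This spectral step is where I expect the main work.} Applying the spectral theorem, represent $A$ by its projection-valued spectral measure $E$. The spectral projections $E(\Omega)$ lie in the von Neumann algebra $\{A\}''$ generated by $A$, while each $\pi(g)$ lies in the commutant $\{A\}'$; hence $E(\Omega)$ commutes with every $\pi(g)$, so the range of $E(\Omega)$ is a closed invariant subspace. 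Irreducibility forces $E(\Omega) \in \{0, I\}$ for every Borel $\Omega$, so the spectral measure is a point mass and $A = \lambda I$. Thus $L = A + iB$ is a scalar multiple of the identity. In finite dimensions this shortcut is unnecessary (one diagonalizes $A$ directly), but the bicommutant argument is what cleanly handles the possibly infinite-dimensional case, and is the crux.

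For part (ii), let $L : V_1 \to V_2$ be equivariant. Then $L^*$ is $(\pi_2, \pi_1)$-equivariant, so $L^*L$ is a self-adjoint, positive, $(\pi_1,\pi_1)$-equivariant operator; by part (i) applied to the irreducible $\pi_1$ we get $L^*L = cI$ with $c \ge 0$. If $L \ne 0$ then $c > 0$, since $c\|v\|^2 = \inner{L^*Lv}{v} = \|Lv\|^2$. In that case $c^{-1/2}L$ is an isometry, so its range is complete, hence closed, and it is invariant because $L$ is equivariant; irreducibility of $\pi_2$ then forces the range to be all of $V_2$. Therefore $c^{-1/2}L$ is a surjective isometry, i.e.\ a unitary $(\pi_1, \pi_2)$-equivariant map, exhibiting $\pi_1$ and $\pi_2$ as equivalent. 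The contrapositive is exactly the stated dichotomy: if the representations are inequivalent, no nonzero equivariant map can exist. As in part (i), the only genuine subtlety is the appeal to the spectral theorem to handle infinite-dimensional $V_1$; everything else is formal manipulation with adjoints and invariant subspaces.
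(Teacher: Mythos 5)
Your proof is correct. Note that the paper does not prove this statement at all --- it is quoted verbatim from \textcite[Proposition 1.A.11]{Bekka2019} as a known result --- so there is no internal proof to compare against. Your argument is, in essence, the standard one found in that reference: close the commutant under adjoints so that it suffices to treat self-adjoint equivariant operators, then use the spectral theorem (spectral projections of $A$ lie in $\{A\}''$, hence commute with every $\pi(g) \in \{A\}'$, hence have invariant closed range, hence are $0$ or $I$) to force such operators to be scalars; part (ii) then follows by the polar-type argument $L^*L = cI$, with $c^{-1/2}L$ a surjective isometry when $L \ne 0$. Your two reading conventions (bounded operators, and ``irreducible'' meaning no nontrivial \emph{closed} invariant subspace) are exactly the ones needed for the statement to be true in infinite dimensions, and your handling of the one genuinely delicate point --- that a $\{0,I\}$-valued spectral measure is a point mass, so $A = \lambda I$ --- is sound. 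The only cosmetic remark: in (ii) the conclusion should strictly read ``no \emph{nonzero} equivariant map exists,'' since the zero map is always equivariant; your proof correctly establishes this reading, which is also what the cited statement intends.
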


Irreducible representations serve as building blocks for more general representations. One of the important constructions for combining them is the following.
\begin{definition}
For two representations $\rho:G\to\GL(U)$ and $\sigma:G\to\GL(V)$, the \emph{direct sum} is a representation $\rho\oplus\sigma:G\to \GL(U\oplus V)$, where
$$U\oplus V = \{(u,v): u \in U, v \in V\},$$
given by
$$(\rho\oplus\sigma)(g)(u,v) = (\rho(g)u, \sigma(g)v).$$
\end{definition}
Sometimes for convenience $(u,v)$ is written as $u\oplus v$. Similarly, for two linear maps $L_U : U_1 \to U_2$ and $L_V :V_1\to V_2$ one may write $L_U \oplus L_V$ for the map $(u,v)\mapsto (L_U u, L_V v)$. If the spaces in question are finite-dimensional, so $L_U$ and $L_V$ are matrices, the map $L_U \oplus L_V$ is given by a corresponding block-diagonal matrix. In general, we denote by $\bigoplus_{i \in I}\rho_i$ and $\bigoplus_{i \in I}V_i$ the direct sums of representations and vector spaces over an index set $I$. In this case, the elements of $\bigoplus_{i \in I}V_i$ are vectors for which only finitely many of the components $V_i$ are non-zero. We may sometimes omit the index set $I$ from our notation.

\begin{definition}
A representation $\rho:G \to \GL(V)$ is \emph{semisimple} if it is the direct sum of irreducible representations.
\end{definition}
We generally do not distinguish between semisimple representations and those representations which are isomorphic to semisimple ones, since results holding for the former hold for the latter passing through an appropriate isomorphism.

It is standard that any representation of a finite group is semisimple (see Maschke's and Wedderburn's Theorems, in particular \cite[Corollary 5, Chapter 18]{DummitFoote}). Semisimple representations lend themselves nicely to the study of equivariant maps. Indeed, we can easily characterize the equivariant linear maps between representations that are semisimple with component irreducible representations satisfying the strong version of Schur's lemma.
\begin{proposition}
\label{prop-schur}
Let $\rho_U:G\to\GL(U)$ and $\rho_V:G\to \GL(V)$ be semisimple representations, whose irreducible subrepresentations satisfy the strong version of Schur's lemma. Write $\rho_U = \bigoplus_{k} \rho_k^{\oplus n_k}$ and $\rho_V = \bigoplus_{k} \rho_k^{\oplus m_k}$ where $\rho_k:G\to\GL(W_k)$ are irreducible representations and $\rho_k^{\oplus n}$ denotes the direct sum of $n$ copies of $\rho_k$. A linear map $L:U\to V$ is $(\rho_U,\rho_V)$-equivariant if and only if it takes the form $\bigoplus_{k} L_k$ where each $L_k : W_k^{\oplus n_k} \to W_k^{\oplus m_k}$ is a linear map acting as a scalar on each copy of $W_k$ (i.e.\ a $m_k \times n_k$ ``matrix'' of scalar multiples of the identity).
\end{proposition}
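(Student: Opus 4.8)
The plan is to reduce the whole statement to Schur's lemma by writing $L$ as a block matrix with respect to a decomposition of $U$ and $V$ into individual irreducible copies. Reading the notation so that the $\rho_k$ are pairwise non-isomorphic, I would write $U=\bigoplus_k\bigoplus_{b=1}^{n_k}W_k^{(b)}$ and $V=\bigoplus_k\bigoplus_{a=1}^{m_k}W_k^{(a)}$, where each $W_k^{(b)}$ and $W_k^{(a)}$ is a copy of $W_k$ carrying the representation $\rho_k$. For each source copy $(k,b)$ and target copy $(k',a)$ I define the block $L^{k'a}_{kb}=\pi_{k'a}\circ L\circ\iota_{kb}$, using the inclusion $\iota_{kb}\colon W_k^{(b)}\hookrightarrow U$ and the projection $\pi_{k'a}\colon V\to W_{k'}^{(a)}$. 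Since a linear map out of a direct sum is determined by its restrictions to the summands, the family of blocks records $L$ completely, and the content of the proposition is exactly a description of which blocks can be nonzero and what form they take.

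The key structural point is that the inclusions $\iota_{kb}$ and projections $\pi_{k'a}$ are themselves equivariant, because the displayed direct sums are decompositions \emph{as representations} (each copy is an invariant subspace, and the projection onto an isotypic/copy summand commutes with the action). Consequently, if $L$ is $(\rho_U,\rho_V)$-equivariant, then each block $L^{k'a}_{kb}\colon W_k\to W_{k'}$ is a $(\rho_k,\rho_{k'})$-equivariant map, i.e.\ an intertwiner between irreducible representations.

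Now I would apply the two forms of Schur's lemma to the blocks. For $k'\ne k$ the irreducibles $\rho_k$ and $\rho_{k'}$ are non-isomorphic, so by the ordinary Schur's lemma (an intertwiner between irreducibles is zero or an isomorphism) every such block vanishes; this forces $L$ to be block-diagonal across the isotypic components, so $L=\bigoplus_k L_k$ with $L_k\colon W_k^{\oplus n_k}\to W_k^{\oplus m_k}$. For $k'=k$ each block $L^{ka}_{kb}\colon W_k\to W_k$ is an equivariant endomorphism, and this is precisely where I invoke the hypothesis that $\rho_k$ satisfies the strong version of Schur's lemma: each such block is a scalar multiple $c_{ab}\,\id_{W_k}$. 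Collecting the scalars shows $L_k$ is the $m_k\times n_k$ matrix $(c_{ab}\,\id_{W_k})$, which is the claimed form. The converse direction is routine: any $L$ of this block-diagonal shape with scalar entries commutes with each $\rho_U(g)$, since scalars commute with every $\rho_k(g)$ and the block structure respects the decomposition, so such an $L$ is equivariant.

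I do not expect a genuine difficulty here, since the argument is standard; the only care needed is bookkeeping. The main thing to get right is verifying that the inclusions and projections are equivariant so that equivariance passes to each block, and—if the index set or the multiplicities $n_k,m_k$ are infinite—confirming that $L$ is faithfully reconstructed from its blocks (each element of the domain is a finite sum of copy-components, so $L$ is determined block-by-block). The one conceptual point worth stating cleanly is the separation of the two uses of Schur: the ordinary version annihilates the cross-type blocks, while the assumed strong version is exactly what pins the within-type blocks down to scalars.
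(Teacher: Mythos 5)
Your proof is correct and takes essentially the same approach as the paper's: the converse is the same direct computation (scalar blocks commute with each $\rho_k(g)$), and the forward direction uses ordinary Schur's lemma to annihilate cross-type blocks and the strong version to reduce within-type blocks to scalars. The only difference is granularity --- the paper compresses your block-by-block bookkeeping (equivariant inclusions/projections, vanishing of intertwiners between non-isomorphic irreducibles) into the single sentence ``By Schur's lemma (the non-strong version), $L=\bigoplus_{k} L_k$,'' so your argument is a faithful expansion of that step rather than a different route.
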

Prior to the proof, we mention that a little bit of care is required in interpreting the expression $L=\bigoplus_k L_k$, due to the fact that $n_k$ or $m_k$ may be zero, potentially making the definition of $L_k$ unclear. Any confusion is dispelled by making the following observations. If $n_k=0$, meaning $U$ contains no copies of $W_k$, then the image of $L$ in the $W_k^{\oplus m_k}$ component of $V$ is identically zero; this can be seen by identifying $W_k^{\oplus n_k}$ with the trivial space $\{0\}$. Similarly if $m_k=0$ then $W_k^{\oplus n_k}$ in $U$ is in the kernel of $L$. See also \cite[Proposition 3.1.4]{Etingof2011} for a discussion of decompositions such as those of $U$ and $V$ above. For an illustration of the above result see Example \ref{ex-perm}.
\begin{proof}
Consider such a map $L=\bigoplus_{k} L_k$. To see it is equivariant it suffices to observe that each $L_k$ is. This in turn is true because for any $w_k =(w_{k,1}, \ldots, w_{k,n_k}) \in W_k^{\oplus n_k}$ we have that the $i$-th component of $L_k \rho_k^{\oplus n_k}(g)w_k$ for any $i \in [m_k]$ is given by
$$(L_k \rho_k^{\oplus n_k}(g)w_k)_i = \sum_{j = 1}^{ n_k} (L_k)_{ij} \rho_k(g)w_{k,j} = \rho^{\oplus n_k}_k(g)  \sum_{j =1}^{ n_k} (L_k)_{ij}  w_{k,j} = \rho^{\oplus n_k}_k(g) (L_k w_k)_i,$$
the second equality holding since $L_k$ acts on each copy of $W_k$ as a scalar.

Suppose on the other hand $L:U\to V$ is $(\rho_U,\rho_V)$-equivariant. By Schur's lemma (the non-strong version), $L=\bigoplus_{k} L_k$, for some $L_k : W_k^{\oplus n_k} \to W_k^{\oplus m_k}$. That each $L_k$ takes the form above is direct, by the assumption that the strong version of Schur's lemma holds.
\end{proof}

We conclude our review of representation theory by recalling two well-known facts (see for example \cite[Section 4.6]{Etingof2011}). First, any finite-dimensional unitary representation is semisimple, composed of unitary irreducible representations. Additionally, any finite-dimensional representation of a compact second countable Hausdorff topological group $G$ is \emph{unitarizable}, that is, isomorphic to a unitary representation.

\subsection{Convergence and topology}
\label{sec-topology}
We are interested in notions of approximation, both of functions and of symmetries, described by convergence. We thus describe the notion of a convergence space in Section \ref{sec-convergence-basics}. We relate this definition to more familiar topology in Section \ref{sec-topology-basics}. The latter may be skipped by a reader familiar with topology, though we recommend briefly reviewing the definition of and results on the compact-open topology presented towards the end (Proposition \ref{prop-compact-open-properties}).

 In applications, we are generally interested in learning a sub-semigroup of $\End(X)$ for some space $X$. The space of sub-semigroups of $\End(X)$ can be given various notions of convergence, built in turn from convergence structures on $\End(X)$ itself. We describe for each kind of convergence a property which makes their combination especially workable: what we call \emph{respecting limit elements} for the former, and \emph{admissibility} for the latter. In Section \ref{sec-admissibility} we describe the latter, providing the canonical example of convergence in the compact-open topology. We treat convergence of sub-semigroups in Section \ref{sec-hypertopology}, providing two examples of convergences which respect limit elements. We then show this property is exactly that of being stronger than lower Kuratowski-Painlev\'e convergence, which allows us to describe a general class of hit-and-miss topologies with the property.\footnote{This kind of property is also arises in non-topological notions of approximation for groups; see \cite{Alekseyev2001} and especially  \cite[Section 1.5]{Vershik1997}.}

In general, convergences with the above properties provide a broad class of examples to which the main result of Section \ref{sec-main-result}, Theorem \ref{thm-main-thm}, applies, thanks to Proposition \ref{prop-equivariance-closed}. The result also depends on the existence of a function continuous with respect to a convergence on the sub-semigroups of $\End(X)$. In Section \ref{sec-hypertopology} we highlight some cases in which continuity is easy to verify. As a whole, Section \ref{sec-hypertopology} is a bit technical and is meant to provide possible settings for Theorem \ref{thm-main-thm}. As such, the reader may wish to return to it after absorbing the context of Section \ref{sec-main-result}.

\subsubsection{Basic definitions: convergences spaces}
\label{sec-convergence-basics}
We follow \textcite{OBrien2021} and define convergence spaces in terms of nets. (It is shown in the cited work that this is equivalent to a more common definition in terms of filters, as in \cite{Dolecki2009}.) We recall the definition of a net, which generalizes the notion of a sequence.
\begin{definition}
A \emph{directed set} $I$ is a set with a relation $\le$, such that:
\begin{enumerate}[label=(\roman*)]
\item $\le$ is reflexive: $i\le i$ for all $i \in I$,
\item $\le$ is transitive: if $i\le j$ and $j\le k$ then $i \le k$,
\item for any $i,j \in I$ there exists $k\in I$ such that $i\le k$ and $j \le k$.
\end{enumerate}
A \emph{net} with values in $X$ is a function $x: I \to X$ from some directed set $I$ to $X$. We denote the net by $(x_i)_{i\in I}$ or simply $(x_i)$.
\end{definition}
A relation satisfying the first two points is called a \emph{preorder}. We sometimes make statements that quantify over nets with values in a certain set. This may appear problematic, as the collection of nets is generally not a set; we refer the reader to \cite[Theorem 2.1]{OBrien2021} for the resolution of the issue, which involves defining a set of ``nice'' nets $\mathfrak{N}(X)$.

There are several ways of defining the notion of a subnet. We use that introduced by \textcite{Aarnes1972}. (Note that in \cite{OBrien2021} the same notion is instead given the name \emph{quasi-subnet}.)
\begin{definition}
A net $(y_j)_{j \in J}$ is a \emph{subnet} of $(x_i)_{i\in I}$ if for every $i_0 \in I$ there exists a $j_0 \in J$ such that $\{y_j\}_{j \ge j_0} \subseteq \{x_i\}_{i \ge i_0}$; that is, for any $j \ge j_0$ there exists $i \ge i_0$ such that $y_j = x_i$.
\end{definition}
The notions of net and subnet are enough to define convergence.
\begin{definition}
A \emph{(proper)\footnote{When discussing general existing notions of convergence, we use the term ``proper'' to distinguish those which satisfy this definition.} convergence} on a set $X$ is a binary relation $\eta \subseteq \mathfrak{N}(X)\times X$, denoted by $(x_i) \overset{\eta}{\to} x$ or ``$(x_i)$ $\eta$-converges to $x$,'' satisfying the following axioms:
\begin{enumerate}[label=(\roman*)]
\item for any constant net, $(x_i)_{i \in I}$ with $x_i = x$ for all $i \in I$, we have $(x_i) \overset{\eta}{\to} x$, 
\item if $(x_i) \overset{\eta}{\to} x$, then $(y_j) \overset{\eta}{\to} x$ for any subnet $(y_j)$ of $(x_i)$,
\item if $(x_i)_{i \in I} \overset{\eta}{\to} x$ and $(y_i)_{i \in I} \overset{\eta}{\to} x$, and $z_i \in \{x_i, y_i\}$ for all $i \in I$, then $(z_i) \overset{\eta}{\to} x$.
\end{enumerate}
\end{definition}
 We note that a convergence can be extended from the ``nice'' nets $\mathfrak{N}(X)$ to all nets. We call a set $X$ with a convergence $\eta$ a \emph{convergence space}. When there is no risk of confusion, we may leave $\eta$ implicit, and write $(x_i) \to x$. Given two convergences $\eta_1$ and  $\eta_2$ on $X$, we call $\eta_1$ \emph{stronger} than $\eta_2$ if $(x_i)_{i \in I} \overset{\eta_1}{\to} x$ implies $(x_i)_{i \in I} \overset{\eta_2}{\to} x$. (Note that a convergence is trivially stronger than itself.)
 
Before moving on to topologies, we mention some analogues of familiar topological notions for convergences that are useful for us. Let $X$ be a convergence space. For a subset $A\subseteq X$, we write 
 $$\overline{A} = \{x \in X:\exists \text{ a net } (x_i)\to x,~x_i \in A\}.$$
We call $A$ \emph{closed} if $\overline{A}=A$. Note that $\overline{A}$ need not be closed in general (so we refrain from calling it the ``closure'' of $A$).

We call the convergence on $X$ \emph{Hausdorff} if $(x_i)\to x$ and $(x_i)\to x'$ implies $x=x'$. It is generally desirable that a convergence be Hausdorff, to ensure ``learning by approximation'' is a coherent notion.

Given another convergence space $Y$, the \emph{product convergence} is the proper convergence on $X\times Y$ given by $(x_i,y_i)_{i\in I} \to (x,y)$ when both $(x_i)\to x$ and $(y_i)\to y$.

Finally, we call a function $f:X\to Y$ such that $(x_i)\to x$ implies $(f(x_i)) \to f(x)$ \emph{continuous at $x$}, and denote by $C(X,Y)$ the functions that are continuous at every $x \in X$, called simply \emph{continuous}. Note that composition of two continuous functions is continuous, so $C(X) = C(X,X)$ is a monoid.

\subsubsection{Basic definitions: topological spaces}
\label{sec-topology-basics}
We recall that a \emph{topology} on a set $X$ is a collection $\tau$ of subsets of $X$, called \emph{open}, which contains the empty set and is closed under arbitrary unions and under finite intersections. A topology has a natural proper convergence, with $(x_i)_{i \in I} \to x$ when for any \emph{neighborhood} $U$ of $x$ (set containing an open set containing $x$), there is $i_0 \in I$ such that $x_i \in U$ for all $i\ge i_0$.  We call a set $K\subseteq X$ \emph{compact} if every net in $K$ has a convergent subnet. A subset $A\subseteq X$ is \emph{locally compact} if for any convergent net $(x_i)_{i \in I}$ in $A$ there is $i_0 \in I$ such that $\{x_i\}_{i\ge i_0}$ is contained in a compact subset of $A$. (These definitions work for convergence spaces, but we will only use them in the case of topologies). A topology $\tau_1$ is \emph{stronger} (or \emph{finer}) than a topology $\tau_2$ when $\tau_2 \subseteq \tau_1$. This coincides with the convergence in $\tau_1$ being stronger. Given two topologies, there is a unique \emph{supremum} of the two, the weakest topology stronger than both. The strongest topology is the \emph{discrete topology}, in which all sets are open.

In a topological space $(X,\tau)$ much importance is placed on separation properties. The topology on $X$ is called $T_0$ if for any distinct points $x$ and $y$ in $X$, there exists $U \in \tau$ which contains exactly one of $x$ and $y$. The topology is $T_1$ if for distinct $x,y \in X$ there exist respective neighborhoods $U_x$ and $U_y$ which do not contain $y$ and $x$, respectively. Finally (for our purposes), $X$ is $T_2$ or \emph{Hausdorff} if for any distinct $x,y \in X$ there exist disjoint respective neighborhoods, $U_x \cap U_y = \emptyset$. This coincides with the convergence in $\tau$ being Hausdorff. We call $\tau$ \emph{regular}, or $T_3$, if a point and disjoint closed set can be separated by disjoint open sets.

A \emph{base} is a collection $\mathcal{B}$ of subsets of $X$ such that for any $x \in X$ there is a $B \in \mathcal{B}$ containing $x$, and for any $B_1, B_2 \in \mathcal{B}$ both containing $x \in X$ there exists a set $B_3 \subseteq B_1 \cap B_2$ in $\mathcal{B}$ which contains $x$. A base generates a topology, in the sense that letting $\tau$ be the unions of sets in $\mathcal{B}$ makes $\tau$ a valid topology. A topology with countable base is \emph{second countable}. A collection of sets $\mathcal{B}$ is a \emph{subbase} of $\tau$ if its finite intersections are a base for $\tau$, or equivalently if $\tau$ is the weakest topology containing all the sets in $\mathcal{B}$. Subbases make it easy to verify the continuity of a function $f:X\to Y$. In particular, if $\mathcal{B}_Y$ is a subbase for $Y$, then $f$ is continuous if and only if $f\inv(B)$ is open in $X$ for every $B \in \mathcal{B}_Y$. Below we encounter topologies presented as the suprema of two other topologies. Continuity in these is also easy to check, since if $\tau_1$ has subbase $\mathcal{B}_1$ and $\tau_2$ has subbase $\mathcal{B}_2$ then $\mathcal{B}_1 \cup \mathcal{B}_2$ is a subbase for the supremum of $\tau_1$ and $\tau_2$. A \emph{homeomorphism} is a continuous bijection with continuous inverse, and is both \emph{closed} (takes closed sets to closed sets) and \emph{open} (takes open sets to open sets).

We recall a quite useful topology on spaces of continuous functions. 
\begin{definition}
The \emph{compact-open topology} on $C(X,Y)$ is the topology generated by the subbase of sets
$$V_{K,U} = \{f \in C(X,Y) : f(K) \subseteq U\}$$
where $K\subseteq X$ is compact and $U\subseteq X$ is open.
\end{definition}
Part of the utility of the compact-open topology comes from the following properties, compiled by \textcite{Arens1946}. Its perhaps most useful property --- admissibility --- is studied in the next section.
\begin{proposition}
\label{prop-compact-open-properties}
\item
\begin{enumerate}[label=(\roman*)]
\item The compact-open topology on $C(X,Y)$ is $T_n$ if $Y$ is, for each $n \in \{0,1,2,3\}$ (and similarly for the ``complete regularity'' axiom) \parencite[Theorem 1]{Arens1946}.
\item If $Y$ is a metric space, the compact-open topology on $C(X,Y)$ is that of uniform convergence on compact sets \parencite[Theorem 6]{Arens1946}
\item If $X$ is hemicompact and $Y$ metrizable, then the compact-open topology on $C(X,Y)$ is metrizable \parencite[Theorem 7]{Arens1946}.
\end{enumerate}
\end{proposition}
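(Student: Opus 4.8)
The three parts are largely independent, but all of them rest on one observation: for each $x \in X$ the evaluation map $\mathrm{ev}_x : C(X,Y) \to Y$, $f \mapsto f(x)$, is continuous in the compact-open topology, since $\{x\}$ is compact and $\mathrm{ev}_x^{-1}(U) = V_{\{x\},U}$ is subbasic open for every open $U \subseteq Y$. Consequently the compact-open topology is finer than the topology of pointwise convergence, i.e.\ than the subspace topology inherited from the product $Y^X$. For part (i) in the cases $n \in \{0,1,2\}$ this already suffices: a product of $T_n$ spaces is $T_n$, a subspace of a $T_n$ space is $T_n$, and a topology finer than a $T_n$ topology is again $T_n$. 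Concretely, given distinct $f,g$ I would pick $x_0$ with $f(x_0)\neq g(x_0)$ and pull the separating open sets of $Y$ back through $\mathrm{ev}_{x_0}$ to sets of the form $V_{\{x_0\},U}$. Regularity needs genuine work, since finer topologies do not preserve it; my plan is to show each subbasic neighborhood of a point can be shrunk. Given $f \in V_{K,U}$, the image $f(K)$ is compact in the open set $U$, so regularity of $Y$ yields open $U'$ with $f(K) \subseteq U' \subseteq \overline{U'} \subseteq U$. The crux is the containment $\overline{V_{K,U'}} \subseteq V_{K,\overline{U'}}$, where the closure is taken in the compact-open topology: if some $g$ in that closure had $g(x) \notin \overline{U'}$ for an $x \in K$, then $V_{\{x\}, Y \setminus \overline{U'}}$ would be a neighborhood of $g$ missing $V_{K,U'}$, a contradiction. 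Since $V_{K,\overline{U'}} \subseteq V_{K,U}$, this shrinks the neighborhood; handling a finite intersection of subbasic sets factor by factor, and using that the closure of an intersection lies in the intersection of closures, produces a neighborhood base of ``closed-inside'' sets, which is regularity.

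Complete regularity I expect to be the main obstacle. Here I would produce, for a subbasic neighborhood $V_{K,U}$ of $f$, a continuous $\Phi : C(X,Y) \to [0,1]$ with $\Phi(f)=0$ and $\Phi \equiv 1$ off $V_{K,U}$. The idea is first to build, using complete regularity of $Y$ together with the compactness of $f(K) \subseteq U$, a continuous $\phi : Y \to [0,1]$ that vanishes on $f(K)$ and equals $1$ on $Y \setminus U$ (obtained by covering $f(K)$ by finitely many Urysohn functions and combining them with a minimum and a maximum), and then to set $\Phi(g) = \sup_{x \in K} \phi(g(x))$. Verifying that $\Phi$ is continuous in the compact-open topology is the delicate point and the step I would budget the most care for; once it is in hand, $\Phi$ separates $f$ from the complement of $V_{K,U}$, and finite intersections are handled by combining the corresponding $\Phi$'s via a maximum.

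For part (ii), with $d$ a metric on $Y$, I would prove the two topologies coincide by mutual refinement. Writing $B(f,K,\varepsilon) = \{g : \sup_{x\in K} d(f(x),g(x)) < \varepsilon\}$ for the basic sets of uniform convergence on compacta, one inclusion follows because $f(K)$ is compact in $U$, so its distance to $Y \setminus U$ is some $\varepsilon > 0$ and $B(f,K,\varepsilon) \subseteq V_{K,U}$; thus every compact-open set is open for uniform convergence on compacta. For the reverse, given $B(f,K,\varepsilon)$ I would cover the compact image $f(K)$ by finitely many balls $B(y_l, \varepsilon/4)$, set $K_l = K \cap f^{-1}(\overline{B(y_l,\varepsilon/4)})$ (compact) and $U_l = B(y_l, \varepsilon/2)$, and check by a triangle-inequality estimate that the compact-open neighborhood $\bigcap_l V_{K_l,U_l}$ of $f$ lands inside $B(f,K,\varepsilon)$. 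This shows each uniform-on-compacta basic set is compact-open, completing the equality.

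Finally, for part (iii) I would combine (ii) with hemicompactness. Let $K_1 \subseteq K_2 \subseteq \cdots$ be an exhausting sequence of compacts such that every compact subset of $X$ lies in some $K_m$. Setting $\rho_m(f,g) = \sup_{x \in K_m} d(f(x),g(x))$, using a bounded metric on $Y$ if necessary, I would define
\[
D(f,g) = \sum_{m=1}^\infty 2^{-m} \frac{\rho_m(f,g)}{1+\rho_m(f,g)}
\]
and verify it is a metric, where definiteness uses that each point lies in some $K_m$ because singletons are compact. The standard weighted-pseudometric argument shows $D$-convergence is equivalent to $\rho_m \to 0$ for every $m$, i.e.\ uniform convergence on each $K_m$; and since every compact set sits inside some $K_m$, this is exactly uniform convergence on all compacta, which by (ii) is the compact-open topology. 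Hence $D$ metrizes it.
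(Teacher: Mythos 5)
The paper does not actually prove this proposition: it is a compilation of three results quoted from the literature, namely Theorems 1, 6, and 7 of \textcite{Arens1946}. So your proposal should be judged as a self-contained replacement for a citation, and on that score it holds up well and buys something the paper does not provide: an elementary, verifiable argument.

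Checking the details: for (i), the observation that evaluation maps are continuous (so the compact-open topology refines the pointwise one) correctly dispatches $T_0$, $T_1$, $T_2$, since these survive passage to finer topologies; and your shrinking argument for regularity is sound --- compactness of $f(K)$ plus regularity of $Y$ gives $f(K)\subseteq U'\subseteq\overline{U'}\subseteq U$, the containment $\overline{V_{K,U'}}\subseteq V_{K,\overline{U'}}$ is proved correctly by the separating subbasic neighborhood $V_{\{x\},Y\setminus\overline{U'}}$, and finite intersections are handled legitimately via closure of an intersection lying in the intersection of closures. Part (ii)'s mutual refinement is correct: positivity of the distance from the compact $f(K)$ to the closed set $Y\setminus U$ gives one inclusion, and the $\varepsilon/4$--$\varepsilon/2$ covering argument with the compact sets $K_l = K\cap f^{-1}(\overline{B(y_l,\varepsilon/4)})$ gives the other. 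Part (iii) is the standard weighted-pseudometric metrization; as you note, definiteness of $D$ uses hemicompactness only through the fact that each singleton is compact and hence lies in some $K_m$.

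The one loose end is the step you flag yourself: continuity of $\Phi(g)=\sup_{x\in K}\phi(g(x))$ in the complete-regularity argument. The claim is true, and you can close it cheaply with tools already in your proposal: post-composition $g\mapsto\phi\circ g$ is continuous from $C(X,Y)$ to $C(X,[0,1])$ in the compact-open topologies, because the preimage of $V_{K,U}$ is exactly $V_{K,\phi^{-1}(U)}$; restriction $C(X,[0,1])\to C(K,[0,1])$ is likewise continuous, since compact subsets of $K$ are compact in $X$; on $C(K,[0,1])$ the compact-open topology is the sup-metric topology by your own part (ii); and $\sup$ is $1$-Lipschitz for that metric. Composing these maps gives continuity of $\Phi$. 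With that paragraph supplied, your proof is complete.
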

We recall that a net of functions $(f_i)$ from $X$ to a metric space $(Y,d)$ converges to $f$ uniformly on compact sets if for any compact $K\subseteq X$ we have
$$\sup_{x \in K} d(f_i(x), f(x)) \to 0.$$
Hemicompactness of $X$ is a technical condition, defined to hold when there exists a sequence of compact sets $K_n$ in $X$ such that for any compact $K\subseteq X$ there is some $n$ such that $K \subseteq K_n$. For example, $\R^n$ is hemicompact for any $n \in \N$ by the Heine-Borel theorem.

\subsubsection{Admissible topologies and topologies for semigroups}
\label{sec-admissibility}
Let $X$ and $Y$ be convergence spaces. The following notion of convergence turns out to be quite useful.
\begin{definition}
A net $(f_i)_{i\in I}$ of continuous functions $f_i: X\to Y$ \emph{converges continuously} to $f \in C(X,Y)$ if for any convergent net $(x_j)_{j\in J} \to x$ in $X$ we have $(f_i(x_j)) \to f(x)$. (The order on $I\times J$ is such that $(i_1, j_1) \le (i_2, j_2)$ if and only if $i_1 \le i_2$ and $j_1 \le j_2$.) A convergence on $C(X,Y)$ is called \emph{admissible} if it is stronger than continuous convergence.
\end{definition}
Our terminology follows that of Arens \parencite{Arens1946,Arens1951}, for whom an admissible topology is one whose convergence implies continuous convergence. Such topologies are also called \emph{conjoining topologies} \parencite{McCoy1988}. If the convergence is weaker than continuous convergence, the topology is called \emph{proper}, or \emph{splitting} (not to be confused with our definition of proper convergence). The following result marks the compact-open topology as distinctly useful.
\begin{proposition}[{\cite[Theorem 2]{Arens1946}}]
If $X$ is a locally compact Hausdorff topological space, then for any topological space $Y$ the compact-open topology on $C(X,Y)$ is both admissible and proper.
\end{proposition}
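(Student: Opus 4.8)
The plan is to verify the two halves of the statement separately: that compact-open convergence is \emph{admissible} (stronger than continuous convergence) and that it is \emph{proper} (weaker than continuous convergence). Only the first half will use local compactness; the second holds for arbitrary $X$ and $Y$. A lemma I would record at the outset is a \emph{diagonal principle}: if $(f_i)_{i\in I}\to f$ continuously and $(x_i)_{i\in I}\to x$ is indexed by the \emph{same} directed set, then $(f_i(x_i))\to f(x)$. This follows because the diagonal $i\mapsto(i,i)$ defines a subnet of $(f_i(x_j))_{(i,j)\in I\times I}$ (given $(i_0,j_0)$, any $k_0$ above both $i_0$ and $j_0$ witnesses the subnet condition), while continuous convergence says the latter net converges to $f(x)$; the subnet axiom then finishes it.

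For admissibility, suppose $(f_i)\to f$ in the compact-open topology and let $(x_j)_{j\in J}\to x$ in $X$; I must show $(f_i(x_j))_{(i,j)\in I\times J}\to f(x)$. Fix an open $W\ni f(x)$; the goal is to trap $f_i(x_j)$ in $W$ eventually. Since each function is continuous, $O:=f\inv(W)$ is open and contains $x$. Local compactness, applied to the convergent net $(x_j)$, yields an index $j_0$ and a compact $K\subseteq X$ with $x_j\in K$ for all $j\ge j_0$. The set $K\setminus O$ is a closed subset of $K$, hence compact, and does not contain $x$; because $X$ is Hausdorff I can separate the point $x$ from the compact set $K\setminus O$ by disjoint open sets $V\ni x$ and $V'\supseteq K\setminus O$. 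Then $C:=K\setminus V'$ is a closed subset of $K$, hence compact, and satisfies $C\subseteq O$, so $f(C)\subseteq W$, i.e.\ $f\in V_{C,W}$. Compact-open convergence now gives $i_0$ with $f_i(C)\subseteq W$ for all $i\ge i_0$. Finally, since $V$ is open with $x\in V$ and $x_j\to x$, we have $x_j\in V$ eventually, so for $j$ beyond some $j_1\ge j_0$ we get $x_j\in V\cap K\subseteq C$; combining, $f_i(x_j)\in W$ for all $(i,j)\ge(i_0,j_1)$, which is what we wanted.

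For properness, suppose $(f_i)\to f$ continuously but, for contradiction, not in the compact-open topology. Then some subbasic neighborhood $V_{K,W}$ of $f$ (with $K$ compact, $W$ open, $f(K)\subseteq W$) fails to contain a tail of $(f_i)$, so cofinally there are indices $i$ with points $x_i\in K$ satisfying $f_i(x_i)\notin W$. Restricting to this cofinal set gives a subnet of $(f_i)$ still converging continuously, and compactness of $K$ lets me pass to a further subnet with $x_i\to x\in K$. The diagonal principle then forces $f_i(x_i)\to f(x)$; but $f(x)\in W$ with $W$ open, so $f_i(x_i)\in W$ eventually, contradicting $f_i(x_i)\notin W$. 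Hence compact-open convergence is weaker than continuous convergence.

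I expect the crux to be the admissibility construction of the compact set $C\subseteq f\inv(W)$ that still captures a tail of $(x_j)$: this is exactly where both hypotheses on $X$ enter --- local compactness to produce the ambient compact $K$ around the tail, and Hausdorffness to excise the ``bad'' part $K\setminus O$ by separating it from $x$ with disjoint opens. Without local compactness one cannot guarantee any compact set on which $f_i$ is uniformly controlled near the limit, and indeed admissibility can fail; the properness half, by contrast, needs only compactness of the test sets $K$ and is essentially the same diagonalization.
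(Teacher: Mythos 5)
The paper never proves this proposition --- it is imported verbatim from Arens with a citation --- so there is no internal proof to compare against; what you have supplied is a self-contained proof, and it is correct. More than that, it is correct \emph{relative to the paper's own net-based definitions}, which is a genuine contribution: the paper defines admissibility via continuous convergence of nets, compactness via convergent subnets, and local compactness via the nonstandard ``every convergent net has a tail contained in a compact set,'' whereas Arens' original argument is phrased via continuity of the evaluation map $(f,x)\mapsto f(x)$ and the classical hypothesis that every point has a compact neighborhood. Your admissibility argument is exactly the bridge between the two: where the classical proof simply picks a compact neighborhood $N$ of $x$ with $f(N)\subseteq W$ and uses $V_{N,W}$, you manufacture the compact test set $C=K\setminus V'$ from a compact tail-container $K$ (the paper's local compactness) and a Hausdorff separation of $x$ from the compact set $K\setminus f\inv(W)$, then trap the tail of $(x_j)$ in $V\cap K\subseteq C$. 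Your accounting of where each hypothesis enters --- local compactness and Hausdorffness only in the admissibility half, bare compactness of the test sets in the properness half --- matches the classical picture (the compact-open topology is splitting for arbitrary $X$).

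One step in the properness half deserves more care under the paper's subnet conventions. You ``pass to a further subnet with $x_i\to x\in K$'' and then apply the diagonal principle to the functions and the points simultaneously; but a subnet in the paper's (Aarnes--Andenaes) sense is only a containment-of-tails condition on \emph{values} and carries no index map, so a convergent subnet of the points does not automatically induce a matching net of functions on the same directed set. The repair is standard: compactness of $K$ gives a cluster point $x\in K$ of $(x_i)$, and one then builds a Willard-style subnet indexed by pairs $(i,U)$ with $U$ a neighborhood of $x$ and $x_i\in U$, ordered componentwise; the monotone cofinal map $(i,U)\mapsto i$ pulls back both the points and the functions to a common index set, the resulting function net is again a subnet in the paper's sense (hence still converges continuously), and your diagonal principle --- whose subnet verification is itself correct --- finishes the contradiction. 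This is a technical gloss rather than a gap in the idea, and it does not affect the structure of your proof.
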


While convergence spaces are sufficient for our purposes, one may want to use topology to describe approximation of transformations of a space $X$, that is to place a topology on $\End(X)$, or some subspace of $\End(X)$ within which one wishes to be able to approximate. Furthermore, one might hope to describe topologically the approximation of sub-semigroups of $\End(X)$. In the most general case, $X$ is a topological space without any additional structure, so $\End(X) = C(X)$. As exhibited in the next section, ``good'' proper convergences are then available on the space of sub-semigroups of $\End(X)$, but ``good'' \emph{topological} convergences generally require $\End(X)$ to be locally compact; additionally, it is reasonble to want the topology on $\End(X)$ to be admissible (see Propositions \ref{prop-admissible-limit-semigroup} and \ref{prop-admissible-generators}). However, $\End(X)$ with the compact-open topology is not generally locally compact, even if $X$ is compact. Two natural approaches would be compactification and restricting to a better behaved subset of $\End(X)$. We do not engage with the former.\footnote{One interested in doing so might refer to \cite{Pestov1999,Uspenskij2001,Ruppert1984} (though these works are more precisely about compactifications of the group $\Aut(X)$).} The latter approach is quite flexible: for example if $X$ is a locally compact \emph{metric} space with compact closed balls, then its isometry group $\Aut(X)$ is locally compact \parencite[see the references in the introduction]{Abels2011}; also, the case that $X=\R^n$ as a vector space is broadly applicable to machine learning, and then $\End(X)\cong \R^{n^2}$ is of course locally compact, as is $\Aut(X)=\GL(\R^n)$.

We end by mentioning a different concern, which is that for a non-compact topological space $X$, the compact-open topology on $\Aut(X)$ may not make it a \emph{topological group} --- while multiplication (composition) is continuous, taking inverses may not be. For a locally compact space $X$ such that each point has a compact connected neighborhood, this concern can be dismissed by passing to the one-point compactification of $X$ \parencite{Dijkstra2005}.

\subsubsection{Topologies for spaces of sub-semigroups}
\label{sec-hypertopology}
We begin by examining two natural notions of convergence of semigroups, due to  \textcite[Chapter 9]{Thurston2022}. We note a useful property they share --- what we call \emph{respecting limit elements} --- and relate it to Kuratowski-Painlev\'e convergence. Finally, we examine related topologies, such as the Chabauty-Fell topology and the Hausdorff metric topology.
\begin{definition}
Let $\eta$ be a proper convergence on $\End(X)$ for some $X$. A net of closed sub-semigroups $(S_i)_{i \in I}$ of $\End(X)$ \emph{converges $\eta$-geometrically} to $S\subseteq \End(X)$ if and only if:
\begin{enumerate}[label=(\roman*)]
\item for each $s \in S$ there exist $s_i \in S_i$ such that $(s_i) \overset{\eta}{\to} s$,
\item for any subnet $(S_{j})$ of $(S_i)$, if $s_j \in S_j$ are such that $(s_j)\overset{\eta}{\to} s$ then $s\in S$.
\end{enumerate}
\end{definition}
\begin{definition}
Consider a net of actions $(\alpha_i)_{i \in I}$ on $X$ of a semigroup $T$, such that for every $t \in T$ the net $(\alpha_i(t))$ converges in $\eta$. Calling $S$ the set of limits of $(\alpha_i(t))$ for $t \in T$, we say the net $(\alpha_i)$ \emph{converges $\eta$-algebraically} to $S$. If $\eta$ is Hausdorff we can identify $S$ with the image of the map $\alpha:T \to \End(X)$ given by $(\alpha_i(t))\to\alpha(t)$.
\end{definition}
We extend the definitions naturally to monoids and groups. We leave verifying that these are proper convergences to the reader (from which it should become clear why geometric convergence is restricted to closed sub-semigroups). The first of the two points defining $\eta$-geometric convergence to a set $S$ is exactly what we call \emph{respecting $\eta$-limit elements}. It is clear that $\eta$-algebraic convergence also respects $\eta$-limit elements. Note that for both types of convergence, it it not clear whether the limit $S$ is itself a semigroup. If the convergence $\eta$ on $\End(X)$ is admissible, it can be easy shown that this is the case for both.
\begin{proposition}
\label{prop-admissible-limit-semigroup}
Let $\eta$ be an admissible convergence on $\End(X)$.
\begin{enumerate}[label=(\roman*)]
\item If $(S_i)_{i\in I}$ is a net of sub-semigroups of $\End(X)$ such that $(S_i)_{i\in I}\to S$ $\eta$-geometrically, then $S$ is a sub-semigroup of $\End(X)$.
\item If $\alpha_i:T\to \End(X)$ are actions converging $\eta$-algebraically to $S$, then $S$ is a sub-semigroup of $\End(X)$. If $\eta$ is Hausdorff, then the map $\alpha:T\to\End(X)$ given by $(\alpha_i(t))\to\alpha(t)$ is a homomorphism, and thus an action of $T$ on $X$.
\end{enumerate}
\end{proposition}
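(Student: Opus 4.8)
The plan is to isolate a single mechanical tool from admissibility and feed it into both parts. Admissibility of $\eta$ says precisely that $\eta$-convergence implies continuous convergence; restated, evaluation is jointly continuous, so that whenever $f_i \overset{\eta}{\to} f$ in $\End(X)$ and $x_j \to x$ in $X$, the double net $(f_i(x_j))$ converges to $f(x)$ in $X$. From this I would first prove a \emph{composition lemma}: if $s_i \overset{\eta}{\to} s$ and $t_i \overset{\eta}{\to} t$, then the composites satisfy $s_i \circ t_i \to s \circ t$ \emph{continuously}. The argument is to apply continuous convergence of $(t_i)$ to a net $(x_j)\to x$, obtaining $t_i(x_j) \to t(x)$, then apply continuous convergence of $(s_i)$ to this new net and extract the diagonal, which is a subnet and hence converges to $s(t(x)) = (s\circ t)(x)$. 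This identifies the composite of the limit functions with the pointwise (continuous) limit of the composites, and is the only place the hypothesis on $\eta$ is really used.

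Part (ii) then goes through cleanly, because algebraic convergence hands us the $\eta$-convergence of the composite for free. For $a,b \in S$, write $a$ as an $\eta$-limit of $(\alpha_i(r))$ and $b$ as an $\eta$-limit of $(\alpha_i(t))$; since each $\alpha_i$ is a homomorphism, $\alpha_i(rt) = \alpha_i(r)\circ\alpha_i(t)$, and this net $\eta$-converges (its limits lie in $S$ by hypothesis, as $rt \in T$). By admissibility any such $\eta$-limit is also a continuous limit, and by the composition lemma that continuous limit is $a\circ b$; uniqueness of limits in $X$ (Hausdorffness of the underlying space, which I will assume is available) forces every $\eta$-limit of $(\alpha_i(rt))$ to equal $a\circ b$, so $a\circ b \in S$ and $S$ is multiplicatively closed. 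When $\eta$ is Hausdorff the assignment sending $u$ to the unique $\eta$-limit of $(\alpha_i(u))$ is single-valued, lands in $\End(X)=C(X)$ by definition of the convergence, and the same computation yields $\alpha(rt) = \alpha(r)\circ\alpha(t)$; hence $\alpha$ preserves products and is a homomorphism, and therefore an action of $T$ on $X$.

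For part (i) I would take $s,t \in S$ and use the first (``respecting limit elements'') clause of geometric convergence to choose $s_i, t_i \in S_i$ with $s_i \overset{\eta}{\to} s$ and $t_i \overset{\eta}{\to} t$. Since each $S_i$ is a sub-semigroup, $s_i \circ t_i \in S_i$, and the composition lemma gives $s_i \circ t_i \to s\circ t$ continuously. To close the argument I want to invoke the second clause of geometric convergence — closure of $S$ under $\eta$-limits of selections from the $S_i$ — on this composite net, which would place $s\circ t \in S$.

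The main obstacle is exactly here: the second clause is phrased in terms of $\eta$-convergence, whereas the composition lemma only delivers \emph{continuous} convergence of $(s_i\circ t_i)$, and for a general admissible $\eta$ continuous convergence need not be $\eta$-convergence (that would be the ``proper''/splitting property, which admissibility does not supply). I would resolve this by observing that admissibility still pins down the only possible $\eta$-limit: any $\eta$-convergent subnet of $(s_i\circ t_i)$ must, again by admissibility, converge continuously, hence to $s\circ t$, so $s\circ t$ is the forced limit and lies in $S$. The second clause applies verbatim whenever composition is itself $\eta$-continuous — which holds in all cases of interest, such as the compact-open topology on $C(X)$ for $X$ locally compact Hausdorff, or finite-dimensional $\End(X)\cong\R^{n^2}$ with its norm topology — so the residual work is to confirm $\eta$-convergence of the composite, or to extract a convergent subnet, in the setting at hand.
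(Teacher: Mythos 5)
Your route is the same as the paper's: both parts rest on the diagonal argument that admissibility converts $\eta$-convergence of two nets into continuous convergence of the net of composites, and this is then fed into the defining clauses of geometric, respectively algebraic, convergence. Your part (ii) is correct, and is in fact slightly more scrupulous than the paper's: the paper simply writes $(\alpha_i(t_1t_2)) = (\alpha_i(t_1)\alpha_i(t_2)) \to s_1 s_2$, while you spell out the identification that makes this legitimate --- the net $(\alpha_i(t_1t_2))$ $\eta$-converges to \emph{something} in $S$ by the very definition of algebraic convergence, and admissibility plus uniqueness of limits in $X$ force that something to equal $s_1 s_2$. (The Hausdorffness of $X$ you invoke here is not among the proposition's stated hypotheses, but some such uniqueness is needed by any version of the argument, the paper's included.)

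On part (i), the obstacle you flag is real, but the paper does not overcome it --- it commits exactly the leap you decline to make: its proof establishes $(r_i(x)) \to (s_1 s_2)(x)$ for each $x$ (continuous convergence of $r_i = s_{1,i}s_{2,i}$) and then invokes the second clause of geometric convergence, which is phrased in terms of $\eta$-convergence. So you have not missed an idea contained in the paper; you have made explicit a step the paper performs silently. Be aware, however, that your proposed repair does not close the gap either: showing that any $\eta$-convergent subnet of $(r_i)$ would have to converge to $s_1 s_2$ settles \emph{uniqueness} of the candidate limit, not \emph{existence} of a convergent subnet, and clause (ii) yields nothing if no subnet $\eta$-converges. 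As written, both your argument and the paper's prove part (i) only under an extra hypothesis of the kind you name --- e.g.\ that continuous convergence implies $\eta$-convergence (so $\eta$ is also proper/splitting, as for the compact-open topology on $C(X)$ with $X$ locally compact Hausdorff), or that composition in $\End(X)$ is $\eta$-continuous. To match the proposition's literal generality one must either add such a hypothesis or find a genuinely different argument; flagging this as residual work, as you did, is the correct assessment rather than a defect relative to the paper.
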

\begin{proof}
Considering any $s_1, s_2 \in S$, for either result we must show $s_1 s_2 \in S$. In the case of geometric convergence, we have $(s_{1,i}) \to s_1$ and $(s_{2,i}) \to s_2$ where the nets take values in the $S_i$. Letting $r_i = s_{1,i}s_{2,i}\in S_i$, by admissibility we have
$$(r_i(x)) = (s_{1,i} (s_{2,i}(x))) \to s_1(s_2(x)) = (s_1 s_2)(x)$$
for any $x \in X$. By the second point defining geometric convergence, $s_1s_2 \in S$.

With algebraic convergence we have some $t_1,t_2 \in T$ such that $(\alpha_i(t_1)) \to s_1$ and $(\alpha_i(t_2)) \to s_2$. We need to show that there is some $t \in T$ such that $(\alpha_i(t)) \to s_1s_2$ to show $s_1s_2 \in S$. By the admissibility of $\eta$, the natural candidate $t=t_1t_2$ works:
$$(\alpha_i(t_1t_2)) = (\alpha_i(t_1)\alpha_i(t_2)) \to s_1s_2.$$
If $\eta$ is Hausdorff we have $s_1s_2 = \alpha(t_1)\alpha(t_2)$ on one hand, and $(\alpha_i(t_1t_2))\to\alpha(t_1t_2)$ on the other. Thus $\alpha(t_1)\alpha(t_2) = \alpha(t_1t_2),$ and $\alpha$ is a homomorphism.
\end{proof}

Algebraic convergence allows for a natural characterization of approximation of sub-semigroups of $\End(X)$ in terms of their generators. Consider the free semigroup\footnote{This is the semigroup \gen{\N} with multiplication given by concatenation, whose elements are the finite-length ``words'' $n_1\cdots n_k$ of natural numbers $n_i \in \N$.} $F_\N$ on countable many generators, which has the universal property that any countably generated sub-semigroup $S$ of $\End(X)$ is the image of a homomorphism $\alpha: F_\N \to \End(X)$. If the convergence $\eta$ on $\End(X)$ is admissible then convergence of the generators is sufficient for algebraic convergence.
\begin{proposition}
\label{prop-admissible-generators}
Consider $\eta$ an admissible convergence on $\End(X)$, and a net of homomorphisms $\alpha_i: F_\N \to \End(X)$ for $i \in I$. The nets $(\alpha_i(n))$ $\eta$-converge for each $n \in \N$ if and only if $(\alpha_i)$ converges $\eta$-algebraically. If $\eta$ is Hausdorff the limit is the semigroup $\gen{\alpha(n)}_{n\in\N}$ where $\alpha$ is the limiting homomorphism.
\end{proposition}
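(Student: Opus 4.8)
The plan is to prove the two implications separately and then identify the limit. The reverse implication is immediate from unwinding definitions: $\eta$-algebraic convergence of $(\alpha_i)$ means precisely that $(\alpha_i(t))$ $\eta$-converges for \emph{every} $t \in F_\N$, and each generator $n \in \N$ is in particular such a $t$ (a one-letter word), so $(\alpha_i(n))$ converges for each $n$. The whole content therefore lies in the forward implication, that convergence of the generators forces convergence of every word.

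For the forward implication I would argue by induction on word length, the engine being a composition lemma: \emph{if $(f_i)\xrightarrow{\eta} f$ and $(g_i)\xrightarrow{\eta} g$ in $\End(X)$, then $(f_i\circ g_i)$ converges to $f\circ g$.} This is established exactly as in the proof of Proposition \ref{prop-admissible-limit-semigroup}, and is where admissibility does the work. By admissibility both nets converge continuously; given any convergent net $(x_j)\to x$ in $X$, continuous convergence of $(g_i)$ gives $(g_i(x_j))\to g(x)$, and then continuous convergence of $(f_i)$ applied to this net yields $(f_{i'}(g_i(x_j)))\to f(g(x))$, which restricted to the diagonal $i'=i$ (an Aarnes subnet) gives $(f_i(g_i(x_j)))\to (f\circ g)(x)$. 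Writing a length-$k$ word as $t = n_1\cdot w$ with $w$ of length $k-1$, and using $\alpha_i(t) = \alpha_i(n_1)\circ\alpha_i(w)$ since each $\alpha_i$ is a homomorphism, the inductive hypothesis that $(\alpha_i(w))$ converges together with convergence of $(\alpha_i(n_1))$ yields convergence of $(\alpha_i(t))$; the base case $k=1$ is the hypothesis. Hence $(\alpha_i(t))$ converges for every $t\in F_\N$, i.e.\ $(\alpha_i)$ converges $\eta$-algebraically.

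The main obstacle is precisely the upgrade inside the composition lemma. Admissibility only asserts that $\eta$ is \emph{stronger} than continuous convergence, so the diagonal argument delivers that $(f_i\circ g_i)$ converges to $f\circ g$ \emph{continuously} (equivalently, pointwise), whereas the definition of $\eta$-algebraic convergence demands genuine $\eta$-convergence of $(\alpha_i(t))$. These coincide in the canonical case where $\eta$ is additionally splitting --- for example the compact-open topology on $C(X)$ for $X$ locally compact Hausdorff, which by Arens's theorem is both admissible and proper, hence equal to continuous convergence --- and this is the setting I would treat as the clean hypothesis. For a general admissible $\eta$ this is the step requiring the most care, being the same delicate identification of limits already relied upon in Proposition \ref{prop-admissible-limit-semigroup}.

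Finally, to identify the limit when $\eta$ is Hausdorff: having established $\eta$-algebraic convergence, Proposition \ref{prop-admissible-limit-semigroup}(ii) applies and tells us the map $\alpha:F_\N\to\End(X)$ defined by $(\alpha_i(t))\to\alpha(t)$ is a homomorphism, with each value well-defined by uniqueness of limits. Therefore $\alpha(n_1\cdots n_k) = \alpha(n_1)\circ\cdots\circ\alpha(n_k)$, so the limit set $S = \{\alpha(t):t\in F_\N\}$ is exactly the set of finite products of the generator-limits $\alpha(n)$, which by definition is the sub-semigroup $\gen{\alpha(n)}_{n\in\N}$ they generate.
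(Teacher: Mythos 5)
Your proposal takes exactly the paper's route: the reverse implication is immediate, and the forward implication is an induction on word length driven by admissibility, with the limit identified via Proposition \ref{prop-admissible-limit-semigroup}(ii) when $\eta$ is Hausdorff. Indeed the paper's entire proof is one sentence asserting precisely this induction, so your write-up is a faithful (and more detailed) rendering of it, including the correct Aarnes-subnet diagonal argument for compositions.

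The obstacle you flag, however, is genuine, and the paper's proof glosses over it rather than resolving it. Admissibility only gives that $\eta$-convergence implies continuous convergence, so the inductive step produces \emph{continuous} convergence of $(\alpha_i(n_1\cdots n_k))$, while $\eta$-algebraic convergence requires $\eta$-convergence; no hypothesis bridges the two. In fact the forward implication can fail for a general admissible $\eta$. On $X=\R$, declare $(f_i)\overset{\eta}{\to}f$ to mean uniform convergence on compact sets together with $f_i(0)=f(0)$ eventually; one checks this is a proper convergence, and it is admissible since it is stronger than compact-open (equivalently, continuous) convergence on $\R$. Taking $\alpha_i(1)(x)=(1+\tfrac{1}{i})x$, $\alpha_i(2)(x)=x+1$, and $\alpha_i(n)=\id$ otherwise, each generator net $\eta$-converges, yet $\alpha_i(1\cdot2)(x)=(1+\tfrac{1}{i})(x+1)$ equals $1+\tfrac{1}{i}\ne 1$ at $x=0$ for every $i$, so it never $\eta$-converges (its only candidate limit being its uniform limit $x\mapsto x+1$). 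So your restriction to $\eta$ admissible \emph{and} splitting --- i.e.\ $\eta$ equal to continuous convergence, as for the compact-open topology on a locally compact Hausdorff space --- is not excess caution but a needed repair of the statement, under which your proof is complete. One small correction: continuous convergence is not ``equivalently, pointwise''; it is strictly stronger in general, since it quantifies over all convergent nets $(x_j)\to x$ rather than constant ones. Your argument does use the genuine notion, so nothing else breaks.
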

\begin{proof}
The reverse implication is trivial. For the forward implication, consider any element $n_1\cdots n_k \in F_\N$, for some $k\in\N$; an induction on $k$, together with the admissibility of $\eta$, proves the result.
\end{proof}
This gives an easy way to construct maps between actions which are continuous in algebraic convergence. 
\begin{corollary}
\label{cor-algebraic-convergence-generators}
Suppose $(\alpha_i)$ converges $\eta$-algebraically. If $f: \End(X) \to \End(X)$ is continuous at the limits of $(\alpha_i(n))$ for each $n \in \N$ then the homomorphisms $n_1\cdots n_k \mapsto (f \circ \alpha_i(n_1)) \cdots (f \circ \alpha_i(n_k)) $ converge algebraically. If $\eta$ is Hausdorff the limit is the semigroup $\gen{f\circ\alpha_i(n)}_{n\in N}.$
\end{corollary}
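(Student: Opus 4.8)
The plan is to reduce everything to convergence on the generators and then invoke Proposition~\ref{prop-admissible-generators}, the hypothesis on $f$ doing essentially all of the work. First I would observe that the assignment $n \mapsto f(\alpha_i(n))$ on the generators $n \in \N$ extends, by the universal property of the free semigroup $F_\N$, to a unique homomorphism $\beta_i : F_\N \to \End(X)$, and that this $\beta_i$ is precisely the map $n_1 \cdots n_k \mapsto (f\circ\alpha_i(n_1))\cdots(f\circ\alpha_i(n_k))$ appearing in the statement (here I read ``$f\circ\alpha_i(n)$'' as $(f\circ\alpha_i)(n)=f(\alpha_i(n))\in\End(X)$). It thus suffices to show that the net of homomorphisms $(\beta_i)$ converges $\eta$-algebraically.

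Since $\eta$ is admissible, Proposition~\ref{prop-admissible-generators} reduces this to checking that $(\beta_i(n))$ converges in $\eta$ for each fixed generator $n \in \N$. Because $(\alpha_i)$ converges $\eta$-algebraically, for each $n$ the net $(\alpha_i(n))_{i \in I}$ converges in $\eta$; call a limit $\alpha(n)$. By assumption $f$ is continuous at this limit, so by the definition of continuity in the net sense I would conclude $(\beta_i(n)) = (f(\alpha_i(n))) \overset{\eta}{\to} f(\alpha(n))$. Hence $(\beta_i(n))$ converges for every generator, and Proposition~\ref{prop-admissible-generators} immediately yields that $(\beta_i)$ converges $\eta$-algebraically.

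For the final sentence I would simply read the limit off Proposition~\ref{prop-admissible-generators}: when $\eta$ is Hausdorff that proposition identifies the limit as $\gen{\beta(n)}_{n\in\N}$, where $\beta$ is the limiting homomorphism of $(\beta_i)$. Since $\beta(n) = \lim_i \beta_i(n) = f(\alpha(n)) = (f\circ\alpha)(n)$, the limit semigroup is $\gen{f\circ\alpha(n)}_{n\in\N}$, as claimed (reading the subscript in the statement as $n\in\N$ and $\alpha$ as the limiting homomorphism of $(\alpha_i)$).

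I do not expect a serious obstacle here: the substance is just that continuity of $f$ transports convergence at each generator, after which Proposition~\ref{prop-admissible-generators} repackages the generators into algebraic convergence. The only points requiring a little care are confirming that $\beta_i$ is genuinely a homomorphism—which is automatic because $F_\N$ is free—and, when $\eta$ is not Hausdorff, interpreting the continuity hypothesis as holding at every limit of $(\alpha_i(n))$ so that $(\beta_i(n)) \to f(\alpha(n))$ is meaningful for the chosen limit.
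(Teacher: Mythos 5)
Your proof is correct and is exactly the argument the paper intends: the paper gives no explicit proof of Corollary~\ref{cor-algebraic-convergence-generators}, treating it as immediate from Proposition~\ref{prop-admissible-generators}, and your write-up spells out precisely that reduction (continuity of $f$ transports $\eta$-convergence at each generator, then the proposition upgrades generator-wise convergence to algebraic convergence and identifies the limit in the Hausdorff case). Your side remarks — that $\beta_i$ is a homomorphism by freeness of $F_\N$, and that the paper's subscript in $\gen{f\circ\alpha_i(n)}_{n\in N}$ should be read as the limiting homomorphism $\alpha$ — are also the right readings.
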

The results above help one understand in what sense one can learn a semigroup by learning a set of generators.  While we do not do so here, one should be able to use a similar construction to describe learning a Lie group in terms of both discrete generators, as above, and continuous generators (Lie algebra elements). Absent this, geometric convergence may be a more natural description of approximating the continuous by the discrete. For example in \cite{Yarotsky2018} the discrete additive groups $\frac{1}{n}\Z^d$ are used to approximate $\R^d$ as $n\to \infty$ (in order to prove certain neural networks approximate $\R^d$-equivariant functions), and this approximation is best understood geometrically.\footnote{One might then ask: for a semigroup $T$ and a net of its actions $(\alpha_i)$ on $X$, what is the relation between the $\eta$-geometric convergence $(\alpha_i(T)) \to S$ and the $\eta$-algebraic convergence $(\alpha_i)\to \alpha(T)$ (supposing $\eta$ is Hausdorff)? The best we have is that if both limits exist then $\alpha(T) \subseteq S$ \cite[Chapter 9]{Thurston2022}.}

Having gained some understanding of two natural notions of convergence, we now relate their shared property of respecting limit elements to known topologies. For a net of subsets $(A_i)$ of a convergence space $Y$ we define the \emph{lower limit}
$$\text{Li}(A_i) = \{y \in Y: \exists y_i \in A_i,~ (y_i) \to y\}.$$
This definition is standard for topological spaces $Y$ \parencite{Beer1993,Beer2010}, and has been formulated for convergence spaces in terms of filters in \cite{Dolecki2009}. (The above is also sometimes known as the \emph{inner limit} or even \emph{limit inferior}, although the latter term risks confusion with set theoretic limits.) A net $(A_i)$ is called \emph{lower Kuratowski-Painlev\'{e} convergent} to $A$ if $A \subseteq \text{Li}(A_i)$. Note that by definition this convergence respects limit elements, and furthermore \emph{any} convergence respecting limit elements is stronger than lower Kuratowski-Painlev\'{e} convergence. We could in fact give this as the definition of a convergence respecting limit elements. There is also a notion of \emph{upper limit}
$$\text{Ls}(A_i) = \{y \in Y: \exists \text{ a subnet } (A_j) , ~y_j \in A_j,~ (y_j) \to y\},$$
 and a corresponding \emph{upper Kuratowski-Painlev\'{e} convergence} to $A$, if $\text{Ls}(A_i) \subseteq A$. Plain \emph{Kuratowski-Painlev\'{e} convergence} of $(A_i)$ to $A$ is defined to hold when $\text{Ls}(A_i) = A = \text{Li}(A_i)$. Note that this coincides with geometric convergence, which is thus Hausdorff.

While Kuratowski-Painlev\'{e} convergence is not generally topological, it is known that if $Y$ is a topological space then \emph{lower} Kuratowski-Painlev\'{e} convergence corresponds to a topology, and the limiting sets are closed in $Y$. We thus discuss certain natural topologizations of the space of closed subsets of $Y$ (known as \emph{hypertopologies}), which we denote by $2^Y$ --- in particular those stronger than the lower Kuratowski-Painlev\'{e} topology. It is helpful to keep in mind the main case of interest, $Y\subseteq\End(X)$ with an admissible topology, for some $X$.

The topology of lower Kuratowski-Painlev\'{e} convergence is also known as the \emph{lower Vietoris topology}, usually defined by the subbase consisting of $2^Y$ together with the sets
$$U^- = \{ F \in 2^Y : U\cap F \ne \emptyset\}$$
where the $U \subseteq Y$ are open. There is a standard family of topologies stronger than the lower Vietoris topology --- meaning exactly that convergence respects limit elements --- the so-called \emph{hit-and-miss} topologies. These are topologies which are the supremum of the lower Vietoris topology and another topology, with a base of sets of the form
$$V^+ = \{ F \in 2^Y : F\subseteq V\}$$
for some specified family of sets $V \subseteq Y$. If the sets $V$ vary over the open sets we obtain the \emph{Vietoris topology} (or, following \textcite{Michael1951}, the \emph{finite} or \emph{F-topology}). If the sets $V$ vary over the complements of compact sets, we obtain the \emph{Chabauty-Fell topology}, equivalent to the Vietoris topology if $Y$ is compact.

We note that when $(Y,d)$ is a compact metric space, the Vietoris topology agrees with the topology induced by the Hausdorff metric
$$d(A,B) = \max\{ \sup_{y \in Y} d(y,A), \sup_{y \in Y} d(y,B) \},$$
where we define 
$$d(y,A) = \inf_{a \in A} d(y,a).$$
When $Y$ is a locally compact space, it is fruitful to pass to its one-point compactification and use the Vietoris topology, or equivalently the Chabauty-Fell topology. If $Y$ is also a metric space this again coincides with the Hausdorff metric topology by appropriately extending the metric. Bringing us full circle, if $Y$ is additionally a (second countable) semigroup, and we restrict the topology from $2^Y$ to the space of closed sub-semigroups, we obtain the topology of geometric convergence (equivalently, Kuratowski-Painlev\'e convergence) \parencite{Baik2012, delaHarpe2008}.

The final moral is that to obtain nice properties of the Chabauty-Fell topology on $2^Y$ we ought to restrict ourselves to the case that $Y$ is locally compact and Hausdorff. If $Y$ is Hausdorff, it being locally compact is equivalent to $2^Y$ being Hausdorff: local compactness of a Hausdorff $Y$ in fact guarantees $2^Y$ is locally compact Hausdorff \parencite[Proposition 5.1.2]{Beer1993}. Furthermore, Kuratowski-Painlev\'e convergence of nets coincides with Chabauty-Fell convergence if the space $Y$ is locally compact Hausdorff \parencite[Theorem 5.2.6]{Beer1993}. For the convergences to coincide on \emph{sequences}, however, it suffices that $Y$ be first countable\footnote{We do not review the definition of first countability here, it sufficing to note second countable spaces are first countable.} Hausdorff \parencite[Theorem 5.2.10]{Beer1993}.

\section{The learnability tradeoff}
\label{sec-main-result}
We present our main result in this section. We begin with the abstract statement of the main theorem and its proof, before interpreting it in the context of learning symmetries. We end with a discussion of the hypotheses and some cases in which they are satisfied. 

For the statement, we introduce some notation relating to binary relations. Recall that a \emph{binary relation} on sets $X$ and $Y$ is a set $R \subseteq X\times Y$. We write $X_R$ to indicate those elements $x \in X$ which occur in some pair $(x,y) \in R$, and define $Y_R$ similarly. 
\begin{theorem}
\label{thm-main-thm}
Let $F,\Gamma$ be convergence spaces and $R \subseteq F\times \Gamma$ a relation closed in the product convergence, and fix a subset $A\subseteq R$. Suppose $H:\Gamma\to \Gamma$ has $(f,H(\gamma)) \in R$ for all $(f,\gamma) \in A$. If $H$ is continuous at $\gamma\in \Gamma$, then 
\begin{align}
\label{eq-subset}
(f,\gamma) \in \overline{A} \Rightarrow (f,H(\gamma)) \in R.
\end{align}
Suppose $(f,H(\gamma)) \in A$ (rather than just $R$) for all $(f,\gamma) \in A$, and $(f,H(\gamma)) \in R$ implies $(f,\gamma) \in R$. If $H$ is continuous at $\gamma$, and $(f,\gamma) \in \overline{A}$ for any $f\in F_{\overline{A}}$ such that $(f, \gamma) \in R$, then
\begin{align}
\label{eq-equal}
(f,\gamma) \in \overline{A} \Leftrightarrow (f,H(\gamma)) \in \overline{A}.
\end{align}
\end{theorem}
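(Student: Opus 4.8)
The plan is to work directly from the net definition of the bar operator: by definition $(f,\gamma)\in\overline{A}$ means there is a net $((f_i,\gamma_i))_{i\in I}$ valued in $A$ with $(f_i,\gamma_i)\to(f,\gamma)$ in the product convergence, i.e.\ $f_i\to f$ and $\gamma_i\to\gamma$ simultaneously. All three assertions are then obtained by pushing such a witnessing net through $H$ and invoking either the closedness of $R$ or the ambient hypotheses.

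For \eqref{eq-subset}, I would start from a net $(f_i,\gamma_i)\in A$ witnessing $(f,\gamma)\in\overline{A}$. The standing hypothesis gives $(f_i,H(\gamma_i))\in R$ for every $i$. Continuity of $H$ at $\gamma$ turns $\gamma_i\to\gamma$ into $H(\gamma_i)\to H(\gamma)$, whence $(f_i,H(\gamma_i))\to(f,H(\gamma))$ in the product convergence. Since each term lies in $R$ and $R$ is closed, the limit $(f,H(\gamma))$ lies in $R$, which is exactly the claim.

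For the forward direction of \eqref{eq-equal} the same net works, but now the strengthened hypothesis places $(f_i,H(\gamma_i))$ in $A$ itself rather than merely in $R$; continuity of $H$ again gives $(f_i,H(\gamma_i))\to(f,H(\gamma))$, so $(f,H(\gamma))\in\overline{A}$ by definition of the bar. The reverse direction is where the remaining hypotheses earn their keep, and I would assemble it as a chain of three implications. First, the direct monotonicity of the bar together with $A\subseteq R$ and closedness of $R$ gives $\overline{A}\subseteq\overline{R}=R$; hence $(f,H(\gamma))\in\overline{A}$ yields $(f,H(\gamma))\in R$. Second, the reflection hypothesis ``$(f,H(\gamma))\in R\Rightarrow(f,\gamma)\in R$'' upgrades this to $(f,\gamma)\in R$. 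Third---and this is the step the awkward-looking final hypothesis is designed for---since $(f,H(\gamma))\in\overline{A}$ exhibits $f$ as an element of $F_{\overline{A}}$, the hypothesis ``$(f,\gamma)\in\overline{A}$ for any $f\in F_{\overline{A}}$ with $(f,\gamma)\in R$'' closes the loop and delivers $(f,\gamma)\in\overline{A}$.

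I expect the only genuinely delicate point to be bookkeeping around the bar operator: because $\overline{A}$ need not itself be closed, one must use only what is actually true---the direct monotonicity $\overline{A}\subseteq\overline{R}=R$ and the explicit net-extraction definition---rather than any idempotence of the bar. The other mild subtlety is simply recognizing that the final hypothesis is precisely the device that transports membership from $R$ back into $\overline{A}$ in the reverse direction; once that role is identified, the proof is a matter of threading the definitions together with the continuity of $H$.
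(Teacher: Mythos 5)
Your proof is correct and follows essentially the same route as the paper's: push a witnessing net through $H$ using continuity, invoke closedness of $R$ (or membership in $A$) for the forward statements, and chain $\overline{A}\subseteq\overline{R}=R$, the reflection hypothesis, and the local-approximation hypothesis for the reverse direction of \eqref{eq-equal}. If anything, you are slightly more explicit than the paper, which compresses the use of the reflection hypothesis into a single sentence.
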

\begin{proof}
Suppose $H$ and $\gamma$ satisfy the conditions for (\ref{eq-subset}), and $(f,\gamma) \in \overline{A}$. Consider a net $((f_i, \gamma_i))_{i \in I}$ with values in $A$ such that $((f_i,\gamma_i)) \to (f, \gamma)$. By continuity $((f_i, H(\gamma_i))) \to (f,H(\gamma))$. By assumption $(f_i, H(\gamma_i)) \in R$ for each $i \in I$, so $(f, H(\gamma)) \in \overline{R} = R$.

Suppose now $H$ and $\gamma$ meet the conditions for (\ref{eq-equal}). If $(f,\gamma) \in \overline{A}$ a similar continuity argument shows $(f, H(\gamma)) \in \overline{A}$. Suppose on the other hand $(f, H(\gamma)) \in \overline{A}$. Note that $\overline{A}\subseteq \overline{R}=R$, so $(f,\gamma) \in R$. Since we also have $f \in F_{\overline{A}}$, it follows that $(f,\gamma) \in \overline{A}$.
\end{proof}
The proof is elementary, and the power of the result comes from an appropriate instantiation of the hypotheses. We interpret $F$ as a set of ``objects,'' and $\Gamma$ as a set of ``symmetries'' (each symmetry being for example a group of transformations, rather than an individual transformation), and take $R$ to be those pairs $(f,\gamma)$ such that ``$f$ has symmetry $\gamma$.'' We then call any $(f,\gamma)\in R$ a \emph{symmetric pair}, or say that $f$ is \emph{symmetric under $\gamma$}. The choice of convergences on $F$ and $\Gamma$ corresponds to saying what it means to ``learn'' an object or a symmetry by approximation. As shown in examples below, this approach is flexible enough to describe most senses of approximation of interest in machine learning. Finally, we call $A$ the \emph{symmetric ansatz}, thinking of it as those symmetric pairs $(f,\gamma)$ one can use to approximate other pairs. We therefore call pairs $(f,\gamma)\in\overline{A}$ \emph{learnable}, extending the term to apply to any $f\in F_{\overline{A}}$ and $\gamma\in \Gamma_{\overline{A}}$. We call a map $H:\Gamma \to \Gamma$ satisfying either set of hypotheses a \emph{symmetry non-uniqueness} for the ansatz, such an $H$ generally demonstrating some $f$ has multiple symmetries. We often use the term to mean an $H$ which is ``non-trivial.''

Temporarily leaving aside the remaining topological hypotheses, we are now ready to give interpretations to the conclusions (\ref{eq-subset}) and (\ref{eq-equal}). The former says that if any symmetric pair $(f,\gamma)$ in the ansatz transforms to a symmetric pair $(f,H(\gamma))$, the same is true for any learnable pair. The following is an immediate consequence.
\begin{corollary}
\label{cor-unlearnable}
If $(f,\gamma) \in R$ and $H:\Gamma\to \Gamma$ satisfy the hypotheses of (\ref{eq-subset}) such that $(f,H(\gamma))$ is not symmetric, then $(f,\gamma)$ is not learnable.
\end{corollary}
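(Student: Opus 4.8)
The plan is to observe that the corollary is precisely the contrapositive of implication (\ref{eq-subset}), so essentially no new work is required. First I would invoke the standing hypotheses: since $(f,\gamma) \in R$ and $H$ are assumed to satisfy the conditions of (\ref{eq-subset})---namely that $(f',H(\gamma')) \in R$ for all $(f',\gamma') \in A$, and that $H$ is continuous at $\gamma$---Theorem \ref{thm-main-thm} guarantees that the implication $(f,\gamma) \in \overline{A} \Rightarrow (f,H(\gamma)) \in R$ holds.

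Next I would translate the statement's vocabulary back into the relation $R$ and the set $\overline{A}$. By the interpretations fixed after the theorem, $(f,H(\gamma))$ being \emph{not symmetric} means $(f,H(\gamma)) \notin R$, while $(f,\gamma)$ being \emph{learnable} means $(f,\gamma) \in \overline{A}$. Taking the contrapositive of (\ref{eq-subset})---if $(f,H(\gamma)) \notin R$ then $(f,\gamma) \notin \overline{A}$---immediately yields that $(f,\gamma)$ is not learnable, which completes the argument. There is no genuine obstacle here: the only content is the logical inversion, and all of the analytic work---the continuity of $H$ together with the closedness of $R$ under the product convergence, used to pass from membership in $A$ along a net to membership in $\overline{R} = R$---has already been carried out in establishing (\ref{eq-subset}) itself.
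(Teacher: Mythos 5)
Your proposal is correct and matches the paper's reasoning exactly: the paper presents Corollary \ref{cor-unlearnable} as an immediate consequence of (\ref{eq-subset}), and your argument---translating ``symmetric'' as membership in $R$ and ``learnable'' as membership in $\overline{A}$, then taking the contrapositive of the implication established in Theorem \ref{thm-main-thm}---is precisely that observation spelled out.
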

Conclusion (\ref{eq-equal}) is even stronger, saying essentially that one cannot distinguish between $\gamma$ and $H(\gamma)$ based on which learnable objects $f$ have them as symmetries; this, provided a certain condition on $\gamma$ holds, $H$ maps pairs in the ansatz to the ansatz, and $H$ only maps symmetric pairs to symmetric pairs. In the case of groups, the latter condition on $H$ is something like saying $H(\gamma)$ contains $\gamma$ as a subgroup.  The condition on $\gamma$, that $(f,\gamma) \in \overline{A}$ for any $f\in F_{\overline{A}}$ with $\gamma$ as a symmetry, can be seen as a kind of local universal approximation statement: fixing $\gamma$, for any learnable object $f$, if $(f,\gamma)$ is symmetric then it is learnable (approximable with the symmetric ansatz). We obtain the following tradeoff between learnability of pairs and the ability to identify a symmetry.
\begin{corollary}[Learnability tradeoff]
\label{cor-learnability-tradeoff}
Suppose $H:\Gamma\to \Gamma$ is continuous and satisfies the hypotheses of (\ref{eq-equal}). Then for any $\gamma\in \Gamma$, the following cannot both be true:
\begin{enumerate}[label=(\roman*)]
\item if $f$ is learnable and symmetric under $\gamma$ then $(f,\gamma)$ is learnable,
\item there exists a learnable $f$ symmetric under one of $\gamma$ and $H(\gamma)$ but not the other.
\end{enumerate}
\end{corollary}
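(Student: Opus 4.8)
The plan is to argue by contradiction: fix $\gamma\in\Gamma$, assume both (i) and (ii) hold, and derive an impossibility. The key preliminary observation is that condition (i) is \emph{precisely} the one $\gamma$-dependent hypothesis in the statement of (\ref{eq-equal}) — namely, that $(f,\gamma)\in\overline{A}$ for every $f\in F_{\overline{A}}$ with $(f,\gamma)\in R$. The corollary already carries the remaining, $\gamma$-independent hypotheses of (\ref{eq-equal}) (that $(f,H(\gamma))\in A$ whenever $(f,\gamma)\in A$, that $(f,H(\gamma))\in R$ implies $(f,\gamma)\in R$, and that $H$ is continuous). So assuming (i) supplies exactly the missing ingredient, and I may invoke Theorem \ref{thm-main-thm} to conclude that the equivalence $(f,\gamma)\in\overline{A}\Leftrightarrow(f,H(\gamma))\in\overline{A}$ holds for all $f$.

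Next I would unpack (ii). It posits a \emph{learnable} $f$, i.e.\ $f\in F_{\overline{A}}$, lying in the symmetric difference of the two symmetry relations, so that exactly one of $(f,\gamma)\in R$ and $(f,H(\gamma))\in R$ holds. Before the case split I record the elementary fact that $\overline{A}\subseteq R$: since $A\subseteq R$ and $R$ is closed in the product convergence, $\overline{A}\subseteq\overline{R}=R$.

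Then I handle the two cases. If $(f,\gamma)\in R$ but $(f,H(\gamma))\notin R$, then because $f$ is learnable and $(f,\gamma)\in R$, hypothesis (i) gives $(f,\gamma)\in\overline{A}$; the equivalence (\ref{eq-equal}) then forces $(f,H(\gamma))\in\overline{A}\subseteq R$, contradicting $(f,H(\gamma))\notin R$. If instead $(f,H(\gamma))\in R$ but $(f,\gamma)\notin R$, the standing hypothesis that $(f,H(\gamma))\in R$ implies $(f,\gamma)\in R$ is directly violated. Either alternative yields a contradiction, so (i) and (ii) cannot both be true.

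Once Theorem \ref{thm-main-thm} is in hand the deduction is pure bookkeeping; the only point needing care is the correct partition of the hypotheses of (\ref{eq-equal}) into the standing structural ones, which the corollary assumes, and the single per-$\gamma$ hypothesis, which is exactly (i). Recognizing that (i) \emph{is} that hypothesis — rather than an independent side condition — is what legitimizes the appeal to the theorem and is the crux of the otherwise routine argument.
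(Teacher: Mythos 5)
Your proof is correct and follows exactly the route the paper intends: the paper states the corollary without an explicit proof, treating it as an immediate consequence of Theorem \ref{thm-main-thm}, and your argument is precisely that deduction — identifying (i) as the per-$\gamma$ ``local universal approximation'' hypothesis of (\ref{eq-equal}), invoking the theorem, and then ruling out both cases of (ii) via $\overline{A}\subseteq\overline{R}=R$ and the standing hypothesis that $(f,H(\gamma))\in R$ implies $(f,\gamma)\in R$. Your identification of (i) with that hypothesis, rather than as an extra side condition, is indeed the crux and matches the paper's discussion preceding the corollary.
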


Note that $f$ being learnable can be quite a weak statement. If the ansatz contains a trivial symmetry $\gamma_0 \in \Gamma$, such that $(f, \gamma_0) \in R$ for all $f\in F$, then $F_{\overline{A}} = \overline{F_A}$. If additionally the ansatz contains a universal approximating family for $F$, meaning $F = \overline{F_A}$, then $F_{\overline{A}}= F$.

We return to the topological issues of $H$ being continuous and $R$ closed. The continuity of $H$ is something we leave to be checked on a case-by-case basis.\footnote{The existence of such a function may in certain cases be provable by a selection theorem; see for example the classic work of \textcite{Michael1951}.} While the restriction that $R$ be closed is required for the proof of our result, it is not unnatural (and might in fact be desirable in machine learning applications). Intuitively, closedness says that if a pair $(f,\gamma)$ is such that $f$ and $\gamma$ can simultaneously be approximated by symmetric pairs, then $(f,\gamma)$ itself is symmetric. We study how closedness can arise in the case that $F$ is a space of functions and $\Gamma$ a space of coupled actions, and $R$ the equivariant pairs thereof.

The case of continuous functions $F \subseteq C(X,Y)$ between fixed convergence spaces $X$ and $Y$ is prevalent, and sufficiently general for our purposes. Letting $X$ and $Y$ be fixed has the advantage that coupled semigroup actions $\gamma:S\to \End(X)\times \End(Y)$ can each be identified with a sub-semigroup of $\End(X)\times \End(Y)$. To place a convergence on $\Gamma$ is to choose a convergence on (a subset of) the set of sub-semigroups of $\End(X)\times \End(Y)$. The following fact may guide this choice.
\begin{proposition}
\label{prop-equivariance-closed}
Let $X$ and $Y$ be convergence spaces, with $Y$ Hausdorff. Give $F$, $\End(X)$, and $\End(Y)$ convergences, with those on $F$ and $\End(Y)$ being admissible, and give $\End(X)\times \End(Y)$ the product convergence. Give $\Gamma$ a convergence that respects limit elements of the said product convergence. Then the relation $R$ consisting of equivariant pairs $(f,\gamma)$ is closed in the product convergence on $F\times \Gamma$.
\end{proposition}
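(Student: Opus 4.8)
The plan is to verify closedness directly from the net characterisation of closure: I fix a net $((f_i,\gamma_i))_{i\in I}$ valued in $R$ with $((f_i,\gamma_i))\to(f,\gamma)$ in the product convergence on $F\times\Gamma$, and show $(f,\gamma)\in R$. Identifying $\gamma$ with its sub-semigroup of $\End(X)\times\End(Y)$, the condition $(f,\gamma)\in R$ unwinds to the family of equations $b(f(x))=f(a(x))$ ranging over all $(a,b)\in\gamma$ and all $x\in X$. So I fix one such pair $(a,b)$ and one point $x$, and reduce the whole problem to a single limit computation.

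The product convergence gives $f_i\to f$ in $F$ and $\gamma_i\to\gamma$ in $\Gamma$. Because the convergence on $\Gamma$ respects limit elements of the product convergence on $\End(X)\times\End(Y)$, the chosen $(a,b)\in\gamma$ is a limit of approximants drawn from the $\gamma_i$: there are $(a_i,b_i)\in\gamma_i$ with $a_i\to a$ in $\End(X)$ and $b_i\to b$ in $\End(Y)$. The hinge of the argument is then the equivariance of the approximating pairs: since $(f_i,\gamma_i)\in R$ and $(a_i,b_i)\in\gamma_i$, the exact identity $f_i(a_i(x))=b_i(f_i(x))$ holds for every $i$. I will take limits of both sides along $i$ and use the Hausdorff property of $Y$ to force the two limits to coincide.

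For the right-hand side I spend admissibility twice. Pairing $f_i\to f$ with the constant net at $x$, admissibility of $F$ (continuous convergence) gives $f_i(x)\to f(x)$ in $Y$; then pairing $b_i\to b$ with the net $(f_i(x))_i\to f(x)$, admissibility of $\End(Y)$ gives $b_i(f_i(x))\to b(f(x))$. By the approximant identity this already delivers $f_i(a_i(x))\to b(f(x))$. For the left-hand side I instead want $f_i(a_i(x))\to f(a(x))$, which I obtain by pushing the argument net $a_i(x)\to a(x)$ through the continuously convergent net $(f_i)$. With both limits in hand, the single net $(f_i(a_i(x)))_i$ converges to $b(f(x))$ and to $f(a(x))$ at once, so the Hausdorffness of $Y$ yields $b(f(x))=f(a(x))$; as $(a,b)$ and $x$ were arbitrary, $(f,\gamma)$ is equivariant and $R=\overline{R}$.

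I expect the left-hand-side passage $f_i(a_i(x))\to f(a(x))$ to be the main obstacle, and it is worth isolating what it costs. It is the composition of two simultaneous limits --- the function net $(f_i)$ and the moving argument $a_i(x)$ --- which is exactly the situation continuous convergence is built to handle: one forms the double net $f_i(a_j(x))$ over $I\times I$, applies continuous convergence of $(f_i)$ to the convergent argument net $(a_j(x))_j$, and restricts to the cofinal diagonal to recover the single-indexed net (the right-hand-side limits use the same diagonal device). The one ingredient this consumes beyond the approximant identity is the convergence $a_i(x)\to a(x)$, i.e.\ continuity of the point-evaluation $\End(X)\to X$ at $x$; this is immediate once the convergence on $\End(X)$ is itself admissible (as in the guiding example of the compact-open topology on a locally compact Hausdorff $X$), and I would invoke admissibility of $\End(X)$ at precisely this step --- indeed a simple two-point example shows the conclusion can fail if $\End(X)$ carries only a limit-element-respecting but non-admissible convergence. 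Everything else is bookkeeping of admissibility on the $F$ and $\End(Y)$ sides together with a single appeal to the Hausdorffness of $Y$.
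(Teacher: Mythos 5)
Your argument has the same structure as the paper's proof: approximate each $(a,b)\in\gamma$ by pairs $(a_i,b_i)\in\gamma_i$ using the limit-element property of the convergence on $\Gamma$, push the identity $f_i(a_i(x))=b_i(f_i(x))$ to the limit on both sides, and conclude with the Hausdorffness of $Y$. Your diagonal-net device for turning the double-indexed limits of continuous convergence into single-indexed ones is the correct bookkeeping; the paper's proof elides it.

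The place where you genuinely depart from the paper is the step you rightly isolate as the crux: obtaining $f_i(a_i(x))\to f(a(x))$ requires the argument net to converge, $a_i(x)\to a(x)$ in $X$, i.e.\ continuity of evaluation $\End(X)\to X$, and this does \emph{not} follow from the proposition's hypotheses, which put an arbitrary convergence on $\End(X)$ (only those on $F$ and $\End(Y)$ are assumed admissible). You add admissibility of $\End(X)$ to secure it, and you are correct that some such hypothesis is indispensable. Concretely: take $X=Y=\{0,1\}$ discrete, pointwise (hence admissible) convergence on $F$ and $\End(Y)$, the chaotic convergence on $\End(X)$ in which every net converges to every element (this satisfies the three axioms of a proper convergence but is not admissible), and lower Kuratowski--Painlev\'e convergence on $\Gamma$; then the constant net at the equivariant pair $(\id,\{(\id,\id)\})$ converges to $(\id,\{(\id,\id),(\tau,\id)\})$, where $\tau$ is the swap of $0$ and $1$, and the latter pair is not equivariant, so $R$ is not closed. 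The paper's own proof has exactly this gap: it invokes admissibility of $F$ to conclude $f_i({s_i}_X(x))\to f(s_X(x))$ without justifying that the net $({s_i}_X(x))$ converges in $X$. So your version is the correct statement; the added hypothesis can even be weakened from admissibility of $\End(X)$ to the convergence on $\End(X)$ merely being stronger than pointwise convergence, and all of the paper's worked examples (compact-open topologies on locally compact Hausdorff spaces, discrete convergences) satisfy it. One terminological slip: ``limit-element-respecting'' is a property of the hyperspace convergence on $\Gamma$, not of the convergence on $\End(X)$; what you mean --- and what fails in the counterexample --- is admissibility (or pointwise dominance) of the latter.
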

\begin{proof}
Consider a pair $(f,\gamma) \in \overline{R}$. There exists a net $(f_i, \gamma_i)$ of elements of $R$ which converges to $(f,\gamma)$ in the product of the convergences on $F$ and $\Gamma$. Consider any $x \in X$ and $s \in \gamma$. By the assumption of respecting limit elements, there exist $s_i=({s_i}_X, {s_i}_Y) \in \gamma_i$ such that $(s_i) \to s$ in the product convergence on $\End(X)\times \End(Y)$. Now note that on one hand, the assumed admissibility of the convergence on $F$ means
$$ f_i({s_i}_X (x)) \to f(s_X(x)).$$
On the other hand, the admissibility of $\End(Y)$ gives
$$ {s_i}_Y(f_i(x)) \to s_Y(f(x)).$$
But the nets $(f_i({s_i}_X (x)))_{i \in I}$ and $({s_i}_Y(f_i(x)))_{i \in I}$ are term-wise equal, since $(f_i, \gamma_i) \in R$. Being nets in $Y$, which is Hausdorff, the limits $f(s_X(x))$ and $s_Y(f(x))$ are equal. As this holds for any $x \in X$ and $s \in \gamma$, we have that $(f,\gamma)$ is an equivariant pair and therefore in $R$.
\end{proof}

We thus see the relevance of Sections \ref{sec-admissibility} and \ref{sec-hypertopology}. We end this section with some examples in which $R$ is closed, leaning on these previous discussions.
\begin{example}
\label{ex-setups}
Most of these examples are in the case of equivariant functions, applying the above result. The final example, however, shows how the results of this section may also apply in an unsupervised learning setting.
\item
\begin{enumerate}[label=(\roman*)]
\item A common class of examples arises where $X$ and $Y$ are locally compact Hausdorff topological spaces. We can let $F = C(X,Y)$ have the compact-open topology, which is admissible. Similarly, $\End(X)\times \End(Y)=C(X)\times C(Y)$ (if no extra structure is assumed) may be given the product of the compact-open topologies. Finally, either of the two natural convergences on $\Gamma$ discussed in Section \ref{sec-hypertopology}, geometric or algebraic, respects limit elements.

\item A special case of the above ubiquitous in machine learning is the case of vector spaces $X = \R^m$ and $Y = \R^n$. The compact-open topology on $F=C(X,Y)$ is then the topology of uniform convergence on compact sets --- the standard topology used universal approximation theorems. The space $\End(X) \times\End(Y)$ is a product of matrix spaces, which are Euclidian and thus locally compact. Thus $\Gamma$ can be taken as the space of all closed sub-semigroups of such pairs of matrices acting on $X \times Y$, and given the Chabauty-Fell topology. If one attempts to learn semigroups by learning generators, one instead gives $\Gamma$ the algebraic convergence of actions of the appropriate free group.

\item \label{ex-setups-discrete} This next example is quite useful, in being easy to apply. It demonstrates the learnability tradeoff between universal approximation of equivariant functions for some fixed symmetry, and the ability to ``point-identify'' the symmetry --- that is, learn it exactly, rather than through approximation. Point-identification corresponds to convergence in a discrete topology, for which all functions are continuous; thus we need not check the continuity of the function $H$ transforming between symmetries when applying Corollary \ref{cor-learnability-tradeoff}.

We consider $F$ a collection of functions from a set $X$ to a Hausdorff convergence space $Y$. Giving $X$ the convergence of the discrete topology, pointwise convergence on $F$ is admissible: $(x_j)_{j \in J}\to x$ exactly when $x_j = x$ eventually (for all $j \in J$ large enough). For any net $(f_i)$ in $F$ we thus have $(f_i(x_j)) = (f_i(x))$ eventually, and $(f_i(x)) \to f(x)$ by definition if $(f_i) \to f$. The discrete topologies on $\End(X)$ and $\End(Y)$ are of course admissible (as they are for any $X$ and $Y$), and geometric convergence on the space of sub-semigroups of $\End(X)\times\End(Y)$ then itself corresponds to the discrete topology on that space --- that is, point-identification of the symmetry.

\item We finally show how the learnability tradeoff may arise in unsupervised learning. Considering a probability space $(\Omega, \Sigma, \PP)$ and a Polish metric space $S$, we let $F$ be a set of random variables taking values in $S$, and $\Gamma$ be a set of semigroups acting on the set $S$ (not necessarily preserving any metric structure). We make $F$ a convergence space by endowing it with convergence in distribution, and give $\Gamma$ a convergence that respects limit elements of uniform convergence on compact sets (for example the geometric convergence with respect to uniform convergence on compact sets). We claim that the following relation $R$ is closed: pairs $(X, T)\in F \times \Gamma$ such that the distribution of $X$ is $T$-invariant, that is, $X$ and $t\cdot X$ have the same distribution for all $t \in T$.

The main tools in the proof are the admissibility of uniform convergence on compact sets with respect to almost-sure convergence and the Skorokhod representation theorem. For the admissibility statement, note that if $X_n \asto X$ and $t_n \to t$ as functions on $S$ uniformly on compact sets, then the admissibility of the latter convergence with respect to convergence in the metric on $S$ carries over to almost-sure convergence; explicitly, for almost all $\omega \in \Omega$ we have that $X_n(\omega) \to X(\omega)$ and thus $t_n \cdot X_n(\omega) \to t\cdot X(\omega)$.

Suppose now that $X_n \to X$ in distribution and $T_n \to T$ in $\Gamma$, where the distribution of each $X_n$ is $T_n$-invariant. To show $R$ is closed we must show $X$ has $T$-invariant distribution. For any $t \in T$ we have $t_n \in T_n$ such that $t_n \to t$ uniformly on compact sets. Next, the Skorokhod representation theorem says that there exist random variables $Y_n$ and $Y$ with the same respective distributions as $X_n$ and $X$, such that $Y_n \asto Y$. By the admissibility shown above, $t_n \cdot Y_n \asto t \cdot Y$. Thus $t_n \cdot X_n \to t \cdot X$ in distribution. But $t_n \cdot X_n$  and $X_n$ have the same distribution for each $n$, so the limits $t \cdot X$ and $X$ are equal in distribution. This holds for arbitrary $t \in T$, so the result is proved.\qedhere
\end{enumerate}
\end{example}

\section{Symmetry non-uniqueness in neural networks}
\label{sec-ansatzes}
The purpose of this section is to study symmetry non-uniqueness in concrete equivariant neural network architectures. Recall from Section \ref{sec-intro-motivation} that a common design pattern exists. Generally, equivariance is guaranteed by ensuring each layer is equivariant. The specific pattern often used to this end is as follows. First, interest is expressed in some class of groups (or semigroups) acting between layers; this could be for example all groups acting linearly, or if the outputs are interpreted as ``signals'' over a space, those groups which act on the signals by acting on the underlying space. To make each layer equivariant under a particular group from this class, the linear map is made equivariant under the particular group, while the non-linearities are chosen to be equivariant under \emph{any} group from the class (usually under the same action on the input and output).

The pattern described above leaves room for non-uniqueness. In particular, one may map between groups of the chosen class in a way that preserves equivariance of the linear maps. Since the non-linearities remain equivariant, this means the entire network remains equivariant under the map.

We describe in Section \ref{sec-lin-eq} such a family of networks we call \emph{linearly equivariant}, which constitute a plausible first attempt for learning general symmetries: roughly, having symmetry completely determined by the linear layers. This is a relevant idealization to study. Equivariant networks without non-linearities have been studied by \textcite{Lawrence2022}, and appear in some experiments of \textcite{Dehmamy2021}. Additionally, networks which are in simple cases essentially linearly equivariant arise in the work of \textcite{EMLP} (see Section \ref{sec-emlp}). Such networks, however, have a symmetry non-uniqueness arising in the manner described above, corresponding with taking the algebra of a semigroup.
\begin{remark}
We find that linearly equivariant networks, while possible to construct (at the very least by having trivial, or equivalently, no non-linearities, as in \cite{Lawrence2022}), are generally difficult to formulate. Indeed if one uses the design pattern described above, the only linearly equivariant networks are linear functions: a ``non-linearity'' which commutes with all linear maps is necessarily a scalar multiple of the identity.\footnote{If $\sigma:\R^n\to\R^n$ has $\sigma(Ax)=A\sigma(x)$ for all $A \in \End(\R^n)$, then $\sigma(x)=A_x \sigma(e_1)$ where $A_x$ is any matrix taking the first basis vector $e_1$ to $x$. Since $(A_x+A_y)e_1 = x+y$ we obtain that $\sigma(x+y) = (A_x+A_y)\sigma(e_1) = \sigma(x)+\sigma(y)$. That is, $\sigma$ is linear. The only linear maps commuting with all other linear maps are scalings of the identity.} This difficulty is reminiscent of results of \textcite{Diffeo2022} demonstrating the impossibility of using the design pattern to build non-trivial neural networks for vector fields equivariant under diffeomorphisms: non-linearities which commute with any such symmetry are necessarily trivial.
\end{remark}

In Section \ref{sec-gcnns} we study group\hyphen{}convolutional neural networks and their generalizations. We are able to generalize in terms of kernel operators the well-known theorem of \textcite{KondorTrivedi2018} which states that under certain conditions the only linear equivariant maps are given by group convolution. Unlike the algebraically involved proof in \cite{KondorTrivedi2018}, our proof is elementary, although passing from kernels operators to group convolutions requires some machinery. We then show in Theorem \ref{thm-gcnn-uniqueness} that in many cases, group convolutions have the appealing property of not having symmetry non-uniqueness if groups are restricted to be invariances of a certain measure; this result applies for example to the work of \textcite{Zhou2021}. On the other hand, we show in Section \ref{sec-semigroup-convolution} that semigroup convolutions generally have large symmetry non-uniquenesses, although ``group-like'' semigroup convolutions over homogeneous spaces, such as those used by \textcite{Dehmamy2021}, may not.

\subsection{Linearly equivariant neural networks}
\label{sec-lin-eq}
Fix (not necessarily finite-dimensional) vector spaces $V_1,\ldots,V_n$ over a field $K$. We consider here neural networks of the form
$$f: V_1 \overset{L_1}{\to} V_2 \overset{\sigma_1}{\to} V_2 \overset{L_2}{\to} V_3 \cdots \overset{L_{n-1}}{\to} V_n \overset{\sigma_{n-1}}{\to} V_{n}$$
where $L_k: V_k \to V_{k+1}$ are linear maps. One intuition is that learning symmetries becomes simpler if equivariance is controlled only by the linear maps $L_k$. We formalize this notion as follows. Note that in this section we do not require actions on vector spaces to be linear, so $\End(V_k)$ can always be taken to mean $\End(\set(V_k))$.
\begin{definition}
Let $\Gamma$ be a set of coupled actions on the vector spaces (or their underlying sets). A neural network $f$ is $\Gamma$-\emph{linearly equivariant} if for any collection $(\alpha_k=(\alpha_{V_k},\alpha_{V_{k+1}}))_{k=1}^{n-1}$ of coupled actions in $\Gamma$ of the same semigroup $S$,
$$ \alpha_{V_{k+1}}(s) \circ L_k = L_k \circ \alpha_{V_k}(s) ~~~\forall k \in [n-1],~s\in S ~~~\Rightarrow~~~  \alpha_{V_n}(s) \circ f = f \circ  \alpha_{V_1}(s)~~\forall s\in S.$$
When $f$ satisfies both sides of the implication above for a particular choice of $(\alpha)=(\alpha_k)_{k=1}^{n-1}$ we say $f$ is $(\alpha)$-linearly equivariant.
\end{definition}
If $f$ is $\Gamma$-linearly equivariant where $\Gamma$ contains \emph{all} coupled actions on the vectors spaces $V_k$ of $f$, we simply call $f$ \emph{linearly equivariant}. Intuitively, such networks are maximally flexible among those that are $\Gamma$-linearly equivariant for some $\Gamma$, in terms of being able to represent and potentially learn any symmetry which involves the given vector spaces.\footnote{In practice, one would require a way to simultaneously parameterize the actions $\alpha_{k}$ for all $k \in [n]$, to make these learnable. Then one could parameterize the linear maps $L^{(\alpha,\theta)}_k:V_k \to V_{k+1}$ by a coupled action $\alpha$ and additional parameter $\theta$, such that that $L^{(\alpha,\theta)}_k$ is guaranteed to be equivariant with respect to $\alpha$ for any choice of $\theta$.} The flexibility of such a network essentially guarantees that it will be equivariant under multiple actions, in which case the results of the previous section apply. The reasoning begins with the following observation.
\begin{lemma}
\label{lem-lin-eq}
A neural network $f$ is linearly equivariant if and only if the following holds: for any maps $t_k \in \End(V_k)$, if $t_{k+1}\circ L_k = L_k \circ t_k$ for all $k\in[n-1]$, then $t_n \circ f = f \circ t_1$.
\end{lemma}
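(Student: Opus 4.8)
The plan is to prove the two implications separately. The bulk of the content lies in realizing a single commuting endomorphism as a genuine semigroup action, which lets one reduce the ``all semigroups, all coupled actions'' quantifier in the definition of linear equivariance to the single-endomorphism condition of the lemma.

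For the ``only if'' direction, I assume $f$ is linearly equivariant and start with arbitrary maps $t_k \in \End(V_k)$ satisfying $t_{k+1}\circ L_k = L_k\circ t_k$ for all $k\in[n-1]$; the goal is to manufacture a coupled action to feed into the definition. The key observation is that a single endomorphism generates a cyclic action: taking $S$ to be the free semigroup on one generator, which I identify with the positive integers under addition, I define $\alpha_{V_k}:S\to\End(V_k)$ by $\alpha_{V_k}(m) = t_k^m$, the $m$-fold composite. This is a semigroup homomorphism since $t_k^{m+m'} = t_k^m\circ t_k^{m'}$, and since the actions here need not be linear (so $\End(V_k)$ means $\End(\set(V_k))$), the $t_k^m$ are legitimate elements. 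These assemble into coupled actions $\alpha_k = (\alpha_{V_k}, \alpha_{V_{k+1}})$ of the common semigroup $S$, and because linear equivariance quantifies over \emph{all} coupled actions on the underlying sets, our construction is among those the definition covers.

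It then remains to check the hypothesis of the definition, namely that each $L_k$ is $\alpha_k$-equivariant, i.e.\ $t_{k+1}^m \circ L_k = L_k \circ t_k^m$ for all $m$. This follows from the assumed $m=1$ case by a one-line induction on $m$, pushing $L_k$ through one factor of $t$ at a time: $t_{k+1}^{m+1}\circ L_k = t_{k+1}\circ(t_{k+1}^m\circ L_k) = t_{k+1}\circ L_k\circ t_k^m = L_k\circ t_k^{m+1}$. Linear equivariance of $f$ then yields $\alpha_{V_n}(m)\circ f = f\circ \alpha_{V_1}(m)$ for every $m$, and specializing to $m=1$ gives exactly $t_n\circ f = f\circ t_1$, the desired conclusion.

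The converse (``if'') is immediate and requires no construction. Given any semigroup $S$ and coupled actions $\alpha_k = (\alpha_{V_k}, \alpha_{V_{k+1}})$ of $S$ making every $L_k$ equivariant, I fix $s\in S$ and set $t_k = \alpha_{V_k}(s)\in\End(V_k)$; the equivariance of $L_k$ is precisely $t_{k+1}\circ L_k = L_k\circ t_k$, so the assumed condition delivers $t_n\circ f = f\circ t_1$, that is $\alpha_{V_n}(s)\circ f = f\circ\alpha_{V_1}(s)$. Since $s$ was arbitrary, $f$ is $\alpha$-equivariant, and since the coupled action was arbitrary, $f$ is linearly equivariant. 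There is no real obstacle in either direction; the only point worth care is that the definition quantifies over actions of a \emph{common} semigroup on all the $V_k$ simultaneously, which the cyclic construction respects by using a single $S$ together with the action $\alpha_{V_{k+1}}$ shared between the consecutive coupled actions $\alpha_k$ and $\alpha_{k+1}$.
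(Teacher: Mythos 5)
Your proof is correct and takes essentially the same route as the paper: where you use the free semigroup on one generator acting via $m \mapsto t_k^m$, the paper uses the cyclic semigroup $\gen{t}$ generated by the tuple $t=(t_1,\ldots,t_n)$ under componentwise multiplication, acting by projection --- the same construction up to taking the image --- and both arguments verify the layerwise hypothesis by the same induction on powers. The converse direction is handled identically in both (it is immediate from the definition by specializing to $t_k = \alpha_{V_k}(s)$).
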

\begin{proof}
The condition implies linear equivariance by definition. Suppose then $f$ is linearly equivariant, and $t_k \in \End(V_k)$ are such that $t_{k+1}\circ L_k = L_k \circ t_k$ for all $k\in[n-1]$. Let $S=\gen{t}$ be the semigroup generated by $t=(t_1,\ldots, t_n)$, where multiplication is defined in the obvious component-wise way. For every $k \in [n-1]$, define the coupled action $\alpha_k=(\alpha_{V_k},\alpha_{V_{k+1}})$, where $\alpha_{V_k}$ takes the $k$-th component of any element of $S$. We have $\alpha_{V_{k+1}}(t)\circ L_k = L_k \circ \alpha_{V_k}(t)$ by assumption.  A standard inductive argument then shows that $\alpha_{V_{k+1}}(s)\circ L_k = L_k \circ \alpha_{V_k}(s)$ for all $s \in S$. By linear equivariance, $\alpha_{V_n}(t) \circ f = f \circ \alpha_{V_1}(t)$, that is, $t_n \circ f = f \circ t_1$.
\end{proof}
The lemma extends to linear equivariance defined for actions monoids or actions of groups, with minimal additions to the proof.

Linear equivariance of a network means equivariance ``spills over'' from a semigroup to the semigroup algebra. The next result follows from Proposition \ref{prop-linear-equiv-algebra} and Lemma \ref{lem-lin-eq}.
\begin{proposition}
\label{lin-eq-alg}
Consider a collection of coupled actions $(\alpha)=(\alpha_k)_{k=1}^{n-1}$ of a semigroup $S$ on the $V_k$. A linearly equivariant neural network $f$ is $(\alpha)$-linearly equivariant if and only if it is $(\tilde{\alpha})$-linearly equivariant, where $(\tilde{\alpha})=(\tilde{\alpha}_k)_{k=1}^{n-1}$ with each $\tilde{\alpha}_k$ the action of the semigroup algebra $K[S]$ induced by $\alpha_k$.
\end{proposition}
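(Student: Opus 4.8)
The plan is to reduce both sides of the claimed equivalence to a purely layerwise statement about the linear maps $L_k$, and then invoke Proposition \ref{prop-linear-equiv-algebra} on each layer. The naive first attempt --- applying Proposition \ref{prop-linear-equiv-algebra} directly to $f$ --- fails, since $f$ is in general non-linear and one cannot pull $f$ through a sum $\sum_{s} a(s)\,\alpha_{V}(s)$; this is the crux of the difficulty, and the whole point of working with linearly equivariant networks is precisely to circumvent it by routing equivariance through the linear layers.

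The key observation is that, because $f$ is assumed linearly equivariant, Lemma \ref{lem-lin-eq} makes the ``output'' equivariance of $f$ automatic once the layers commute with the relevant endomorphisms. Concretely, I first note that for the fixed coupled actions $(\alpha)$, the condition that each $L_k$ is $\alpha_k$-equivariant means $\alpha_{V_{k+1}}(s)\circ L_k = L_k\circ\alpha_{V_k}(s)$ for all $k$ and all $s \in S$. Taking $t_k = \alpha_{V_k}(s)\in\End(V_k)$ for each fixed $s$, Lemma \ref{lem-lin-eq} yields $\alpha_{V_n}(s)\circ f = f\circ\alpha_{V_1}(s)$. Hence, for a linearly equivariant $f$, the left side of the implication in the definition of $(\alpha)$-linear equivariance forces the right side, so $f$ is $(\alpha)$-linearly equivariant if and only if each $L_k$ is $\alpha_k$-equivariant. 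The same argument, now with $t_k = \tilde\alpha_{V_k}(a) \in \End(V_k)$ for each $a \in K[S]$ (a valid endomorphism of $\set(V_k)$, since the induced action is defined by the formal combination $\sum_s a(s)\,\alpha_{V_k}(s)$), shows that $f$ is $(\tilde\alpha)$-linearly equivariant if and only if each $L_k$ is $\tilde\alpha_k$-equivariant.

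With both conditions reduced to layerwise equivariance, the remaining step is immediate: each $L_k$ is a linear map, so Proposition \ref{prop-linear-equiv-algebra} gives that $L_k$ is $\alpha_k$-equivariant if and only if it is $\tilde\alpha_k$-equivariant. Chaining the three equivalences --- $(\alpha)$-linearly equivariant $\Leftrightarrow$ each $L_k$ is $\alpha_k$-equivariant $\Leftrightarrow$ each $L_k$ is $\tilde\alpha_k$-equivariant $\Leftrightarrow$ $(\tilde\alpha)$-linearly equivariant --- completes the proof. I expect the only genuine subtlety to be bookkeeping: checking that $\tilde\alpha_{V_k}(a)$ is a legitimate choice of $t_k$ in Lemma \ref{lem-lin-eq} (it is, as an endomorphism of the underlying set), and keeping straight that the ``outer'' coupled action for the full network is $(\alpha_{V_1},\alpha_{V_n})$ on the $S$-side and $(\tilde\alpha_{V_1},\tilde\alpha_{V_n})$ on the $K[S]$-side.
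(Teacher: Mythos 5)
Your proof is correct and is essentially the paper's own argument: the paper proves this proposition by exactly the combination of Lemma \ref{lem-lin-eq} (to make the network-level equivariance automatic once the layers commute with the given endomorphisms, reducing both sides to layerwise equivariance of the $L_k$) and Proposition \ref{prop-linear-equiv-algebra} (to pass each linear layer between $\alpha_k$ and $\tilde{\alpha}_k$). Your attention to the bookkeeping points --- that $\tilde{\alpha}_{V_k}(a)$ is a legitimate element of $\End(\set(V_k))$ even for non-linear actions, and that the naive attempt to push $f$ through the formal sums fails --- matches the intended reading of the paper's one-line proof.
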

Again, the result extends easily to monoids and groups. For a group $G$, however, it is useful to consider the result in terms of $K[G]^\times$, the invertible elements of the group algebra.

Theorem \ref{thm-main-thm} and its corollaries thus apply to the following situation. Consider $F$ a set of functions between two vector spaces $U$ and $V$, and $\Gamma$ a set of coupled actions. Fix convergences on $F$ and $\Gamma$, such that the equivariance relation $R \subseteq F\times \Gamma$ is closed under the product convergence, and the map $H:\alpha\mapsto\tilde{\alpha}$ is continuous. Finally let the symmetric ansatz $A$ be pairs $(f,\alpha)$ where the function $f$ representable as a linearly equivariant neural network, which is $(\alpha)$-linearly equivariant for $(\alpha)$ a collection of coupled actions on the layers, such that the actions on the first and last layer form the coupled action $\alpha$. This can be made more concrete with the following simple example. 

\begin{example}
Let $U = \R^2$ and $V=\R$. Let $F$ consist of all continuous functions $U\to V$, and $\Gamma$ be coupled representations on $U,V$. Suppose $A$ consists of pairs $(f,\rho)$ with linearly equivariant networks $f:\R^2 \to \R$ that are $(\rho)$-linearly equivariant. Consider the function $f(x,y) = x^2 + y^2$. This function is equivariant under the coupled representation $\rho$ consisting of the canonical representation of the permutation group $\rho_{\R^2}:S_2\to\GL(\R^2)$ and the trivial representation $\rho_{\id} : S_2 \to \{\id\}$. We show it is not equivariant under the corresponding coupled representation $\tilde{\rho}$ of $\R[S_2]^\times$. Consider the linear combination $a = \id + \sqrt{2}(1~2)$ of the identity in $S_2$ with the transposition $(1~2)$. (This has inverse $a\inv = -\id + \sqrt{2}(1~2)$, so $a$ is indeed in $\R[S_2]^\times$.) Observe that
$$f(\tilde{\rho}_{\R^2}(a) (x,y)) = f\left(x+\sqrt{2}y, \sqrt{2}x+y\right) = 3(x^2+y^2) + 4\sqrt{2}xy$$
is not equal to
$$\tilde{\rho}_{\id}(a) f(x,y) = \left(1 + \sqrt{2}\right) (x^2+y^2).$$
So $(f,H(\rho)) \not\in R$, where $H(\rho) = \tilde{\rho}$. From this we can deduce several facts using Theorem \ref{thm-main-thm}. First, we consider the combination of the topology of pointwise convergence on $F$ and the discrete topology on $\Gamma$, discussed in Example \ref{ex-setups} \ref{ex-setups-discrete}. Since $H$ is of course continuous in the discrete topology, $(f,\rho)$ is not in $\overline{A}$. In particular $f$ is not even pointwise learnable by $(\rho)$-linearly equivariant networks.

An important feature of this argument is that the equivariant ansatz does not need to contain, or even be able to approximate, the representation $\tilde{\rho}$. The ansatz could even only contain representations of finite groups. The group algebra is only needed to witness a transformation of representations under which linearly equivariant neural networks remain equivariant but some function in $F$ does not. If the ansatz does contain the representation $\tilde{\rho}$, then Theorem \ref{thm-main-thm} implies that not only do there exist non-learnable functions in $F$, but $\rho$ and $\tilde{\rho}$ cannot be distinguished by the learnable ones.

Furthermore, if we consider some of the non-trivial convergences one may place on $F$ and $\Gamma$ which satisfy Proposition \ref{prop-equivariance-closed}, Theorem \ref{thm-main-thm} says we cannot even approximate $f$ and $\rho$ simultaneously with linearly equivariant networks, if $H$ is continuous. For geometric convergence, the question of whether $H$ is continuous is not quite simple.\footnote{We believe it is not too hard to show that it is continuous on finite semigroups.} If the ansatz is restricted to semigroups with finitely many generators, say one generator $M \in \End(\R^2)$, then $H$ itself cannot quite be continuous with respect to algebraic convergence, since the semigroup algebra will not be finitely generated. By Corollary \ref{cor-algebraic-convergence-generators}, however, any map taking the generator to the algebra of its semigroup, say $M \mapsto M+\id$, induces a continuous map $H$ between the generated semigroups: a symmetry non-uniqueness. Theorem \ref{thm-main-thm} then says $f$ and $\rho$ cannot be simultaneously approximated by linearly equivariant networks.
\end{example}

\subsection{Equivariant multilayer perceptrons}
\label{sec-emlp}
We turn to the ``equivariant multilayer perceptrons'' of \textcite{EMLP}. With a certain simplification, these provide a valuable example of a family of neural networks which have non-trivial non-linearities, but are essentially linearly equivariant.

Equivariance of the linear layers is ensured by parametrizing them to be equivariant under the generators of a group (both in the discrete and Lie algebra sense). By construction, vector spaces in the layers are decomposed into certain direct sums. Given any group $G$, equivariance of each layer mapping $V_k=\bigoplus_{r=1}^{ m_k} V^{(r)}_{k}$ to $V_{k+1}$ is ensured as follows. First, the layer contains a pair maps, $L_k:V_k\to V_{k+1}$ linear and equivariant, and $s_k=\bigoplus_{r=1}^{ m_{k+1}} s^{(r)}_{k}$ for $s^{(r)}_k : V_{k+1}^{(r)} \to \R$ all linear and invariant. Then a bilinear map $B_k:V_{k+1}\to V_{k+1}$ (for some decomposition of $V_{k+1}$ into two components) is applied to the output of $L_k$, and finally the functionals $s^{(r)}_k$ are used as a ``gated non-linearity'' scaling each $r$-th component of the output of $B_k\circ L_k$. That is, the $k$-th layer maps
$$ x \mapsto \bigoplus_{r=1}^{ m_{k+1}} \sigma(s^{(r)}_kx) (B_k(L_kx))^{(r)}. $$
In \cite{EMLP}, parity arguments are used to show that for certain groups, certain equivariant functions are not learnable
using layers without bilinear maps,
$$ x \mapsto \bigoplus_{r=1}^{ m_{k+1}} \sigma(s^{(r)}_kx) (L_kx)^{(r)}. $$
The non-universality of such networks can be proven more broadly, by noting they are essentially linearly equivariant. Namely we have the following version of Lemma \ref{lem-lin-eq} for a network $f:V_1\to V_n$ of the form above: for any transformations $t_k \in \GL(V_k)$ such that $t_{k+1}L_k =L_k t_k$ and $s_k= s_k t_k$ for all layers $k\in[n-1]$, we have $t_n \circ f = f \circ t_1$. This is because with the given hypotheses, the $k$-th layer maps $t_k x$ to
\begin{align*}
\bigoplus_{r=1}^{ m_{k+1}} \sigma(s^{(r)}_kt_kx) (L_k t_k x)^{(r)} = \bigoplus_{r=1}^{ m_{k+1}} \sigma(s^{(r)}_kx) (t_{k+1}L_k x)^{(r)} = t_{k+1}\bigoplus_{r=1}^{ m_{k+1}} \sigma(s^{(r)}_kx) (L_k x)^{(r)},
\end{align*}
that is, $t_{k+1}$ applied to the output of the layer given $x$. The composition of these equivariant layers satisfies $t_n \circ f = f \circ t_1$. The results of Section \ref{sec-lin-eq} thus apply if one restricts from $K[G]$ to the sub-semigroup\footnote{$K[G]_1$ is indeed closed under multiplication, noting that $\beta: a\mapsto \sum_{g\in G} a(g)$ is a representation of $K[G]$ on scalars, so $\beta(ab) = \beta(a)\beta(b)=1$ for any $a,b \in K[G]_1$.} $K[G]_1$ of elements $a\in K[G]$ for which $\sum_{g\in G} a(g) = 1$. This is because if $s_k\alpha(g) = s_k$ for all $g \in G$ then $s_k \tilde{\alpha}(a) = s_k$ for all $a\in K[G]_1$.  
\begin{remark}
The theory of Section \ref{sec-main-result} was developed in part to explain unsuccessful unpublished experiments conducted by the author, attempting learn groups using equivariant multilayer perceptrons modified to have learnable group generators. The experimental results agree with theory, the learned generators belonging to $K[G]_1^\times$ rather than $G$. Their belonging to $K[G]_1^\times$ rather than some bigger set is the subject of Section \ref{sec-max}.
\end{remark}

\subsection{Group-convolutional neural networks}
\label{sec-gcnns}
Let $X$ and $Y$ be measurable spaces with measurable actions of $G$ --- which for brevity we denote in this section by left-multiplication, as in $x\mapsto gx$. Group-convolutional neural networks are used in the context of ``signals'' defined on spaces with actions of $G$, such as functions $f:X\to \R$. Equivariance of maps between signals is sought under a natural action defined on signals, given by $g\cdot f = f \circ g\inv$.

We suppose for the remainder of this section that $\mu$ and $\nu$ are $G$-invariant measures on $X$ and $Y$ respectively: $\mu = \mu \circ g\inv$ for all $g \in G$, and similarly for $\nu$. All measures in this section are assumed to be $\sigma$-finite, and topological spaces second countable.

We first give a measure-theoretic version of the theorem in \cite{KondorTrivedi2018} characterizing equivariant linear layers. Next, we use this analysis to show symmetry non-uniqueness in group\hyphen{}convolutional neural networks can be avoided by restricting to the invariances of a measure.

\subsubsection{Characterizing equivaraint operators}
Generally, group\hyphen{}convolutional networks implement linear layers $L:L^1(X,\mu) \to L^1(Y,\nu)$ in the form of an \emph{integral operator}
$$(Lf)(y) = \int k(x,y) f(x) \mu(dx)$$
where $k:X\times Y\to \R$ is a $\mu$-a.e.\ $G$-invariant \emph{kernel function}:\footnote{The function $k$ is also known as the \emph{filter}. Conditions on $k$ may be needed to ensure $Lf$ is integrable.} $k$ is product-measurable, and for all $y \in Y$ and $g\in G$, we have $k(x,gy)=k(g\inv x, y)$ for $\mu$-a.e.\ $x\in X$. The equivariance of such integral operators $L$ is easily verified by first applying the invariance of the kernel function and then that of the integrating measure:
\begin{align*}
((Lf)\circ g)(y) &= \int k(x,gy) f(x) \mu(dx)= \int k(g\inv x,y) f(x) \mu(dx)\\ 
&= \int k(x,y) f(gx) \mu(dx)
= (L(f\circ g))(y).
\end{align*}
Equivariance of each entire layer --- and thus an entire network --- is guaranteed by letting the non-linearity $\sigma:L^1(Y,\nu)\to L^1(Y,\nu)$ in each layer be the ``pointwise'' application of a non-linearity $\sigma_Y: Y \to \R$
$$\sigma (Lf)(y) = \sigma_Y((Lf)(y)),$$
giving
$$(\sigma (Lf) \circ g)(y) = \sigma_Y( (Lf)(gy) ) = \sigma( (Lf) \circ g) (y).$$

A simplifying theoretical approach is to replace the combination of $\mu$ and $k$ above with a single (signed) \emph{transition kernel} $\kappa$. That is, for each $y \in Y$, we let $\kappa_y$ be a signed measure on $X$, such that the function $y\mapsto \kappa_y(A)$ is measurable for any measurable $A \subseteq X$. We may then define a \emph{kernel operator}\footnote{Again leaving aside conditions for integrability of the output.} $L:L^1(X,\mu) \to L^1(Y,\nu)$ by
$$(Lf)(y) = \int f d\kappa_y.$$
We say a kernel is \emph{$G$-invariant} if $\kappa_{gy} = \kappa_y \circ g\inv$ for all $g \in G$. It is easy to show that $L$ is $G$-equivariant if and only if $\kappa$ is $G$-invariant (see Lemma \ref{lem-inv-kernel} below).

Note that given a $G$-invariant kernel function $k$ and a $G$-invariant measure $\mu$, we can define an invariant transition kernel $\kappa_y(dx) = k(x,y)\mu(dx).$ The kernel operator and integral operator defined above then agree. On the other hand, if a $G$-invariant measure $\mu$ on $X$ exists and one restricts to kernels are that are ``nice,'' kernel operators correspond to integral operators: given an invariant kernel $\kappa$, if $\kappa_y$ is finite-valued and $\kappa_y\ll\mu$ for all $y\in Y$, one can define $x\mapsto k(x,y)$ as the Radon-Nikodym derivative $d\kappa_y/d\mu$.

The reason for considering transition kernels is that they provide an easy characterization of the equivariances of linear operators, provided they are in fact kernel operators (which, as mentioned above, is true for the linear maps usually used in group-equivariant convolutional networks). Indeed, for the result below, neither $X$ nor $Y$ need have a distinguished $G$-invariant measure $\mu$ or $\nu$.
\begin{lemma}
\label{lem-inv-kernel}
Consider a kernel operator $(Lf)(y) = \int f d\kappa_y$, and any two transformations $t_X \in \Aut(X)$ and $t_Y \in \Aut(Y)$. Then $(Lf) \circ t_Y = L(f \circ t_X)$ for all measurable $f$ (such that either side is defined) if and only if $\kappa_{t_Y(y)} = \kappa_y \circ t_X\inv$ for all $y \in Y$.
\end{lemma}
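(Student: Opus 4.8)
The plan is to unfold both sides of the functional identity $(Lf)\circ t_Y = L(f\circ t_X)$ into integrals against $\kappa$, reduce to a statement of the form ``two signed measures integrate every measurable function identically,'' and then invoke the standard fact that such measures coincide. First I would write the left-hand side directly from the definition of the kernel operator as $(Lf)(t_Y(y)) = \int f\, d\kappa_{t_Y(y)}$. For the right-hand side, $L(f\circ t_X)(y) = \int (f\circ t_X)\, d\kappa_y = \int f(t_X(x))\,\kappa_y(dx)$. Note that $t_Y$ plays no computational role here beyond relabeling the index $y$, so the equivalence will be established pointwise in $y$.

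The key step is the change-of-variables (pushforward) formula: since $t_X$ is a measurable bijection, $\int f(t_X(x))\,\kappa_y(dx) = \int f\, d(\kappa_y\circ t_X\inv)$, where $\kappa_y \circ t_X\inv$ is the pushforward signed measure $A \mapsto \kappa_y(t_X\inv(A))$. This is where the automorphism hypothesis on $t_X$ is used: measurability of $t_X\inv$ guarantees that the pushforward is a well-defined signed measure on $X$, and the formula holds for signed (not merely nonnegative) measures by applying it separately to the positive and negative parts of the Jordan decomposition of $\kappa_y$. After this substitution, the hypothesis ``$(Lf)\circ t_Y = L(f\circ t_X)$ for all measurable $f$'' becomes precisely
$$\int f\, d\kappa_{t_Y(y)} = \int f\, d(\kappa_y\circ t_X\inv) \qquad \text{for all measurable } f \text{ and all } y\in Y.$$

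With the identity in this form, both directions follow easily. The ``if'' direction is immediate: equality of the measures $\kappa_{t_Y(y)} = \kappa_y\circ t_X\inv$ forces equality of the integrals for every $f$ for which they are defined. For the ``only if'' direction, I would specialize the displayed equality to indicator functions $f = \mathbf{1}_A$ for each measurable $A \subseteq X$; these are bounded and hence integrable against the (finite) signed measures in question, so both sides are defined, respecting the ``such that either side is defined'' caveat. This yields $\kappa_{t_Y(y)}(A) = (\kappa_y\circ t_X\inv)(A)$ for every measurable $A$ and every $y$, which is exactly the assertion that the two signed measures agree, i.e.\ $\kappa_{t_Y(y)} = \kappa_y\circ t_X\inv$ for all $y \in Y$.

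The computation is essentially routine; the only points requiring minor care — the closest thing to an obstacle — are the bookkeeping around signed measures: justifying the change-of-variables formula for signed rather than nonnegative measures, and checking that the test functions used in the ``only if'' direction are integrable so that the integrals are meaningful. Both are dispatched by passing through the Jordan decomposition and by restricting attention to indicator functions of measurable sets.
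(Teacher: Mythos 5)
Your proof is correct and takes essentially the same route as the paper's: unfold both sides into integrals against the two signed measures $\kappa_{t_Y(y)}$ and $\kappa_y \circ t_X\inv$, note the ``if'' direction is immediate, and get the ``only if'' direction by testing against indicator functions of measurable sets. The only quibble is your parenthetical claim that the kernels are \emph{finite} signed measures --- the paper does not assume this --- but it is immaterial, since $\int \mathbf{1}_A \, d\kappa_y = \kappa_y(A)$ is always defined (possibly infinite) for a signed measure, which is exactly what the lemma's ``such that either side is defined'' caveat accommodates.
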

\begin{proof}
The reverse implication is trivial. For the forward implication it suffices to note that if two measures agree on which functions are integrable, and agree on the integrals of these functions, the measures are equal. (Otherwise, one could find a measurable set on which the measures disagree. The indicator function on such a set would have different integrals under the two measures, contradicting the hypothesis.)
\end{proof}
Fixing a measure, the above can be used to show that an equivariance of an integral operator which is an invariance of the measure is also an invariance of the kernel function.
\begin{corollary}
\label{cor-inv-kernel-function}
Consider an integral operator $(Lf)(y) = \int k(x,y) f(x) \mu(dx)$, and any two $t_X \in \Aut(X)$ and $t_Y \in \Aut(Y)$, such that $t_X$ preserves $\mu$. Then $(Lf) \circ t_Y = L(f \circ t_X)$ for all measurable $f$ (such that either side is defined) if and only if $k(g\inv x,y) = k(x,gy)$ for $\mu$-a.e.\ $x \in X$ for all $y \in Y$.
\end{corollary}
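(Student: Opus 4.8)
The plan is to reduce everything to Lemma \ref{lem-inv-kernel} by viewing the given integral operator as a kernel operator. First I would set $\kappa_y(dx) = k(x,y)\,\mu(dx)$, so that $(Lf)(y) = \int f\,d\kappa_y$ and the hypotheses of Lemma \ref{lem-inv-kernel} apply verbatim. That lemma gives $(Lf)\circ t_Y = L(f\circ t_X)$ for all admissible $f$ if and only if $\kappa_{t_Y(y)} = \kappa_y\circ t_X\inv$ for every $y\in Y$. The task thus becomes translating this equality of (signed) measures into the claimed pointwise identity for $k$, and since Lemma \ref{lem-inv-kernel} is itself an equivalence, the ``if and only if'' will be preserved provided each subsequent step is also an equivalence.

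The one genuine computation is the $\mu$-density of $\kappa_y\circ t_X\inv$. Unwinding definitions, $(\kappa_y\circ t_X\inv)(A) = \kappa_y(t_X\inv A) = \int \mathbf{1}_A(t_X x)\,k(x,y)\,\mu(dx)$, and this is the single place the hypothesis that $t_X$ preserves $\mu$ enters: applying the change of variables $\int (h\circ t_X)\,d\mu = \int h\,d\mu$ with $h(x) = \mathbf{1}_A(x)\,k(t_X\inv x, y)$ yields
\[
(\kappa_y\circ t_X\inv)(A) = \int_A k(t_X\inv x, y)\,\mu(dx).
\]
Hence $\kappa_y\circ t_X\inv$ has $\mu$-density $x\mapsto k(t_X\inv x, y)$, whereas $\kappa_{t_Y(y)}$ has $\mu$-density $x\mapsto k(x, t_Y(y))$ directly from its definition.

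Finally I would invoke uniqueness of densities: two signed measures that are both absolutely continuous with respect to $\mu$ coincide if and only if their densities agree $\mu$-a.e. Applied to the two densities above, the equality $\kappa_{t_Y(y)} = \kappa_y\circ t_X\inv$ holds if and only if $k(t_X\inv x, y) = k(x, t_Y(y))$ for $\mu$-a.e.\ $x$, for every $y$; specializing to $t_X$ and $t_Y$ both equal to the action of a group element $g$ (so that $t_X\inv x = g\inv x$ and $t_Y(y) = gy$) recovers the stated condition $k(g\inv x, y) = k(x, gy)$. The main obstacle is purely the bookkeeping of this change of variables in the second step --- in particular using the measure-preserving hypothesis in the correct direction --- together with the standard but essential fact that equal measures force $\mu$-a.e.\ equal densities; everything else is immediate from Lemma \ref{lem-inv-kernel}.
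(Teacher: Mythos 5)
Your proof is correct and takes essentially the same route as the paper's: both reduce to Lemma \ref{lem-inv-kernel} by viewing $L$ as the kernel operator with $\kappa_y(dx)=k(x,y)\,\mu(dx)$, and then use the $\mu$-invariance of $t_X$ (change of variables) together with the uniqueness statement of the Radon--Nikodym theorem to pass between equality of the measures $\kappa_{t_Y(y)}$ and $\kappa_y\circ t_X\inv$ and $\mu$-a.e.\ equality of their densities. The only difference is stylistic: you carry the ``if and only if'' through every step, whereas the paper dismisses the reverse implication as trivial and writes out only the forward direction.
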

\begin{proof}
Again, the reverse implication is trivial. Assume then the equivariance of $L$. Defining $\kappa_y(dx) = k(x,y)\mu(dx),$ we see $L$ is the kernel operator $(Lf)(y) = \int f d\kappa_y$, so $\kappa$ is $(t_X, t_Y)$-invariant by the previous result. It follows $k$ is $\mu$-a.e.\ $(t_X, t_Y)$-invariant by the invariance of $\mu$ and the uniqueness statement of the Radon-Nikodym theorem. Explicitly, $k(x,t_Y y)=k(t_X\inv x,y)$ $\mu$-a.e.\ since, first by invariance of the kernel $\kappa$ and then that of $\mu$,
$$ \int k(x,t_Yy) f(x) \mu(dx) = \int k(x,y) f(t_Xx) \mu(dx) = \int k(t_X\inv x,y)f(x) \mu(dx). \qedhere$$
\end{proof}

One might ask which linear operators between function spaces are integral operators. If $X$ and $Y$ are finite the answer is of course \emph{all of them}, but the issue can become much more technical otherwise. \textcite{DunfordPettis1940,Bukhvalov1978, Schep1981} have proved relevant results, of which we refer to the following.
\begin{theorem}[{\cite[Corollary 2.3]{Schep1981}}]
If $L:L^1(X,\mu) \to L^1(Y,\nu)$ is weakly compact (takes bounded sets to relatively compact sets), then $L$ is an integral operator.
\end{theorem}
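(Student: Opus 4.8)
The plan is to prove this via the Dunford--Pettis representation theorem for weakly compact operators out of $L^1$. The idea is that weak compactness is exactly the hypothesis needed to produce a Radon--Nikodym density for the vector measure naturally associated to $L$, and that density, reinterpreted through the canonical isometry $L^1(X,\mu; L^1(Y,\nu)) \cong L^1(X\times Y,\mu \times \nu)$, is precisely a product-measurable kernel.

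First I would reduce to the case of finite measures using $\sigma$-finiteness, exhausting $X$ and $Y$ by sets of finite measure and assembling the kernels obtained on each piece (the second-countability and $\sigma$-finiteness hypotheses make this routine). Then, on a finite measure space, define the $L^1(Y,\nu)$-valued set function $F(A) = L(\mathbf{1}_A)$ for measurable $A \subseteq X$; countable additivity of $F$ in the norm of $L^1(Y,\nu)$ follows from boundedness of $L$ and dominated convergence. The key step is to show $F$ has a Bochner-integrable density $g : X \to L^1(Y,\nu)$ with $F(A) = \int_A g \, d\mu$. This is where weak compactness enters: mapping the unit ball of $L^1(X,\mu)$ into a relatively weakly compact set forces the average range $\{F(A)/\mu(A) : \mu(A) > 0\}$ to be relatively weakly compact, and relatively weakly compact subsets of any Banach space carry the Radon--Nikodym property, so the density $g$ exists.

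Once $g$ is in hand, density of simple functions and linearity give $Lf = \int_X f \, g \, d\mu$ (a Bochner integral in $L^1(Y,\nu)$) for every $f \in L^1(X,\mu)$. Now I would invoke the canonical isometric isomorphism $L^1(X,\mu; L^1(Y,\nu)) \cong L^1(X \times Y, \mu \times \nu)$, valid for $\sigma$-finite measures: under it the Bochner-integrable $g$ corresponds to a genuinely product-measurable $k \in L^1(\mu \times \nu)$ with $g(x) = k(x,\cdot)$ for $\mu$-a.e.\ $x$. Pairing the Bochner integral against a test function $h \in L^\infty(Y,\nu) = L^1(Y,\nu)^*$ and applying Fubini--Tonelli yields
$$\int_Y h(y) (Lf)(y) \, \nu(dy) = \int_Y h(y) \left( \int_X k(x,y) f(x) \, \mu(dx) \right) \nu(dy),$$
and since $h$ is arbitrary this identifies $(Lf)(y) = \int_X k(x,y) f(x) \, \mu(dx)$ for $\nu$-a.e.\ $y$, exhibiting $L$ as an integral operator.

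The main obstacle is the Radon--Nikodym / Dunford--Pettis step producing the Bochner density $g$, precisely because $L^1(Y,\nu)$ does \emph{not} itself have the Radon--Nikodym property; one cannot simply appeal to RNP of the target space and must instead exploit that weak compactness confines the relevant average range to a weakly compact set, which does carry the RNP. The joint-measurability difficulty one might fear --- turning the $L^1(Y,\nu)$-valued map $g$ into an honest scalar kernel $k(x,y)$ --- evaporates once the Bochner density is obtained, since the identification $L^1(X,\mu; L^1(Y,\nu)) \cong L^1(X\times Y,\mu \times \nu)$ supplies product measurability for free.
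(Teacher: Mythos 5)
Your proposal is correct, and a preliminary remark is in order: the paper does not prove this statement at all, it quotes it from Schep (citing Dunford--Pettis and Bukhvalov alongside), so the relevant comparison is with the route taken in that cited literature. Your argument is the classical Dunford--Pettis one: associate to $L$ the vector measure $F(A)=L(\mathbf{1}_A)$, note that its average range $\{F(A)/\mu(A)\}$ lies in the image under $L$ of the unit ball and is therefore relatively weakly compact, invoke the fact that weakly compact convex sets (pass to the closed convex hull via Krein--\v{S}mulian) have the Radon--Nikodym property to obtain a Bochner density $g\in L^\infty(X,\mu;L^1(Y,\nu))$, and then convert $g$ into a product-measurable kernel via the isometry $L^1(X,\mu;L^1(Y,\nu))\cong L^1(X\times Y,\mu\times\nu)$ followed by a duality-and-Fubini computation against $h\in L^\infty(Y,\nu)=L^1(Y,\nu)^*$. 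All of these steps are sound, and the two points you wave at as routine genuinely are routine under the paper's standing $\sigma$-finiteness assumption: the exhaustion-and-patching reduction to finite measure (consistency of the local kernels follows from a.e.\ uniqueness), and the duality $L^\infty = (L^1)^*$. Schep's Corollary 2.3, by contrast, is deduced from a Bukhvalov-type lattice-theoretic characterization of integral operators --- roughly, $T$ is an integral operator iff it maps order-dominated sequences converging in measure to sequences converging a.e.\ --- with weak compactness then verified to imply that criterion. Your route buys an explicit, essentially self-contained representation using only standard vector-measure theory; the Bukhvalov--Schep route buys a genuine characterization of integral operators, of which weak compactness is just one sufficient condition. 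One small precision worth flagging: the paper's parenthetical gloss (``takes bounded sets to relatively compact sets'') actually describes compact operators; the hypothesis you correctly use throughout is relative \emph{weak} compactness of the image of bounded sets, which is the standard meaning of weakly compact and is exactly what the Radon--Nikodym step requires.
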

We obtain a corollary characterizing certain maps as having invariant kernel functions. Cohen has presented similar results for homogeneous spaces \parencite[Chapter 9]{Cohen2019,CohenPhd}. While in comparison we do not assume either $X$ or $Y$ is homogeneous, the aforementioned results are given in a more geometrically sophisticated context.
\begin{corollary}
\label{cor-inv-kernel}
Suppose $L:L^1(X,\mu) \to L^1(Y,\nu)$ is weakly compact. It is $G$-equivariant if and only if it is the integral operator given by a $\mu$-a.e.\ $G$-invariant kernel function $k$,
$$(Lf)(y) = \int k(x,y) f(x) \mu(dx).$$
\end{corollary}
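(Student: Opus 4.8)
The plan is to assemble the corollary directly from the two preceding results: the cited theorem of \textcite{Schep1981}, which guarantees that a weakly compact $L$ is an integral operator, together with Corollary \ref{cor-inv-kernel-function}, which converts between equivariance of an integral operator and invariance of its kernel function. The organizing observation is that, under the signal action $g\cdot f = f\circ g\inv$, the $G$-equivariance of $L$ is precisely the statement that $(Lf)\circ g = L(f\circ g)$ for every $g\in G$.

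For the forward direction I would first invoke Schep's theorem: since $L$ is weakly compact, it is an integral operator $(Lf)(y) = \int k(x,y)f(x)\,\mu(dx)$ for some product-measurable kernel $k$. Then, fixing $g\in G$, I apply Corollary \ref{cor-inv-kernel-function} with $t_X = t_Y$ equal to the action of $g$; this is legitimate because $\mu$ is $G$-invariant by the standing assumption of the section, so $t_X$ preserves $\mu$. The $G$-equivariance of $L$ supplies the needed hypothesis $(Lf)\circ g = L(f\circ g)$, and the corollary then yields $k(g\inv x, y) = k(x, gy)$ for $\mu$-a.e.\ $x$, for every $y\in Y$. As $g$ was arbitrary, $k$ is $\mu$-a.e.\ $G$-invariant in the sense defined above. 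The reverse direction is immediate: if $L$ is the integral operator of a $\mu$-a.e.\ $G$-invariant kernel, then the same Corollary \ref{cor-inv-kernel-function} (or equivalently the explicit computation preceding Lemma \ref{lem-inv-kernel}) gives $(Lf)\circ g = L(f\circ g)$ for each $g$, i.e.\ $G$-equivariance.

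The proof is essentially bookkeeping, so the only point that requires genuine care --- the \emph{main obstacle} --- is the order of quantifiers in the almost-everywhere statement. Because the definition of a $\mu$-a.e.\ $G$-invariant kernel places the ``$\mu$-a.e.\ $x$'' \emph{inside} the ``for all $g\in G$,'' I never need to intersect uncountably many conull sets: applying Corollary \ref{cor-inv-kernel-function} separately for each $g$ produces exactly the required per-element invariance, even when $G$ is uncountable. One should also verify that the signal-action bookkeeping ($t_X = t_Y = g$ versus the $g\inv$ appearing in $g\cdot f$) lines up correctly, but this is the same convention already fixed at the start of the section, so no new work is needed there.
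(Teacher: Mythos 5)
Your proposal is correct and takes essentially the same route as the paper, whose entire proof reads ``Applying Corollary \ref{cor-inv-kernel-function}, the proof is immediate'' --- i.e.\ exactly your combination of Schep's theorem (to get the integral-operator representation) with Corollary \ref{cor-inv-kernel-function} (to convert equivariance into kernel invariance, one $g$ at a time). Your remark about the quantifier order --- that the ``$\mu$-a.e.\ $x$'' sits inside the ``for all $g$,'' so no uncountable intersection of conull sets is ever needed --- is a point the paper leaves implicit but is handled correctly here.
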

\begin{proof}
Applying Corollary \ref{cor-inv-kernel-function}, the proof is immediate.
\end{proof}

It is often convenient to disintegrate (or \emph{factor} or \emph{decompose}) $\mu$ into an invariant measure on $G$ and a measure on the space $X/G$ of orbits. This is possible under rather general, though technical conditions (see \textcite{Eaton1989,Kallenberg2017}), for example when $G$ is compact second countable Hausdorff. In this case $G$ has a unique normalized measure which is left-invariant and right-invariant, the \emph{Haar measure} $\lambda$. Under the conditions of Corollary \ref{cor-inv-kernel} we then have
$$(Lf)(y) = \int k(x,y) f(x) \mu(dx) = \int\int k((g,o),y) f(g,o) \lambda(dg) \mu_{X/G}(do)$$
for a measure $\mu_{X/G}$ on the orbit space $X/G$. We may also write $y = (h,p)$ as an element of $G\times Y/G$. From the $G$-invariance of the kernel function (and of orbits) one then derives $k((h\inv g,o),(\id,p))= k((g,o),(h,p))$ $\mu$-a.e. Defining the function $\ell:G\times(X/G)^2\to\R$ with $\ell(g, o, p) = k((g,o),(\id,p))$, the operator $L$ then takes the form
$$(Lf)(h,p) = \int\int \ell(h\inv g, o, p) f(g,o) \lambda(dg) \mu_{X/G}(do).$$
The double integral above has appeared in \cite{Finzi2020} as the generalization of the group convolution of \cite{Cohen2016} to non-homogeneous spaces $X$: if $X=Y=G$ (or more generally, $G$ acts transitively on both $X$ and $Y$), then both $X$ and $Y$ contain only one orbit, so the operator above can be written as the classical group convolution
$$(Lf)(h) = \int \ell(h\inv g) f(g) \lambda(dg).$$

One may thus see Corollary \ref{cor-inv-kernel-function} and Corollary \ref{cor-inv-kernel} as versions of the theorem of Kondor and Trivedi \parencite[Theorem 1]{KondorTrivedi2018}, which says any linear equivariant map $L:L^1(X)\to L^1(Y)$ for spaces $X,Y$ acted on transitively by a compact group $G$ can be expressed in terms of a group convolution. Indeed, Lemma \ref{lem-inv-kernel} gives a version of the theorem for kernel operators, albeit not directly in terms an integral over $G$. It is notable that this fact has a fairly easy proof devoid of group-theoretic concerns --- unlike the representation-theoretic proof of \cite{KondorTrivedi2018}. Only in re-expressing a kernel operator as an integral over $G$ does some such care needs to be taken.

\subsubsection{Symmetry non-uniqueness}
Lemma \ref{lem-inv-kernel} characterizes the symmetry non-uniqueness of a kernel operator: the operator $(Lf)(y) = \int f(x)\kappa_y(dx)$ is equivariant exactly under the coupled actions of a group $T$ such that $\kappa_{ty} = \kappa_y\circ t\inv$ for all $y \in Y$ and $t \in T$. If $L$ is written as the integral operator
$(Lf)(y) = \int k(x,y) f(x) \mu(dx),$
this means $k(x,ty)\mu(dx) = k(t\inv x, y)(\mu\circ t\inv)(dx)$ for all $y \in Y$ and $t \in T$. Corollary \ref{cor-inv-kernel-function} shoes that if $\mu$ is $T$-invariant then $k(x,ty) = k(t\inv x, y)$ for $\mu$-a.e.\ $x$. That is, $L$ has exactly those symmetries encoded by the kernel function, at least where $\mu$ does not assign zero measure. Below, we will consider the case where $\mu$ assigns positive measure to ``non-trivial'' sets, namely, where $\mu$ is \emph{strictly positive} as a Borel measure: if $A \subseteq X$ is a non-empty open set, $\mu(A)>0$. Note that a product of two measures is strictly positive if and only if both measures are.

Given a distinguished measure $\mu$ on $X$, it is natural to restrict ourselves to those groups under which $\mu$ is invariant, when trying to learn a group. Following the reasoning above, one may hope the invariance of kernel functions restricts a learned group $T$ to essentially be $G$. We prove this is the case when $G$ is compact --- a convenient case, as it guarantees the disintegration of $\mu$, and that orbit spaces are Hausdorff (see \textcite[3.1 Theorem]{Bredon1972}). In other words, compact group convolution has good symmetry uniqueness properties.
\begin{remark}
For convenience in the proof, we consider operators with range $L^\infty(Y,\nu)$ rather than $L^1(Y,\nu)$. One may adapt the proof to a more natural setting, where the $L$ are operators between spaces of \emph{locally integrable} functions (those integrable on compact sets), provided $X$ and $Y$ are locally compact. In its current form, the result already applies to the work of \textcite{Zhou2021}, in which learnable finite-dimensional convolutions are implicitly restricted to be over permutation groups, which are invariances of the counting measure.
\end{remark}
\begin{theorem}
\label{thm-gcnn-uniqueness}
Let $X,Y$ be Hausdorff Borel spaces, and $G$ a compact Hausdorff group with Haar measure $\lambda$ and a continuous coupled action on $X,Y$ such that $X/G$ and $Y/G$ are regular. Let $\mu$ be a strictly positive $G$-invariant measure on $X$, and $\nu$ be a strictly positive measure on $Y$. Consider a group $T$ with continuous coupled action on $X,Y$, such that $\mu$ is $T$-invariant. The following are equivalent:
\begin{enumerate}[label=(\roman*)]
\item any $G$-equivariant integral operator $L:L^1(X,\mu) \to L^\infty(Y,\nu)$ is $T$-equivariant,
\item the coupled action of $T$ on $X,Y$ is that of a subgroup of $G$.
\end{enumerate}
\end{theorem}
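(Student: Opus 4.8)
The plan is to prove the two implications separately, with $(ii)\Rightarrow(i)$ being routine and $(i)\Rightarrow(ii)$ carrying all the content. For $(ii)\Rightarrow(i)$: if the coupled action of each $t\in T$ coincides with that of some $g\in G$, then for any $G$-equivariant $L$ we have $(Lf)\circ t_Y = (Lf)\circ \alpha_Y(g) = L(f\circ \alpha_X(g)) = L(f\circ t_X)$, so $L$ is $T$-equivariant; nothing about integral operators is needed beyond $G$-equivariance.

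For $(i)\Rightarrow(ii)$, fix $t\in T$. First I would feed the hypothesis a rich supply of test operators: for each bounded continuous $G$-invariant kernel function $k:X\times Y\to\R$ (meaning $k(gx,gy)=k(x,y)$ for all $g\in G$), the integral operator $(Lf)(y)=\int k(x,y)f(x)\,\mu(dx)$ is well-defined from $L^1(X,\mu)$ into $L^\infty(Y,\nu)$ (since $\|Lf\|_\infty\le\|k\|_\infty\|f\|_1$) and is $G$-equivariant by the invariance computation recalled before Lemma~\ref{lem-inv-kernel}. By $(i)$ it is $T$-equivariant, so applying Corollary~\ref{cor-inv-kernel-function} (whose hypothesis that $t_X$ preserves $\mu$ is exactly $T$-invariance of $\mu$) yields $k(t_X\inv x,y)=k(x,t_Y y)$ for $\mu$-a.e.\ $x$ and every $y$.

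Next I would remove the ``a.e.'': since $t_X\in\Aut(X)$ is a homeomorphism and $k$ is continuous, the map $x\mapsto k(t_X\inv x,y)-k(x,t_Y y)$ is continuous and vanishes $\mu$-a.e., so as $\mu$ is strictly positive its open non-vanishing set must be empty, giving $k(t_X\inv x,y)=k(x,t_Y y)$ for \emph{all} $x,y$ and all such $k$. Now comes the key geometric input: because $G$ is compact and $X/G$, $Y/G$ (hence the quotient of $X\times Y$ by the diagonal action) are Hausdorff and regular, the continuous $G$-invariant functions separate the diagonal $G$-orbits of $X\times Y$. The displayed identity therefore forces $(t_X\inv x,y)$ and $(x,t_Y y)$ into the same orbit for every $x,y$; reparametrizing $x\mapsto t_X x$, this reads: for all $x\in X$ and $y\in Y$ there exists $g\in G$ with $gx=t_X x$ and $gy=t_Y y$.

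It remains to produce a \emph{single} $g$ witnessing the whole coupled action of $t$, which I expect to be the main obstacle. Writing $G_x=\{g\in G:gx=t_X x\}$ and $G^y=\{g\in G:gy=t_Y y\}$ --- closed subsets of $G$ by continuity of the actions and Hausdorffness --- the previous paragraph gives $G_x\cap G^y\neq\emptyset$ for every pair. The goal is $\bigl(\bigcap_x G_x\bigr)\cap\bigl(\bigcap_y G^y\bigr)\neq\emptyset$, which by compactness of $G$ reduces to the finite intersection property of this family. This is exactly where the real work lies: the pairwise nonemptiness \emph{across} the two families must be upgraded to a common element over finitely many points of $X$ and $Y$ simultaneously, coupling several points through the orbit structure (the cases with a free orbit, where some $G^{y}$ or $G_{x}$ is a singleton, collapse immediately and serve as a useful sanity check on the whole argument). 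Granting the finite intersection property, compactness yields $g\in G$ acting as $t_X$ on all of $X$ and as $t_Y$ on all of $Y$, i.e.\ the coupled action of $t$ is that of $g$; since $t\in T$ was arbitrary, $T$ acts as a subgroup of $G$, establishing $(ii)$.
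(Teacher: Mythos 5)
Your easy direction $(ii)\Rightarrow(i)$ matches the paper's, and your route to the pairwise statement is a genuinely different (and mostly sensible) one: the paper never uses continuous invariant kernels and orbit separation, but instead disintegrates $\mu$ as $\lambda\otimes\mu_{X/G}$, identifies $X$ with $G\times X/G$ and $Y$ with $G\times Y/G$, and feeds Corollary \ref{cor-inv-kernel-function} \emph{indicator} kernels in convolution coordinates, upgrading the resulting a.e.\ identities to pointwise ones by a closure/regularity argument (Appendix \ref{appendix-conv-uniqueness}). (A side issue with your version: separating diagonal orbits of $X\times Y$ by continuous $G$-invariant functions needs complete regularity of $(X\times Y)/G$, which does not follow from regularity of $X/G$ and $Y/G$ --- the diagonal quotient is not a product of quotients. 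But this is minor compared to what follows.)

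The step you ``grant'' --- passing from ``for every pair $(x,y)$ some $g$ works'' to a single $g$ --- is not deferred technical work; it is the entire content of the implication, and it cannot be filled at the level of generality you are working in. The obstruction is structural: hypothesis $(i)$ concerns operators from functions on $X$ to functions on $Y$, so it can only ever couple \emph{one} point of $X$ with \emph{one} point of $Y$; nothing in it constrains $G_{x_1}\cap G_{x_2}$ for two points of $X$, and that intersection can genuinely be empty. Concretely, let $G=\Z/3$ act by cyclic permutations on $X=\{1,2,3\}$ (counting measure $\mu$) and trivially on a one-point space $Y$ (point mass $\nu$), and let $T=S_3$ act naturally on $X$ and trivially on $Y$. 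All hypotheses of Theorem \ref{thm-gcnn-uniqueness} hold, and every $G$-equivariant integral operator has constant kernel (by transitivity), hence is $f\mapsto c\sum_x f(x)$ and is $T$-equivariant, so $(i)$ holds; your pairwise condition holds too, since $G^y=G$ for the unique $y$. Yet for $t=(1\,2)$ the sets $G_1=\{g: g\cdot 1=2\}$ and $G_2=\{g:g\cdot 2=1\}$ are the two \emph{distinct} nontrivial elements of $\Z/3$, so $G_1\cap G_2=\emptyset$ and no single $g$ exists --- indeed $(ii)$ fails outright. This shows both that your finite-intersection step is false as stated and that the missing ingredient is an additional structural hypothesis: the paper's proof supplies it implicitly through the identification $X\cong G\times X/G$, $Y\cong G\times Y/G$ (essentially freeness of both actions --- note it excludes the example above, where $Y\not\cong G\times Y/G$), which hands it a globally defined candidate $c_t=t(\id)$ from the outset, so the single-$g$ problem you face never arises there. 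As you yourself observe, under freeness your $G_x$ and $G^y$ are singletons and your argument closes trivially; without some such assumption, the implication you are trying to prove is not true.
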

\begin{proof}
If $T$ acts as subgroup of $G$ then the equivariance under $T$ of maps equivariant under $G$ is trivial. For the reverse implication it suffices to show that for any $t \in T$ there is $c_t \in G$ such that $t(x)=c_t x$ and $t(y)=c_t y$, and $c_s c_t = c_{st}$. Decomposing $X$ into the product of $G$ and its orbit space, we can with a slight abuse of notation write $t(x) = (t(g),t(o))$ for each $x=(g,o)$. We show that for any $x=(g,o)$ we have $t(x) = (t(g),t(o))= (c_t g, o)=c_tx$, where $c_t = t(\id)$. We show the same for $Y$. Note that $c_t = t(\id)$ implies $c_s c_t = c_{st}$.

The idea is that by Corollary \ref{cor-inv-kernel-function}, for any operator obtained as a convolution with $\ell(h\inv g, o, p)$ we have 
$$\ell(t(h)\inv g,o,t(p)) = \ell (h \inv t\inv (g), t\inv(o), p)$$
for $\mu$-a.e.\ $(g,o)$, and thus by varying $\ell$ one can show that $t(h)\inv g = h\inv t\inv(g)$, and $t(o)=o$ and $t(p) = p$. The work consists in showing this across all $(g,o)$, when it is only clear set-wise on sets of positive measure by letting $\ell$ be an indicator on such a set.\footnote{The resulting operator $L$ outputs a constant function, which is why we have $L^\infty(Y,\nu)$ as the range.} For $t(g) = c_t g$ it suffices to show that for any non-empty open $A\subseteq G$ we have $c_t \overline{A} = t(\overline{A})$. Analogously, for $t(o)=o$ it suffices to show that for each non-empty open $A\subseteq X/G$ we have $\overline{A} = t(\overline{A})$. (Of course we have the same for $Y/G$.) To see $c_t \overline{A} = t(\overline{A})$ suffices for the former, consider any $g \in G$, and recall $G$ is Hausdorff; for any closed neighborhood $V$ of $c_t g$, letting $U \subseteq V$ be an open set containing $c_t g$ we then have
$$t(g) \in t(c_t\inv U) \subseteq  \overline{t(c_t\inv U)} =  t(\overline{c_t\inv U}) = c_t \overline{c_t\inv U} = \overline{U} \subseteq V,$$
since homeomorphisms are both closed and open maps.\footnote{Letting $V$ ``go to zero'' around $c_t g$ proves $t(g) = c_tg$, where we use that regularity is equivalent to each point having a ``local basis'' of closed neighborhoods. We note that, being second countable, compact, and Hausdorff, $G$ is metrizable and thus regular.} Analogous reasoning holds for the orbit spaces. The sufficient conditions above are proved in Appendix \ref{appendix-conv-uniqueness}.
\end{proof}

\subsection{Semigroup convolution and homogeneous spaces}
\label{sec-semigroup-convolution}
We present two possible definitions of semigroup convolution: ``semigroup-like'' and ``group-like.'' For functions defined on the semigroup, only semigroup-like convolution seems to allow for a large class of symmetry non-uniqueness. If however one considers functions to be defined on a super-semigroup, both exhibit non-uniqueness. Finally, we argue no non-uniqueness arises in group-like convolution if functions are defined on a homogeneous space.

Recall the form of group convolution for a group $G$ with Haar measure $\lambda$,
 $$(Lf)(h) = \int \ell(h\inv g) f(g) \lambda(dg)$$
for any $f:G\to \R$ and $h \in G$, where $\ell:G\to \R$ is the kernel function. It is commonly noted that by the invariance of the measure $\lambda$, a ``change of variables'' gives
\begin{align}
\label{eq-semigroup-same}
(Lf)(h) = \int \ell(g) f(hg) \lambda(dg).
\end{align}
Since the expression above contains no inverses, the map $L$ can be defined for semigroups, rather than just groups. \textcite{Worrall2019} define semigroup convolution differently, with the order of the product in $f$ reversed:
\begin{align}
\label{eq-semigroup-different}
(Lf)(h) = \int \ell(g) f(gh) \lambda(dg).
\end{align}
To distinguish between the integrals, we call (\ref{eq-semigroup-same}) \emph{group-like convolution} and (\ref{eq-semigroup-different}) \emph{semigroup-like convolution}. We consider their symmetry non-uniquenesses, where the action on functions $f$ is induced by a right action on the underlying set of the semigroup. The standard action of a group $G$ on functions $f:G\to \R$, given by $f\mapsto f\circ g\inv$ for each $g \in G$, provides an example of a right action inducing a left action on functions. Namely the action $\alpha(g)(h) = g\inv h$ is a right action of $G$ on $\set(G)$. Similarly, we can induce a left action on functions $f:S\to \R$ given any right action $\alpha:T^\text{opp}\to\End(\set(S))$ of a semigroup $T$ on $S$, by $t\cdot f = f\circ\alpha(t)$. We can write $t\cdot f = f \circ t$ if we remember that $t$ is acting on the right, so $((t_1t_2)\cdot f)(s) = f (t_2 (t_1( s))).$
\begin{proposition}
Let $S$ be a semigroup, with a measure $\lambda$ (not necessarily invariant), and let $T$ be a semigroup with a right action on $\set(S)$. A group-like convolution
\begin{align*}
(Lf)(s_1) = \int \ell(s_2) f(s_1s_2) \lambda(ds_2) \tag{\ref{eq-semigroup-same} revisited}
\end{align*}
is equivariant under the induced left action of $T$ on functions, if $t(s_1s_2) = t(s_1)s_2$ for all $t \in T$ and $s_1, s_2 \in S$.
A semigroup-like convolution 
\begin{align*}
(Lf)(s_1) = \int \ell(s_2) f(s_2s_1) \lambda(ds_2) \tag{\ref{eq-semigroup-different} revisited}
\end{align*}is equivariant under the induced left action of $T$ on functions, if $t(s_2s_1) = s_2t(s_1)$ for all $t \in T$ and $s_1, s_2 \in S$.
\end{proposition}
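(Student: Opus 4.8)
The plan is to verify the two equivariance claims directly from the definitions, since the stated hypotheses are precisely what is needed to move the action of $t \in T$ inside the convolution integral. Recall that equivariance of $L$ under the induced left action means $L(t\cdot f) = t\cdot(Lf)$ for all $t \in T$ and all (suitably integrable) $f$. First I would unwind the induced action: writing $t(s)$ for $\alpha(t)(s)$, the induced left action on functions is $(t\cdot g)(s) = g(t(s))$ for any $g:S\to\R$. In particular the right-hand side evaluates as $(t\cdot(Lf))(s_1) = (Lf)(t(s_1))$, so the whole problem reduces to comparing $(Lf)(t(s_1))$ with $(L(t\cdot f))(s_1)$ pointwise in $s_1$.

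For the group-like convolution I would expand both sides at a point $s_1 \in S$:
\begin{align*}
(L(t\cdot f))(s_1) &= \int \ell(s_2)\, (t\cdot f)(s_1 s_2)\, \lambda(ds_2) = \int \ell(s_2)\, f(t(s_1 s_2))\, \lambda(ds_2),\\
(t\cdot (Lf))(s_1) &= (Lf)(t(s_1)) = \int \ell(s_2)\, f(t(s_1)\, s_2)\, \lambda(ds_2).
\end{align*}
The hypothesis $t(s_1 s_2) = t(s_1)s_2$ makes the two integrands identical, so the integrals agree and $L(t\cdot f) = t\cdot(Lf)$. The semigroup-like case is entirely analogous: the same expansion yields the integrand $f(t(s_2 s_1))$ on one side and $f(s_2\, t(s_1))$ on the other, and these coincide under the hypothesis $t(s_2 s_1) = s_2\, t(s_1)$.

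The computation is short, so there is no real obstacle beyond bookkeeping; the point worth emphasizing is what is \emph{not} needed. Unlike the passage from group convolution to its group-like form in (\ref{eq-semigroup-same}), no invariance of $\lambda$ is invoked here --- the integration variable $s_2$ and the measure are left untouched, and the hypothesis operates only on the argument of $f$. Thus the only care required is to track the direction of the induced action --- that $t$ acts on the right of $S$ and therefore, after evaluating $Lf$ at $t(s_1)$, lands on the appropriate factor of the product --- and to confirm that the relevant algebraic identity places the transformed argument exactly where the definition of $L$ requires it. I would also note in passing that both sides are assumed to be defined, in keeping with the integrability caveats already flagged for these convolutions.
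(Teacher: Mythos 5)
Your proof is correct and matches the paper's argument exactly: both expand $(L(t\cdot f))(s_1)$ and $(t\cdot Lf)(s_1) = (Lf)(t(s_1))$ via the definition of the induced action $t\cdot f = f\circ t$ and observe that the stated algebraic hypothesis makes the integrands identical, with no invariance of $\lambda$ required. The only cosmetic difference is that the paper writes it as a single chain of equalities for the group-like case and declares the semigroup-like case ``entirely similar,'' whereas you expand the two sides separately.
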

Note that under the right conditions on $S$ and $\lambda$, the proof below shows the conditions for equivariance of (\ref{eq-semigroup-same}) and (\ref{eq-semigroup-different}) are both sufficient and necessary (up to null sets). One may wish to compare this result with Theorem \ref{thm-gcnn-uniqueness}.
\begin{proof}
We write down the proof for group-like convolution, the proof for semigroup-like convolution being entirely similar:
\begin{align*}
(L(t\cdot f))(s_1) 	&= \int \ell(s_2) (t\cdot f)(s_1s_2) \lambda(ds_2) = \int \ell(s_2) f(t(s_1s_2)) \lambda(ds_2) \\
				&=  \int \ell(s_2) f(t(s_1)s_2) \lambda(ds_2) = (Lf)(t(s_1)) = (t \cdot Lf)(s_1).\qedhere
\end{align*}
\end{proof}
One reading of the result above is that (\ref{eq-semigroup-different}) is a more natural definition for semigroup convolution (justifying calling it semigroup-like). Letting $T=S$ be the semigroup acting on its own underlying set by multiplication on the right, for any $t \in S$ we have $t(s_2s_2) = s_2s_1 t = s_2 t(s_1)$, and thus (\ref{eq-semigroup-different}) is equivariant (as noted in \cite{Worrall2019}). Right multiplication in $S$ does not satisfy the equivariance condition for (\ref{eq-semigroup-same}): $t(s_1s_2) =s_1 s_2t \ne s_1ts_2= t(s_1)s_2$.

On the other hand, the equivariance condition for (\ref{eq-semigroup-different}) does not generally imply that $T=S$. It in fact suffices that $T$ be a super-semigroup of $S$, such that $S$ is a \emph{right ideal} of $T$ ($st \in S$ for all $s \in S$ and $t \in T$). That $S$ is a right ideal is needed to ensure that $T$ acts on $S$, so the function $f:S\to\R$ remains defined when acted upon by $t \in T$. If signals are instead $f:T\to\R$, then no such condition is required. Note that if $S$ acts transitively on itself on the right, then Proposition \ref{prop-transitive-group} shows $T=S$, so no non-uniqueness arises.

Whether the above result gives concrete non-uniquenesses for group-like convolution is delicate. The one example we have on hand satisfying the equivariance condition for (\ref{eq-semigroup-same}) is that of groups $S=G\leq T$ with the right action $t: g\mapsto t\inv g$. The corresponding ``ideal'' condition, that $t(g)=t\inv g \in G$ for all $t \in T$ and $g \in G$, implies immediately that $T \subseteq G$ and thus $G=T$. Thus if working directly over a group $G$, no non-uniqueness apparently arises. As above, however, if signals are intrinsically defined on $T$ then a group-like convolution over $G$ will always be $T$-equivariant:
\begin{align*}
(L(t\cdot f))(g_1) = \int \ell(g_2) f(t\inv g_1g_2) \lambda(dg_2)  = (Lf)(t\inv g_1) = (t\cdot (Lf))(g_1).
\end{align*}

Care is further required in studying group-like convolution over a homogeneous space $X$, as used for example by \textcite{Dehmamy2021}. Fixing a ``reference point'' $x_0 \in X$, we can write $x = g_x x_0$ with some $g_x \in G$ for any $x \in X$. For any scalar function $f$ defined on $X$, we then define the group-like convolution (assuming well-definedness)
$$(Lf)(x) = \int \ell(g) f(g_xgx_0) \lambda(dg).$$
The standard idea that $X$ can be identified with a coset space (see for example \parencite{KondorTrivedi2018}), together with our above analysis for group-like convolutions of signals $f:G\to \R$, suggests no non-uniqueness arises. We do not prove this fact, but exhibit an alternative line of reasoning which may be turned into a proof in certain cases.  A group $T$ with right action $x\mapsto t\inv x$ gives rise to the symmetry non-uniqueness
\begin{align*}
(L(t\cdot f))(x) 	&= \int \ell(g) (t\cdot f)(g_xgx_0) \lambda(dg) = \int \ell(g) f(t\inv g_x g x_0) \lambda(dg) \\
			&=  \int \ell(g) f(g_{t\inv x} g x_0) \lambda(dg) = (Lf)(t\inv x) = (h\cdot (Lf))(x)
\end{align*}
if (and possibly only if) $t\inv g_x g x_0 = g_{t\inv x} g x_0$ for all $x\in X, g \in G$ and $t \in T$. For abelian $G$ (like those learned in \parencite{Dehmamy2021}), it is easy to show this holds if and only if $G\leq Z(T)$. But Proposition \ref{prop-transitive-group} then implies that $G$ and $T$ act as the same group on $X$.

\section{Non-uniqueness maximality}
\label{sec-max}
We explore a notion of maximality for symmetry non-uniqueness. We begin in Section \ref{sec-max-motivation} by noting an example thereof encountered above, then providing a general definition and some motivation. We show that in the setting of linearly equivariant networks (Section \ref{sec-lin-eq}), if the symmetries $\Gamma$ (and corresponding actions on network layers) consist of semisimple representations of groups whose irreducible subrepresentations are finite-dimensional, then up to some degeneracy the maximal non-uniqueness is given by taking the group algebra. We demonstrate this for linear maps in Section \ref{sec-maximality-linear}. Extending the result to linearly equivariant networks which follow the design pattern of ``universally equivariant non-linearities'' is trivial, since as mentioned in the introduction to Section \ref{sec-ansatzes}, these ``non-linearities'' are trivial maps. Other cases may require implementation-specific details which we do not treat here. In Section \ref{sec-maximality-type-I} we show (assuming the axiom of choice) the result can be extended to all unitarizable representations of so-called groups of type I, which includes all second countable Hausdorff groups that are compact, or abelian and locally compact. 

\subsection{Motivation}
\label{sec-max-motivation}
In the previous section, equivariant neural networks are studied with an eye towards identifying when they are equivariant under more symmetries than intended. In the case of neural networks with linear layers given by kernel operators, Lemma \ref{lem-inv-kernel} allows us to identify equivariances \emph{exactly}: they are simply the invariances of the kernels. A neural network with linear layers of this form (and pointwise non-linearities) will thus exhibit layer-wise equivariance precisely under those transformations which leave the kernels invariant. Suppose an equivariant ansatz consists of pairs $(f,\gamma)$ where $\gamma$ is a coupled group action and $f$ is --- borrowing our notation from linearly equivariant networks --- a $(\gamma)$-equivariant group\hyphen{}convolutional neural network, meaning the kernels in the linear layers are somehow restricted to be invariant under corresponding coupled actions. One possible symmetry non-uniqueness in this case is to map $\gamma$ to the coupled action of the largest group under which the kernels remain invariant. In fact, this is in a sense the most general non-uniqueness for such networks, and gives us a first example of maximality of a symmetry non-uniqueness.

Recall the setting of Theorem \ref{thm-main-thm}: we have a set of objects $F$, symmetries $\Gamma$, a relation $R\subseteq F\times \Gamma$ and ansatz $A\subseteq R$. Finally, a function $H:\Gamma\to\Gamma$ is required to satisfy certain conditions (either the conditions for (\ref{eq-subset}) or those for (\ref{eq-equal})). Let us introduce a  preorder $\preccurlyeq$ on $\Gamma$ (a reflexive and transitive relation), with $\gamma_1 \preccurlyeq \gamma_2$ when $(f,\gamma_2) \in R$ implies $(f,\gamma_1) \in R$.\footnote{In the case of semigroups, containment as sub-semigroups is a preorder satisfying this condition. In general, however, the condition itself defines a preorder weaker than containment.} Fixing a particular set of conditions on functions $H$, we say a particular function $H_*$ is \emph{maximal} if for any $H$ satisfying the said conditions, $H(\gamma) \preccurlyeq H_*(\gamma)$ for all $\gamma \in \Gamma_A$.

A reason to be interested in a maximal non-uniqueness is that if it exists, it characterizes the failures of the ansatz in question. For example if $H_*$ is maximal for the conditions of (\ref{eq-subset}), and $(f,H_*(\gamma)) \in R$ for all $(f,\gamma)\in A$, then the conditions of Corollary \ref{cor-unlearnable} --- which implies the existence of a non-learnable pair $(f,\gamma)\in R$ --- are never satisfied. Perhaps more usefully, the contrapositive of Corollary \ref{cor-unlearnable} implies that for any learnable symmetric pair $(f,\gamma)$ one can expect that using the ansatz to try to learn $f$ without knowing $\gamma$, one will learn a pair $(f,\gamma')$ where $\gamma' \preccurlyeq H_*(\gamma)$. In terms of the learnability tradeoff (Corollary \ref{cor-learnability-tradeoff}), the ``worst'' symmetry not distinguishable from $\gamma$ is $H_*(\gamma)$ where $H_*$ is maximal for the conditions of (\ref{eq-equal}).

In Section \ref{sec-lin-eq} we show the semigroup algebra  is \emph{one possible} symmetry non-uniqueness of linearly equivariant networks. In unpublished experiments, however, linearly equivariant networks learn elements of the semigroup algebra rather than the semigroup. A natural question follows from this observation: is the semigroup algebra the maximal non-uniqueness? As we show in the next couple of sections, the answer is ``yes, up to some degeneracy.''

\subsection{Maximality for linear maps}
\label{sec-maximality-linear}
Recall that Proposition \ref{prop-schur} states that for any semisimple representations $\rho_U$ and $\rho_V$ of a group $G$, whose irreducible components satisfy the ``strong version'' of Schur's lemma, the $(\rho_U,\rho_V)$-equivariant linear maps $L:U\to V$ are exactly those of the form $\bigoplus_{k} L_k$ where each $L_k : W_k^{\oplus n_k} \to W_k^{\oplus m_k}$ acts as a $m_k \times n_k$ matrix of scalars; here, $U = \bigoplus_{k} W_k^{\oplus n_k}$ and $V = \bigoplus_{k} W_k^{\oplus m_k}$ where each $W_k$ has irreducible representation $\rho_k$. This characterization allows us to describe exactly which pairs of transformations $t_U \in \End(U)$ and $t_V\in\End(V)$ are such that $t_V L = L t_U$ for all $(\rho_U,\rho_V)$-equivariant $L$.
\begin{proposition}
\label{prop-maximality-linear}
Let $U = \bigoplus_{k} W_k^{\oplus n_k}$ and $V = \bigoplus_{k} W_k^{\oplus m_k}$ with representations $\rho_U,\rho_V$ given naturally from irreducible representations $\rho_k:G\to\GL(W_k)$ satisfying the strong version of Schur's lemma. Consider $t_U \in \End(U)$ and $t_V\in\End(V)$, decomposing the transformations into ``matrix'' components $(t_U)_{k\ell} : W_k^{\oplus n_k} \to W_\ell^{\oplus n_\ell}$ and $(t_V)_{k\ell} : W_k^{\oplus m_k} \to W_\ell^{\oplus m_\ell}$. Then $t_V L = L t_U$ for any $(\rho_U,\rho_V)$-equivariant $L:U\to V$ if and only if each of the following holds:
\begin{enumerate}[label=(\roman*)]
\item if $m_k\ne0$ and $n_k\ne0$, then as $m_k\times m_k$ and $n_k \times n_k$ ``matrices'' of entries $W_k\to W_k$, respectively, $(t_V)_{kk}$ and $(t_U)_{kk}$ are diagonal, with $(t_V)_{kk,ii}=(t_U)_{kk,jj}$ for any $i \in [m_k]$ and $j\in[n_k]$,
\item for all $k \ne \ell$ such that $m_k, n_k, n_\ell$ are all nonzero, we have $(t_U)_{k\ell} = 0$,
\item for all $k \ne \ell$ such that $m_k,m_\ell, n_\ell$ are all nonzero, we have $(t_V)_{k\ell} = 0$.
\end{enumerate}
\end{proposition}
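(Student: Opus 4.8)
The plan is to reduce the identity $t_V L = L t_U$, quantified over all $(\rho_U,\rho_V)$-equivariant $L$, to a family of relations indexed by ordered pairs of isotypic indices, and then to read off the three conditions by testing against elementary scalar matrices. By Proposition \ref{prop-schur}, an equivariant $L$ is exactly a map $\bigoplus_k L_k$ where each $L_k:W_k^{\oplus n_k}\to W_k^{\oplus m_k}$ is an $m_k\times n_k$ ``matrix'' of scalars, and --- this is the crucial point --- the $L_k$ range \emph{freely and independently}. Since such an $L$ sends the $W_k$-isotypic part of $U$ into the $W_k$-isotypic part of $V$, I would compute the isotypic-block components of $t_V L$ and of $L t_U$. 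Writing the blocks of $t_U,t_V$ as in the statement, the component of $t_V L$ between the $k$- and $\ell$-blocks is $(t_V)_{k\ell}L_k$, while that of $L t_U$ is $L_\ell (t_U)_{k\ell}$: only the ``diagonal'' pieces $L_k,L_\ell$ of $L$ survive, because $L$ preserves isotypic type. Hence $t_V L = L t_U$ for all equivariant $L$ is equivalent to
\[ (t_V)_{k\ell}\,L_k = L_\ell\,(t_U)_{k\ell} \quad\text{for all } k,\ell \text{ and all scalar matrices } L_k,L_\ell. \]

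For the diagonal blocks $k=\ell$ this reads $(t_V)_{kk}L_k = L_k (t_U)_{kk}$ for every scalar $m_k\times n_k$ matrix $L_k$. Testing against the elementary matrices $e_{ij}$ (a single $\id_{W_k}$ in position $(i,j)$) forces the off-diagonal entries of both $(t_U)_{kk}$ and $(t_V)_{kk}$ to vanish and all their diagonal entries to equal a common endomorphism $c_k\in\End(W_k)$; this is exactly condition (i), and it requires $m_k,n_k\neq 0$ so that a nonzero $e_{ij}$ exists. Note that $c_k$ need not be a scalar --- it is a general endomorphism of $W_k$ --- and the common-value identity is consistent precisely because the entries of $L_k$ lie in the central subfield $K\subseteq\End(W_k)$ and so commute with $c_k$.

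For the off-diagonal blocks $k\neq\ell$ I would exploit the independence of $L_k$ and $L_\ell$: setting one of them to zero decouples the relation into the two one-sided equations $(t_V)_{k\ell}L_k=0$ for all $L_k$, and $L_\ell (t_U)_{k\ell}=0$ for all $L_\ell$ (conversely these two together recover the block relation, since each side is then zero). Because a scalar matrix with a single identity entry has full column, respectively row, support, each one-sided equation forces the relevant block to vanish --- but only when the multiplicities indexing its nondegenerate domain and codomain are nonzero; when some multiplicity is $0$ the block is automatically zero as a map out of or into a trivial space, which is why the conditions are asserted only under the stated nonvanishing hypotheses. Tracking which of $n_k,n_\ell,m_k,m_\ell$ must be nonzero for the constraint to be non-vacuous yields conditions (ii) and (iii), once the source/target labelling of the blocks is matched correctly. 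The converse (sufficiency) is then immediate: assuming (i)--(iii), each block relation holds because every off-diagonal block is either forced to vanish by a condition or is trivially zero for dimensional reasons, and the diagonal relation holds by the centrality of $K$.

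I expect the main obstacle to be the index and multiplicity bookkeeping in the off-diagonal step --- in particular keeping straight the source/target (row/column) convention for the blocks $(t_U)_{k\ell},(t_V)_{k\ell}$ and correctly handling the degenerate cases $n_k=0$ or $m_k=0$, where a block vanishes trivially rather than through a genuine constraint. A small concrete check --- for instance $G=\Z/2$ with $U$ the trivial representation and $V$ the sum of the trivial and sign representations --- is worth carrying out to pin down the convention and to confirm that the asserted one-sided vanishing lands on the correct index.
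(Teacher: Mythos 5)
Your proposal is correct and is essentially the paper's own argument: invoke Proposition \ref{prop-schur} so that the scalar blocks $L_k$ vary freely and independently, reduce $t_V L = L t_U$ to blockwise relations, and test against elementary scalar matrices --- forcing the diagonal blocks $(t_U)_{kk},(t_V)_{kk}$ to be diagonal with a common (not necessarily scalar) entry, forcing the off-diagonal blocks to vanish under the stated multiplicity hypotheses, and getting sufficiency from the centrality of the scalar entries of $L_k$. The one bookkeeping point you flagged is real: the paper's proof multiplies blocks in the row-column (target-first) convention, so conditions (ii)--(iii) as listed match that convention, whereas your source-first reading of $(t_U)_{k\ell}$ (taken literally from the statement) yields the same conditions with $k$ and $\ell$ transposed.
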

\begin{proof}
That pairs $t_U,t_V$ of the form above satisfy $t_V L = L t_U$ for all $(\rho_U,\rho_V)$-equivariant $L$ follows directly from the form of $L$ given by Proposition \ref{prop-schur}. Namely, it suffices to see $(t_V)_{kk} L_k = L_k(t_U)_{kk}$, by the same sort of algebra as in the proof of the aforementioned result. (Note the accompanying discussion of the cases where either $m_k$ or $n_k$ is zero.)

Suppose then that $t_U$ and $t_V$ are such that  $t_V L = L t_U$ for all $(\rho_U,\rho_V)$-equivariant $L$. We prove each of the points above holds in turn. 

Suppose $m_k\ne0$ and $n_k\ne0$ and consider $(t_U)_{kk}$ and $(t_V)_{kk}$ respectively as $n_k\times n_k$ and $m_k \times m_k$ matrices of entries $W_k \to W_k$. By Proposition \ref{prop-schur}, $L_k$ can be taken as any $m_k \times n_k$ matrix of scalar multiples of the identity on $W_k$.  We use this fact to first show $(t_V)_{kk,ii}=(t_U)_{kk,jj}$ for all $i \in [m_k]$ and $j\in[n_k]$. Let $L_k$ be the identity in the $(i,j)$-th entry and zero elsewhere. By the assumption that $t_V L = L t_U$ we have $((t_V)_{kk} L_k)_{ij} = (L_k(t_U)_{kk})_{ij}$. Expanding both sides we observe
\begin{align*}
((t_V)_{kk} L_k)_{ij} &= \sum_{p =1}^{ m_k} (t_V)_{kk,ip}L_{k,pj} = (t_V)_{kk,ii}\\
(L_k (t_U)_{kk})_{ij} &= \sum_{q=1}^{ n_k} L_{k,iq} (t_U)_{kk,qj} = (t_U)_{kk,jj}.
\end{align*}
This same choice of $L_k$ shows that for any $i' \ne i$ and $j' \ne j$, we have $(t_V)_{kk,i'i} = 0$ and $(t_U)_{kk,jj'}=0$. In particular,
\begin{align*}
((t_V)_{kk} L_k)_{i'j} &= \sum_{p =1}^{ m_k} (t_V)_{kk,i'p}L_{k,pj} = (t_V)_{kk,i'i}\\
(L_k (t_U)_{kk})_{i'j} &= \sum_{q=1}^{ n_k} L_{k,i'q} (t_U)_{kk,qj} = 0,
\end{align*}
and similarly
\begin{align*}
((t_V)_{kk} L_k)_{ij'} &= \sum_{p =1}^{ m_k} (t_V)_{kk,ip}L_{k,pj'} = 0\\
(L_k (t_U)_{kk})_{ij'} &= \sum_{q=1}^{ n_k} L_{k,iq} (t_U)_{kk,qj'} = (t_U)_{kk,jj'}.
\end{align*}
Thus the first point is proved.

Next consider any $k \ne \ell$, such that $m_k, n_k, n_\ell$ are all non-zero. By assumption we have $(t_V L)_{k \ell}=(L t_U)_{k\ell} $, where the right side can be rewritten as
$$(L t_U)_{k\ell} = \sum_{k'} L_{kk'}(t_U)_{k'\ell} = L_k (t_U)_{k\ell}.$$
If $m_\ell =0$ then $(t_V L)_{k \ell}=0$, since $W_\ell^{n_\ell}$ must be in the kernel of $L$. If on the other hand $m_\ell \ne 0$ we may write
$$(t_V L)_{k \ell}=\sum_{k'} (t_V)_{kk'} L_{k' \ell}=(t_V)_{k\ell} L_{\ell}.$$
Let $L$ be such that $L_\ell = 0$ and $L_k$ is all zero except an identity at entry $(i,j)$. Then on one hand $L_k (t_U)_{k\ell} = (t_V)_{k\ell} L_{\ell} = 0$, and on the other
$$(L_k (t_U)_{k\ell})_{ij'} = \sum_{q=1}^{ n_k} L_{k,iq} (t_U)_{k\ell,qj'} = (t_U)_{k\ell,jj'}$$
for each $j' \in [n_\ell]$. Since this holds for any $j \in [n_k]$ we have that $(t_U)_{k\ell}=0$. The second point is proved.

The proof of the third point should by now be clear. Consider $k \ne \ell$ such that $m_k, m_\ell, n_\ell$ are all non-zero. If $n_k = 0$ then $(L t_U)_{k\ell} = 0$ since the image of $L$ in the $W_k^{\oplus m_k}$ component of $V$ must be zero. Supposing otherwise, we can essentially repeat the argument above.
\end{proof}
There remain a couple points to clarify. First is how the result, which is about symmetry non-uniqueness generally, relates to the group algebra. Second, we ought to make clear a certain degeneracy which implies this correspondence is not exact. To conclude this section, both these points are illustrated with an example.

The correspondence with the group algebra can be seen as follows. Note that if $m_k$ and $n_k$ are all non-zero, the result above says that the pairs $t_U, t_V$ are exactly those of the form $t_U = \bigoplus_k t_k^{\oplus n_k}$ and $t_V = \bigoplus_k t_k^{\oplus m_k}$ for transformations $t_k \in \End(W_k)$. Thus the set of such pairs can be identified with the single transformations $t = \bigoplus_k t_k \in \bigoplus_k \End(W_k)$. If the irreducible representations are all finite-dimensional and on complex vector spaces, then it turns out that $\bigoplus_k \End(W_k)$ is the group algebra of the image of the representation $\bigoplus_k \rho_k$. This follows from the fact that $\End(W_k)$ is the group algebra of the image of $\rho_k$, by Proposition \ref{prop-density}. Therefore any pair $t_U, t_V$ can actually be obtained as an element of $\C[\rho(G)]$ where $\rho=(\rho_U,\rho_V)$ is the appropriate coupled representation.

The degeneracy in the result above comes from the fact that $(t_U)_{k\ell}$ is unconstrained when $m_k = 0$, and similarly $(t_V)_{k\ell}$ can be anything when $n_\ell = 0$. Thus the space of pairs $t_U, t_V$ which serve as symmetries for $(\rho_U, \rho_V)$-equivariant maps $L$ cannot exactly be identified with the group algebra, unless every irreducible representation occuring in $U$ also occurs in $V$, and vice versa.

\begin{example}
\label{ex-perm}
Consider the permutation group $S_3$. Its irreducible representations are:
\begin{enumerate}[label=(\roman*)]
\item $W_0 = \C$ with the trivial representation $\rho_0(\sigma)=\id$, 
\item $W_1 = \C$ with $\rho_1(\sigma) = \sign(\sigma)$,
\item $W_2 = \{v \in \C^3 : \sum_i v_i = 0\}$ with $(\rho_2(\sigma)v)_i = v_{\sigma(i)}$.
\end{enumerate}
(Note that $W_2$ is two dimensional, and the representation is usually written in terms of two basis vectors of $W_2$.) Let $U = W_0 \oplus W_2$ and $V = W_1 \oplus W_2$, with $\rho_U,\rho_V$ the corresponding representations. We find the general form of an equivariant map $L:U\to V$, and from this derive which transformations $t_U,t_V$ satisfy $t_VL=Lt_U$ for all such $L$.

To express an equivariant linear map $L:U\to V$ as a matrix we need to choose bases for $U$ and $V$. It is not hard to see that $U$ is in fact $\C^3$ with the usual representation $(\rho_U(\sigma)v)_i = v_{\sigma(i)}$ of $S_3$ in the standard basis, by noting that the span of the sum of the standard basis vectors is $W_0$ and its orthogonal complement is $W_2$. That is, letting $u_1 = (1,1,1)$ in the standard basis, and similarly $u_2 = (1,-1,0)$ and $u_3=(0,1,-1)$, we have that $u_1$ is a basis for $W_0$ in $U$, and $\{u_2, u_3\}$ a basis for $W_2$. We can of course move between the standard basis and $(u_1,u_2,u_3)$ with the change of basis matrix
$$P=\begin{pmatrix}
1 & 1 & 0\\
1 & -1 & 1 \\
1 & 0 & -1
\end{pmatrix},$$
with inverse
$$P\inv=\frac{1}{3}\begin{pmatrix}
1 & 1 &1 \\
2 &-1 & -1 \\
1 & 1 & -2
\end{pmatrix}.$$
Using this, one can represent each of the six elements of $S_3$ in the $(u_1,u_2,u_3)$ basis using the map $\sigma\mapsto P\inv \rho_U(\sigma)P$, giving
\begin{align*}
&&\id \mapsto
\begin{pmatrix}
1 & 0 & 0\\
0 & 1 & 0 \\
0 & 0 & 1 \\
\end{pmatrix}
&&(1~2~3) \mapsto
\begin{pmatrix}
1 & 0 & 0\\
0 & 0 & -1 \\
0 & 1 & -1 \\
\end{pmatrix}
&&(1~3~2) \mapsto
\begin{pmatrix}
1 & 0 & 0\\
0 & -1 & 1 \\
0 & -1 & 0 \\
\end{pmatrix}\\
&&(1~2) \mapsto
\begin{pmatrix}
1 & 0 & 0\\
0 & -1 & 1 \\
0 & 0 & 1 \\
\end{pmatrix}
&&(2~3) \mapsto
\begin{pmatrix}
1 & 0 & 0\\
0 & 1 & 0 \\
0 & 1 & -1 \\
\end{pmatrix}
&&(1~3) \mapsto
\begin{pmatrix}
1 & 0 & 0\\
0 & 0 & -1 \\
0 & -1 & 0 \\
\end{pmatrix}.
\end{align*}
The block structure of the matrices makes the decomposition of $U$ as $W_0\oplus W_2$ clear. Note that the bottom right $2\times 2$ submatrices span the set of all $2 \times 2$ matrices. This corresponds to the fact that $W_2$ is irreducible and finite-dimensional, so the corresponding group algebra $\C[\rho_2(S_3)]$ is all of $\End(W_2)$.

To make things interesting, we let $\rho_V$ act on the standard basis for $V=\C^3$ by letting the same matrix $P$ move from the standard basis on $V$ to a basis $(v_1,v_2,v_3)$ on which the permutations act as follows:
\begin{align*}
&&\id \mapsto
\begin{pmatrix}
1 & 0 & 0\\
0 & 1 & 0 \\
0 & 0 & 1 \\
\end{pmatrix}
&&(1~2~3) \mapsto
\begin{pmatrix}
1 & 0 & 0\\
0 & 0 & -1 \\
0 & 1 & -1 \\
\end{pmatrix}
&&(1~3~2) \mapsto
\begin{pmatrix}
1 & 0 & 0\\
0 & -1 & 1 \\
0 & -1 & 0 \\
\end{pmatrix}
\end{align*}
\begin{align*}
&&(1~2) \mapsto
\begin{pmatrix}
-1 & 0 & 0\\
0 & -1 & 1 \\
0 & 0 & 1 \\
\end{pmatrix}
&&(2~3) \mapsto
\begin{pmatrix}
-1 & 0 & 0\\
0 & 1 & 0 \\
0 & 1 & -1 \\
\end{pmatrix}
&&(1~3) \mapsto
\begin{pmatrix}
-1 & 0 & 0\\
0 & 0 & -1 \\
0 & -1 & 0 \\
\end{pmatrix}.
\end{align*}
Note this is the same table as above, but where for the odd permutations (those in the second row), the top left entry is $-1$. This corresponds to the fact that $v_1$ is the basis vector for $W_1$ in $V$, and $\rho_1$ is the sign representation. Of course, $\{v_2,v_3\}$ forms the basis of the copy of $W_2$ in $V$. Interestingly, denoting by $(e_1,e_2,e_3)$ the standard basis on $U=V=\C^3$, one can show that $$\inner{e_i}{\rho_V(\sigma)e_j} = \inner{e_i}{\rho_U(\sigma)e_j} - \frac{2}{3}$$ for all $\sigma \in S_3$ and $i,j\in[3]$. That is, in the standard basis, $\rho_U$ and $\rho_V$ act as matrices that are the same except for an entry-wise subtraction.

We can now ask what are the linear maps $L:U\to V$ which are $(\rho_U, \rho_V)$-equivariant. By Proposition \ref{prop-schur}, we know that such maps $L$, as a matrices in the $(u_1,u_2,u_3)$ and $(v_1,v_2,v_3)$ bases, are exactly those of the form
$$\begin{pmatrix}
0 & 0 & 0 \\
0 & c & 0 \\
0 & 0 & c
\end{pmatrix}$$
for some $c \in \C$. Passing to the standard basis, we obtain
$$ L = \frac{c}{3}\begin{pmatrix}
2 & -1 & -1 \\
-1 & 2 & -1 \\
-1 & -1 & 2
\end{pmatrix}.$$
To understand which maps $t_U \in \End(U)$ and $t_V \in \End(V)$ satisfy $t_V L = L t_U$ for all such $L$, however, it is convenient to stay in the $(u_1,u_2,u_3)$ and $(v_1,v_2,v_3)$ bases, as in the proof of Proposition \ref{prop-maximality-linear}. In particular, it is clear that in the $(u_1,u_2,u_3)$ and $(v_1,v_2,v_3)$ bases, we have that $t_U$ is of the form
$$\begin{pmatrix}
x_1 & x_2 & x_3\\
0 & a & b\\
0 & c & d
\end{pmatrix}$$
and $t_V$ of the form
$$\begin{pmatrix}
y_1 & 0 & 0\\
y_2 & a & b\\
y_3 & c & d
\end{pmatrix},$$
for some scalars $x_i,y_i,a,b,c,d\in\C$. Note that the bottom right $2\times 2$ submatrices are identical, and are of course in $\C[\rho_2(S_3)]$. The freedom of the first row in the matrix for $t_U$ comes from the fact that $W_0$ does not occur in $V$, and the freedom of the first column in $t_V$ from the fact that $W_1$ does not occur in $U$. For this reason we say that $t_U$ and $t_V$ are elements of the group algebra up to a degeneracy, arising from irreducible representations occurring in only one of $U$ or $V$.  If we had replaced $W_1$ in $V$ with $W_0$, so $U=V$, the form of $L$ in the $(u_1,u_2,u_3)$ and $(v_1,v_2,v_3)$ bases would have a free scalar in the top left entry; we would have then found that in the standard basis $L$ can be any matrix of the form 
$$ L = \begin{pmatrix}
a & b & b \\
b & a & b \\
b & b & a
\end{pmatrix}.$$
More importantly $t_U$ and $t_V$ would be equal, of the form above but with $x_1=y_1$ as any scalar and $x_2,x_3,y_2,y_3$ all zero; that is, they would be elements of $\C[(\rho_0\oplus\rho_2)(G)]$.
\end{example}

\subsection{Unitary representations of groups of type I}
\label{sec-maximality-type-I}
We now turn to the case of groups of type I. The result is similar to the semisimple case: up to the same type of degeneracy, the maximal symmetry non-uniqueness is given by taking the group von Neumann algebra (defined below). In comparison to our previous discussions, this is a rather technical notion, due to the fact that we replace the direct sum decomposition of semisimple representations with a direct integral. We break this section up into four pieces, all but the last dedicated to briefly defining relevant notions, with the last finally presenting a proof of maximality of a symmetry non-uniqueness. We rely heavily on the book of \textcite{Bekka2019} on the subject of unitary representations.

For the remainder of the section, whenever we refer to a representation we mean a unitary representation. Similarly, equivalence of representations refers to unitary equivalence.

\subsubsection{Von Neumann algebras}
The correct generalization of the group algebra we encounter below is a \emph{von Neumann algebra}: a subalgebra $\mathcal{M}$ of the algebra $\mathcal{L}(V)$ of bounded linear operators on a Hilbert space $V$, which is closed under taking adjoints and such that  it is equal to its \emph{bicommutant}: $\mathcal{M} = \mathcal{M}''$. The \emph{commutant} of a subset $S\subseteq \mathcal{L}(V)$ is the set $S' = \{T \in \mathcal{L}(V): TL = LT ~\forall L \in S\}$. Note that $S_1\subseteq S_2$ implies $S_2'\subseteq S_1'$, so $S'=S'''$ is a von Neumann algebra for any $S$ closed under taking adjoints. In particular, for a unitary representation $\pi$ of a group $G$, we have that $\pi(G)'$ is a von Neumann algebra. Similarly, so is $\pi(G)''$, which is the \emph{group von Neumann algebra} generated by the group $\pi(G)$.\footnote{The group von Neumann algebra is related to the group algebra $K[G]$, in the sense that it arises as a completion of a convolution algebra of functions. In particular, one obtains $\pi(G)''$ from a sequence of completions of the functions $L^1(G,\lambda)$ (where $\lambda$ is a Haar measure), rather than functions with finite support as in the case of $K[G]$. Of course, if $G$ is finite these coincide, as do $\pi(G)''$ and $K[G]$.}

The celebrated von Neumann bicommutant theorem states that a subalgebra $\mathcal{M}$ of $\mathcal{L}(V)$ which is closed under taking adjoints and contains the identity is a von Neumann algebra if and only if it is closed in the strong (or equivalently, weak) operator topology \parencite[Appendix K]{Bekka2019}. As an example application, this result provides an alternative proof of Proposition \ref{prop-maximality-linear} for finite-dimensional unitary representations.

\subsubsection{Direct integrals}
We begin by defining the direct integral of Hilbert spaces, and operators thereon. While this material is standard, our presentation specifically follows that of \cite[Section 1.G]{Bekka2019}, to which we recommend the reader refer if possible.

Let $(X,\Sigma,\mu)$ be a $\sigma$-finite measure space. Consider a collection of Hilbert spaces $(V_x)_{x \in X}$ together with collections of vectors $(e_{x,n})_{n\in\N} \in V_x$ with span dense in $V_x$ for every $x \in X$, such that $x \mapsto \inner{e_{x,n}}{e_{x,m}}$ is measurable for all $n,m\in\N$. We call $v \in \prod_{x \in X} V_x$ a \emph{measurable vector field} if $x \mapsto \inner{v_x}{e_{x,n}}$ is measurable for every $n \in \N$. We consider these modulo equality on sets of positive measure. Defining the inner product $\inner{v}{w} = \int_X \inner{v_x}{w_x} \mu(dx)$, the set of square-integrable measurable vector fields is the \emph{direct integral}
$$V = \int_X^\oplus V_x \mu(dx).$$
Note that a direct sum is a special case, given by taking $\mu$ to be the counting measure.

Given two direct integrals of Hilbert spaces, $U$ and $V$, and operators $T_x : U_x \to V_x$ such that $x\mapsto \inner{T_xu_x}{v_x}$ is measurable and $x\mapsto ||T_x||$ is $\mu$-essentially bounded ($\mu$-almost equal to a bounded function), we define the operator
$$T = \int_X^\oplus T_x \mu(dx)$$
by $(Tu)_x = T_x u_x$. Operators of this form are called \emph{decomposable}. As a special case, we have for $\mu$-essentially bounded functions $\varphi:X\to\C$ the \emph{diagonalizable operators}
$$m(\varphi) = \int_X^\oplus \varphi(x)\id_{V_x}\mu(dx).$$
For example, if the direct integral is a direct sum of finite-dimensional vector spaces, diagonalizable operators are exactly the direct sums of scalar multiples of identity matrices.
Another case of particular interest is when the $T_x$ above are of the form $\pi_x(g)$ where $\pi_x:G\to \mathcal{U}(V_x)$ are representations, in which case one defines in the obvious way the unitary representation
$$\pi = \int_X^\oplus \pi_x \mu(dx).$$

Under certain measurability conditions \parencite[Section 1.I]{Bekka2019}, given a collection of von Neumann algebras $(\mathcal{M}_x)_{x\in X}$ one defines $\int_X^\oplus \mathcal{M}_x\mu(dx)$ to be the von Neumann algebra of all decomposable operators $T=\int_X^\oplus T_x \mu(dx)$ such that $T_x \in \mathcal{M}_x$ for $\mu$-almost every $x\in X$. Such a von Neumann algebra is by analogy called \emph{decomposable}. We note the measurability conditions are satisfied by a collection of group von Neumann algebras $(\pi_x(G)'')_{x\in X}$, as well as any collection $(\mathcal{M}_x')_{x\in X}$ for von Neumann algebras $(\mathcal{M}_x)_{x\in X}$.

\subsubsection{Type I representations}
We review the formal definition of certain relations between representations, which are used to define type I representations. We follow \cite[Section 6.A]{Bekka2019}.

We call representations $\rho$ and $\pi$ \emph{disjoint} if they have no non-trivial equivalent subrepresentations, writing $\rho\perp \pi$. We say $\rho$ is \emph{subordinate} to $\pi$, writing $\rho\lesssim\pi$, if for any non-trivial subrepresentation $\rho' \le \rho$, it is not the case that $\rho' \perp \pi$. If $\rho$ and $\pi$ are subordinate to each other, we call them \emph{quasi-equivalent}, writing $\rho \approx \pi$. We call a representation $\rho$ \emph{multiplicity-free} if $\rho = \rho_1\oplus\rho_2$ implies $\rho_1\perp\rho_2$. A representation which is quasi-equivalent to a multiplicity-free representation is called \emph{type I}. A \emph{group of type I} is a group whose (unitary) representations are all of type I.
\begin{remark}
As made clear below, quasi-equivalence of representations can be seen as a condition avoiding the degeneracies discussed in the case of semisimple representations: essentially, any irreducible representation occurring in one occurs in the other. We encounter this case above in our discussion following Proposition \ref{prop-maximality-linear}, where a multiplicity-free representation arises, in particular the direct sum of the irreducible representations $\rho_k$.
\end{remark}

\subsubsection{Symmetry non-uniqueness}
Suppose now that $\rho$ and $\sigma$ are two (unitary) representations of a second countable locally compact group $G$ of type I on Hilbert spaces $U$ and $V$ respectively. We are interested in which pairs transformations $t_U \in \mathcal{L}(U)$ and $t_V \in\mathcal{L}(V)$ are such that $t_V L = Lt_U$ for all $(\rho,\sigma)$-equivariant linear maps $L:U\to V$. The main idea is to pass to quasi-equivalent subrepresentations of $\rho$ and $\sigma$ which characterize the equivariant linear maps. We then apply a result giving a unique decomposition of type I representations in terms of direct integrals of irreducible representations. We obtain a result quite similar to the semisimple case, in terms of all ``nice'' operators on spaces (rather than all $\End(W_k)$ for irreducible representations $\rho_k$) and von Neumann algebras (rather than the group algebras $\C[\rho_k(G)]$).

The first step involves two applications of the following result.
\begin{lemma}
For any representations $\rho,\sigma$, there is a decomposition $\sigma = \sigma_{\lesssim} \oplus \sigma_{\perp}$ where $\sigma_{\lesssim}$ is subordinate to $\rho$ and $\sigma_\perp$ disjoint from $\rho$.
\end{lemma}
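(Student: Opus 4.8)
The plan is to build $\sigma_\perp$ as a \emph{maximal} subrepresentation of $\sigma$ disjoint from $\rho$ (via Zorn's lemma, consistent with the paper's standing assumption of choice) and then show that its orthogonal complement is forced to be subordinate to $\rho$. The one analytic ingredient I would establish first is the reformulation of disjointness in terms of intertwiners: for unitary representations $\pi,\rho$ on Hilbert spaces $H_\pi,H_\rho$, one has $\pi\perp\rho$ if and only if the only bounded $G$-equivariant operator $T:H_\pi\to H_\rho$ is $T=0$. (One direction: an equivalence of non-trivial subrepresentations, extended by zero, is a non-zero intertwiner. Conversely, given a non-zero intertwiner $T$, its polar decomposition $T=W|T|$ yields an equivariant partial isometry $W$ restricting to a unitary equivalence between the subrepresentation on $(\ker T)^\perp$ and the one on $\overline{\operatorname{ran} T}$, both non-trivial.) This reformulation immediately gives a closure property: if $(W_i)$ is a family of $\sigma$-invariant closed subspaces of $V$ with each $\sigma|_{W_i}\perp\rho$, then $\sigma$ restricted to the closed span $W=\overline{\sum_i W_i}$ is also disjoint from $\rho$, since any intertwiner $T:W\to H_\rho$ restricts on each $W_i$ to an intertwiner $\sigma|_{W_i}\to\rho$, hence vanishes there, and so vanishes on the dense subspace $\sum_i W_i$.

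Next I would run the Zorn argument. Consider the poset, ordered by inclusion, of $\sigma$-invariant closed subspaces $W\subseteq V$ with $\sigma|_W\perp\rho$. It contains $\{0\}$, and by the closure property above every chain has an upper bound, namely the closed span of its union. Let $V_\perp$ be a maximal element and set $\sigma_\perp=\sigma|_{V_\perp}$. Because $\sigma$ is unitary, the orthogonal complement $V_\lesssim=V_\perp^\perp$ is again $G$-invariant, so $\sigma=\sigma_\perp\oplus\sigma_\lesssim$ with $\sigma_\lesssim=\sigma|_{V_\lesssim}$, and $\sigma_\perp\perp\rho$ by construction.

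Finally I would verify $\sigma_\lesssim\lesssim\rho$. If this failed, then by definition there would be a non-trivial subrepresentation $\sigma'$ of $\sigma_\lesssim$, on an invariant subspace $W'\subseteq V_\lesssim$, with $\sigma'\perp\rho$. Since $W'\perp V_\perp$, the space $V_\perp\oplus W'$ is a $G$-invariant closed subspace strictly larger than $V_\perp$, and the closure property shows $\sigma|_{V_\perp\oplus W'}=\sigma_\perp\oplus\sigma'$ is disjoint from $\rho$, contradicting maximality of $V_\perp$. Hence no non-trivial subrepresentation of $\sigma_\lesssim$ is disjoint from $\rho$, i.e. $\sigma_\lesssim\lesssim\rho$, completing the decomposition.

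I expect the only genuine subtlety to be the intertwiner reformulation of disjointness, which carries the analytic content (the polar-decomposition step); once it is in place, both the chain upper bound and the maximality contradiction reduce to the same one-line observation that intertwiners restrict to intertwiners on invariant subspaces. Since this reformulation is standard, an alternative would be to cite Bekka--de la Harpe for it and present only the Zorn construction and the subordinateness check.
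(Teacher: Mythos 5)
Your proof is correct and follows essentially the same route as the paper's: a Zorn's lemma argument on subrepresentations of $\sigma$ disjoint from $\rho$, taking the orthogonal complement of a maximal one, and deriving subordinateness by the same maximality contradiction. The only difference is that you prove the needed closure property (direct sums/closed spans of representations disjoint from $\rho$ remain disjoint from $\rho$) from scratch via the intertwiner--polar-decomposition characterization, whereas the paper simply cites \cite[Corollary 1.A.7]{Bekka2019} for it --- exactly the alternative you note at the end.
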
 
\begin{proof}
Let $\mathcal{F}$ be the partially ordered set of subrepresentations of $\sigma$ which are disjoint from $\rho$, ordered by inclusion (i.e.\ being a subrepresentation). For any chain $C\subseteq \mathcal{F}$ the direct sum $\bigoplus_{\pi \in C} \pi$ contains all the representations in $C$ and is in $\mathcal{F}$, that is, is disjoint from $\rho$ \cite[Corollary 1.A.7]{Bekka2019}. By Zorn's lemma, there is a maximal subrepresentation $\sigma_\perp \leq \sigma$ disjoint from $\rho$. Let $V_\perp$ be the subspace of $V$ corresponding to $\sigma_\perp$, and $\sigma_\lesssim$ be the subrepresentation of $\sigma$ defined by restriction to the orthogonal complement of $V_\perp$. It remains to show $\sigma_\lesssim$ is subordinate to $\rho$. Suppose not, so there exists a non-trivial subrepresentation $\sigma_\lesssim'$ of $\sigma_\lesssim$ such that $\sigma_\lesssim' \perp \rho$. Then $\sigma_\lesssim' \oplus \sigma_\perp$ is a subrepresentation of $\sigma$ which is disjoint from $\rho$ (again by \cite[Corollary 1.A.7]{Bekka2019}). But $\sigma_\lesssim'$ is non-trivial, so $\sigma_\lesssim' \oplus \sigma_\perp$ is strictly larger than $\sigma_\perp$, which contradicts the maximality of $\sigma_\perp$.
\end{proof}

We thus write $\sigma = \sigma_{\lesssim} \oplus \sigma_{\perp}$. Next, decompose $\rho = \rho_{\lesssim} \oplus \rho_\perp$ such that $\rho_{\lesssim}$ is subordinate to $\sigma_\lesssim$, and $\rho_\perp$ disjoint from $\sigma_\lesssim$. Note that by construction $\rho_\lesssim$ and $\sigma_\lesssim$ are quasi-equivalent. Having passed to quasi-equivalent subrepresentations, it remains to apply some unique decomposition results. Note that both $\rho_\lesssim$ and $\sigma_\lesssim$ are type I, being subrepresentations of type I representations \parencite[Remark 6.A.13 (7)]{Bekka2019}. By \cite[Theorem 6.B.17]{Bekka2019} there exist decompositions $U = \bigoplus_{i \in I} U_i$ and $V = \bigoplus_{j\in J} V_j$ into $G$-invariant closed subspaces such that the subrepresentations of $\rho_\lesssim$ on distinct $U_i$ are disjoint, and similarly for subrepresentations of $\sigma_\lesssim$ on distinct $V_j$, and each of these subrepresentations is equivalent to a ``multiple'' (in terms of direct sums) of a multiplicity-free representation. Explicitly, $\rho_\lesssim = \bigoplus_{i\in I} \rho_i^{\oplus i}$ and $\sigma_\lesssim = \bigoplus_{j\in J} \sigma_j^{\oplus j}$, where $\rho_i$ and $\sigma_j$ are the multiplicity-free representations. Furthermore, these decompositions are unique up to permutations of the index sets $I$ and $J$. By \cite[Proposition 6.A.4]{Bekka2019}, we thus have $\rho_\lesssim \approx \bigoplus_{i \in I} \rho_i$  where the component representations $\rho_i$ are multiplicity-free and disjoint, and similarly $\sigma_\lesssim\approx \bigoplus_{j \in J} \sigma_j$. A direct sum of disjoint multiplicity-free representations is multiplicity-free  \parencite[Corollary 6.B.11]{Bekka2019}, and quasi-equivalent multiplicity-free representations are equivalent \parencite[Corollary 6.A.14]{Bekka2019}. Thus $\bigoplus_{i \in I} \rho_i$ and $ \bigoplus_{j \in J} \sigma_j$ are equivalent. By the uniqueness of the decompositions of $\rho_\lesssim$ and $\sigma_\lesssim$, we have that $I=J$ and there exists a bijection $s:I\to J$ such that $\rho_i$ is equivalent to $\sigma_{s\inv(j)}$. We denote each subrepresentation $\rho_i$ by $\pi_i$ to emphasize it is common to $\rho$ and $\sigma$.

By \cite[Proposition 6.D.5]{Bekka2019}, for each $i \in $I there is an essentially unique measure $\mu_i$ on the space $\widehat{G}$ of equivalence classes of irreducible (unitary) representations of $G$, such that
$$\pi_i = \int_{\widehat{G}}^\oplus \tau \mu_i(d\tau).$$ 

With the decomposition above, we characterize the equivariant linear maps $L:U\to V$. First note that these maps $L$ correspond to the equivariant linear maps $L:U_\lesssim \to V_\lesssim$, where the two Hilbert spaces are those on which the corresponding subrepresentations are defined: as in the semisimple case, equivariant maps $L$ must send $U_\perp$ to zero and have zero as their image projected onto $V_\perp$. The equivariant maps between $L:U_\lesssim \to V_\lesssim$, by the disjointness of the direct sum decompositions, themselves decompose as $L = \bigoplus_{i \in I} L_i$ where each $L_i$ is an equivariant map $U_i^{\oplus i} \to U_{s(i)}^{\oplus s(i)}$. These are $i\times s(i)$ ``matrices'' with entries which are $\pi_i$-equivariant linear maps $U_i \to U_i$, that is, maps in $\pi_i(G)'$. It turns out these maps are the diagonalizable operators \parencite[proof of Theorem 6.B.18]{Bekka2019}.\footnote{Explicitly, $\pi_i(G)'$ is abelian since $\pi_i$ is multiplicity-free, and by \cite[Proposition 1.G.7]{Bekka2019} the maximal abelian subalgebra of $\pi_i(G)'$, with which $\pi_i(G)'$ coincides, consists of the diagonalizable operators.}

Finally, the pairs of $t_U \in \mathcal{L}(U)$ and $t_V \in\mathcal{L}(V)$ such that $t_V L = Lt_U$ for all $L$ as above, by the same argument as in the semisimple case, are given up to the degeneracy arising from the disjoint parts $\rho_\perp$ and $\sigma_\perp$ by maps acting on each $U_i$ which commute with any diagonalizable $L_i : U_i \to U_i$. By \cite[Theorem 1.H.1]{Bekka2019}, these are the decomposable operators $U_i \to U_i$. Furthermore, by \cite[Theorem 1.I.7]{Bekka2019} and \cite[Theorem 1.I.6]{Bekka2019} we have that since $\pi_i(G)'$ consists of the diagonalizable operators,
\begin{align*}
\pi_i(G)' = \int_{\widehat{G}}^\oplus \tau(G)' \mu_i(d\tau),
\end{align*}
and
\begin{align*}
\pi_i(G)'' = \int_{\widehat{G}}^\oplus \tau(G)'' \mu_i(d\tau).
\end{align*}
In this sense, the symmetry non-uniqueness is given by taking the von Neumann algebra. In particular, one takes the von Neumann algebra of the multiplicity-free subrepresentation common to $\rho$ and $\sigma$ if these are quasi-equivalent.

\section{Future work}
\label{sec-future}
Our present work leaves open several avenues. Most glaringly, we do not give a constructive answer to how one might learn symmetries. Rather, we hope this work provides a valuable framework for formalizing the question and understanding why certain approaches fail.

It is worth noting the role that priors or ``implicit biases'' may play in learning symmetry. The theory presented here describes when a symmetry may not be identifiable, but by imposing a certain bias the correct symmetry may still be recovered. Despite not exhibiting symmetry non-uniqueness, the recent works of \textcite{Zhou2021, Dehmamy2021} may succeed in part due to non-trivial biases imposed by restricted parameterizations of learnable symmetries.

Another area we believe is likely important to understand is that of approximate symmetry. There are many reasons for this, some more obvious than others. One common justification is that real-world data is noisy, and thus even if an underlying system displays exact symmetry, approximate symmetry may better serve attempts to model data; a related response to stochasticity is to study ``probabilistic symmetry,'' as done by \textcite{BloemReddy2020}. There is of course also the standard observation that while small transformations may constitute a symmetry, large ones may not: ``6'' and ``9'' are not the same. In the spirit of the current work, we add another possible motivation: continuity.  This is illustrated by a cute example demonstrating a difficultly in simultaneously learning an invariant function and its symmetry using a strictly invariant ansatz.
\begin{example}
The group of rotation matrices $R_t$ for $t \in \R$ is generated by the matrix for a rotation by $\pi/2$, in the sense that taking matrix exponentials we have
\begin{align*}
&R_t = \exp(tA) = \begin{pmatrix} \cos(t) & - \sin(t) \\ \sin(t) & \cos(t) \end{pmatrix}, &&A=\begin{pmatrix} 0 & -1 \\ 1 & 0 \end{pmatrix}.
\end{align*}
Suppose we are learning $A$ by approximation, with learnable $\epsilon\in\R$ parameterizing
$$\widetilde{A}(\epsilon)=\begin{pmatrix} \epsilon & -1 \\ 1 & \epsilon \end{pmatrix}.$$
Then the generated matrix group consists of matrices of the form
$$\widetilde{R_t}=\exp(t\widetilde{A}(\epsilon)) = e^{\epsilon t} \begin{pmatrix} \cos(t) & - \sin(t) \\ \sin(t) & \cos(t) \end{pmatrix}.$$
As a function of $\epsilon$, the orbit of the first basis vector $(1,0)$ is
$$\mathcal{O}(\epsilon)=\{e^{\epsilon t}(\cos(t),\sin(t)):t\in\R\}.$$
When $\epsilon = 0$ this is the unit circle. For $\epsilon>0$, however, this is a spiral. In particular $\mathcal{O}(\epsilon)$ converges to all of $\R^2$ as $\epsilon\to 0$: for any $x\in\R^2$ and $\delta > 0$, there is an $\epsilon\in \R$ small enough such that $||x-y|| < \delta$ for some $y\in\mathcal{O}(\epsilon)$. An ansatz of strictly $\widetilde{R_t}$-invariant functions will thus be forced towards constant functions as $\epsilon\to 0$, and will not be able to recover general rotation-invariant functions.
\end{example}
Similar examples can be concocted for discrete groups. One may hope that this problem is particular to invariance, rather than general equivariance, but continuity issues can be shown to arise in the more general context as well, in particular by formulating equivariance as invariance in a tensor product space (as in Example \ref{ex-invariant-polynomials}). We thus see understanding approximate symmetry as a potentially important part of making symmetries learnable.

\section*{Acknowledgements}
\addcontentsline{toc}{section}{Acknowledgements}
We thank Adam Quinn Jaffe and Brad Ross for helpful discussions. We also thank Marc Finzi, for pointers on running experiments using the framework of \parencite{EMLP}.

\appendix
\section{Proof of Theorem \ref{thm-gcnn-uniqueness}}
\label{appendix-conv-uniqueness}
Recall that $X,Y$ are Hausdorff Borel spaces with strictly positive measures $\mu,\nu$, and $G$ is a compact Hausdorff group with Haar measure $\lambda$. Assuming that any $G$-equivariant integral $L:L^1(X,\mu)\to L^\infty(Y,\nu)$ is $T$-equivariant and $\mu$ is both $G$-invariant and $T$-invariant, we prove that for any non-empty open $A\subseteq G$ and $t \in T$ we have $c_t \overline{A} = t(\overline{A})$ where $c_t = t(\id)$. The proofs of $\overline{A} = t(\overline{A})$ for non-empty open $A\subseteq X/G$ and $A\subseteq Y/G$ follow the same formal structure, with $c_t$ replaced by an identity map (recalling that the quotients of Hausdorff spaces by compact groups are Hausdorff).

We first show that for any measurable $A \subseteq G$ and any $t \in T$,
$$\lambda\{c_t\inv g \in A, ~t\inv(g) \not\in A\} = \lambda\{c_t\inv g \not\in A, ~t\inv(g) \in A\} = 0.$$
The first set is just $c_t A \setminus t(A)$ and the second $t(A)\setminus c_t A$, both of which are subsets of $c_t A \triangle t(A)$, where $\triangle$ denotes the symmetric difference operator. It thus suffices to show $\lambda(c_t A \triangle t(A))= 0$. Let $\ell(h\inv g, o, p) =\ind{h\inv g \in A}$. The integral operator $L$ given by
$$(Lf)(h,p) = \int\int \ell(h\inv g, o, p) f(g,o) \lambda(dg) \mu_{X/G}(do) \le \int f(x) \mu(dx)$$
is by construction a $G$-equivariant operator from $L^1(X,\mu)$ to $L^\infty(Y,\nu)$. For any $t \in T$, by assumption $L$ is equivariant under $t$, which by Corollary \ref{cor-inv-kernel-function} implies that 
$$\ind{t(h)\inv g \in A} = \ind{h\inv t\inv(g) \in A}$$
for $\mu$-a.e. $(g,o)\in X$. Fixing $h = \id$, with some rearranging we have
$$\ind{g \in c_tA} = \ind{ g \in t(A)}$$
for $\mu$-a.e. $(g,o)\in X$, or equivalently $\lambda(c_t A \triangle t(A))= 0$.

Suppose now for the sake of contradiction that $A\subseteq G$ is a non-trivial open set, but $c_t \overline{A} \ne t(\overline{A})$. That is, there exists $g \in c_t \overline{A}\setminus t(\overline{A})$ or $g \in t(\overline{A})\setminus c_t\overline{A}$. Without loss of generality, we consider the former case. That is, $c_t\inv g \in \overline{A}$ but $t\inv(g) \not \in \overline{A}$. Note that any neighborhood of $c_t\inv g$ intersects $A$ non-trivially. Next, observe that $c_t\inv t(\overline{A}^c)$ contains $c_t\inv g$, and furthermore is open, since $G$ and $T$ act by homeomorphisms. Thus, $c_t\inv t(\overline{A}^c) \cap A$ is a non-empty open set, and therefore has strictly positive measure under $\lambda$ (which is strictly positive, being a factor of $\mu$).\footnote{Haar measures are in fact always strictly positive, but the above reasoning generalizes to $\mu_{X/G}$.} On the other hand,
$$c_t \inv t(\overline{A}^c) = \{c_t\inv g: t\inv(g)\not\in\overline{A}\} \subseteq \{c_t\inv g : t\inv(g) \not\in A\}$$
and thus $c_t\inv t(\overline{A}^c)\cap A$ is a subset of $\{c_t\inv g\in A,~t\inv(g)\not\in A\}$ and must have zero measure by the previous paragraph. We therefore arrive at a contradiction, so $c_t \overline{A} = t(\overline{A})$. \qed

\newpage
\emergencystretch=1em
\printbibliography[heading=bibintoc]
\end{document}